\newtcbox{\mybox}[1][]{%
    nobeforeafter, math upper, tcbox raise base,
    enhanced, colframe=black!100,
    colback=blue!0, boxrule=1.5pt,
    #1}
\definecolor{mydarkblue}{rgb}{0,0.08,0.45}
\renewcommand{\Rfun}{r}
\renewcommand{\MDP}{(\sset, \aset, \Tfun, \Rfun, \initS, \D)}
\newcommand\GP{\mathcal{GP}}
\newcommand\rbr[1]{\left( #1 \right)}
\newcommand\sbr[1]{\left[ #1 \right]}
\newcommand{\eup}{\mathrm e}
\newcommand{\R}{\mathbb{R}}
\newcommand\inv[1]{{#1}^{-1}}  %
\newcommand{\argmin}{\mathop{\mathrm{argmin}}}
\newcommand{\argmax}{\mathop{\mathrm{argmax}}}
\newcommand{\Var}{{\mathop{\mathrm{Var}}}}  %
\newcommand{\Cov}{{\mathop{\mathrm{Cov}}}}  %
\newcommand*\colvec[1]{
        \global\colveccount#1
        \begin{pmatrix}
        \colvecnext
}
\def\colvecnext#1{
        #1
        \global\advance\colveccount-1
        \ifnum\colveccount>0
                \\
                \expandafter\colvecnext
        \else
                \end{pmatrix}
        \fi
}
\newcommand{\vtheta}{\mbox{\boldmath $\theta$}}
\newcommand{\vmu}{\mbox{\boldmath $\mu$}}
\newcommand{\vphi}{\mbox{\boldmath $\phi$}}
\newcommand{\vomega}{\mbox{\boldmath $\omega$}}
\newcommand{\va}{\mathbf a}
\newcommand{\vc}{\mathbf c}
\newcommand{\vr}{\mathbf r}
\newcommand{\vw}{\mathbf w}
\newcommand{\vx}{\mathbf x}
\newcommand{\vz}{\mathbf z}
\newcommand{\cD}{\mathcal{D}}
\newcommand{\cL}{\mathcal{L}}
\newcommand{\cN}{\mathcal{N}}
\newcommand{\cX}{\mathcal{X}}
\newcommand{\cZ}{\mathcal{Z}}
\newcommand{\bbE}{\mathbb{E}}
\newcommand{\design}{\lambda}
\newcommand{\simplex}{\Sigma}
\def\calzhat{\widehat{\mathcal{Z}}}
\renewcommand{\simplex}{
  \mathchoice
    {\includegraphics[height=1.4ex, trim= 0pt 30pt 0pt 0pt]{simplex.pdf}} %
    {\includegraphics[height=1.4ex, trim= 0pt 25pt 0pt 0pt]{simplex.pdf}} %
    {\includegraphics[height=1.1ex, trim=0pt 30pt 0pt 0pt]{simplex.pdf}} %
    {\includegraphics[height=.8ex, trim=0pt 30pt 0pt 0pt]{simplex.pdf}} %
}
\newcommand{\CandPol}{\Pi_c}
\newcommand{\CandStates}{\mathcal{Q}_c}
\newcommand{\Dataset}{\mathcal{D}}
\newcommand{\ExpRet}[1]{G({#1})}
\newcommand{\StateVisit}[1]{\mathbf{f}^{#1}}
\newcommand{\StateVisitDirection}[2]{\mathbf{v}_{#1,#2}}
\renewcommand{\simplex}{\Delta}
\newcommand{\InformationGain}[2]{I ( #1, #2 )}
\newcommand{\ConditionalInformationGain}[3]{I ( #1; #2 | #3 )}
\newcommand{\Entropy}[1]{H ( #1 )}
\newcommand{\ConditionalEntropy}[2]{H ( #1 | #2 )}
\newcommand{\KL}[2]{D_{\text{KL}} ( #1 \| #2 )}
\newcommand{\VectorProd}[2]{\langle #1, #2 \rangle}
\newcommand{\ExpRetBelief}[1]{\hat{G}({#1})}
\newcommand{\RfunBelief}{\hat{r}}
\newcommand{\MeanRfun}{\bar{\Rfun}}
\newcommand{\MeanOptPol}{\bar{\p}^*}
\newcommand{\Query}{q}
\newcommand{\Evidence}{y}
\newcommand{\EvidenceBelief}{\hat{\Evidence}}
\newcommand{\linfac}{c}
\newcommand{\linfactors}{C}
\newcommand{\linstates}{S}
\newcommand{\Segment}{\sigma}
\newcommand{\legendEI}{\includegraphics[height=1.4ex]{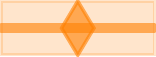}}
\newcommand{\legendIGGP}{\includegraphics[height=1.4ex]{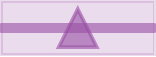}}
\newcommand{\legendUniform}{\includegraphics[height=1.4ex]{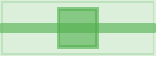}}
\newcommand{\legendIDRL}{\includegraphics[height=1.4ex]{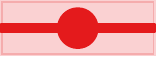}}
\newcommand{\legendMAXREG}{\includegraphics[height=1.4ex]{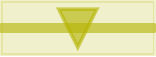}}
\newcommand{\legendEPD}{\includegraphics[height=1.4ex]{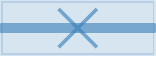}}
\newcommand{\legendIDRLABLATION}{\includegraphics[height=1.4ex]{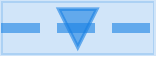}}
\newcommand{\legendEXPERT}{\includegraphics[height=1.4ex]{figures/legend_EPD.pdf}}
\newcommand{\result}[2]{$#1 \pm #2$}
\newcommand{\hresult}[2]{{\boldmath $#1 \pm #2$ }}
\newcommand{\AlgoNameLong}{Information Directed Reward Learning\xspace}
\newcommand{\AlgoNameShort}{\texttt{IDRL}\xspace}
\newcommand{\specialcell}[2][c]{%
  \begin{tabular}[#1]{@{}c@{}}#2\end{tabular}}
\tikzstyle{every picture}+=[remember picture]
\pgfplotsset{compat=1.16}
\theoremstyle{definition}
\crefname{observation}{observation}{observations}
\Crefname{observation}{Observation}{Observations}
\declaretheorem[name=Definition,numberwithin=section]{definition}
\crefname{definition}{definition}{definitions}
\Crefname{definition}{Definition}{Definitions}
\crefname{proposition}{proposition}{propositions}
\Crefname{proposition}{Proposition}{Propositions}
\crefname{equation}{equation}{equations}
\Crefname{equation}{Equation}{Equations}
\newcommand{\paragraphsmall}[1]{\vspace{-1em}\paragraph{#1}}
\newcommand{\cmark}{{\color{OliveGreen}\ding{51}}}
\newcommand{\xmark}{{\color{BrickRed}\ding{55}}}
\newcommand{\github}{\url{https://github.com/david-lindner/idrl}}
\title{Information Directed Reward Learning \\ for Reinforcement Learning}
\author{%
  David Lindner \\
  Department of Computer Science \\
  ETH Zurich \\
  \texttt{david.lindner@inf.ethz.ch} \\
  \And
  Matteo Turchetta \\
  Department of Computer Science \\
  ETH Zurich \\
  \texttt{matteo.turchetta@inf.ethz.ch} \\
  \And
  Sebastian Tschiatschek \\
  Department of Computer Science \\
  University of Vienna \\
  \texttt{sebastian.tschiatschek@univie.ac.at} \\
    \And
  Kamil Ciosek\thanks{Work done while at Microsoft Research Cambridge.} \\
  Spotify \\
  \texttt{kamilc@spotify.com} \\
    \And
  Andreas Krause \\
  Department of Computer Science \\
  ETH Zurich \\
  \texttt{krausea@ethz.ch} \\
}
\begin{document}

\maketitle

\begin{abstract}
For many reinforcement learning (RL) applications, specifying a reward is difficult.
This paper considers an RL setting where the agent obtains information about the reward only by querying an expert that can, for example,
evaluate individual states or provide binary preferences over trajectories.
From such expensive feedback, we aim to learn a model of the reward that allows standard RL algorithms to achieve high expected returns \emph{with as few expert queries as possible}.
To this end, we propose \emph{\AlgoNameLong} (\AlgoNameShort), which uses a Bayesian model of the reward and selects queries that maximize the information gain about the difference in return between plausibly optimal policies.
In contrast to prior active reward learning methods designed for specific types of queries, \AlgoNameShort naturally accommodates different query types. Moreover, it achieves similar or better performance with significantly fewer queries by shifting the focus from reducing the reward approximation error to improving the policy induced by the reward model. We support our findings with extensive evaluations  in multiple environments and with different query types.
\end{abstract}

\section{Introduction}\label{sec:introduction}

\emph{Reinforcement learning} \citep[RL;][]{sutton2018reinforcement} casts the problem of learning to perform complex tasks by interacting with an environment as an optimization problem where the learning agent aims to maximize its expected cumulative reward.
Despite the remarkable successes of RL \citep[e.g.,][]{mnihHumanlevelControlDeep2015,silverMasteringGameGo2016}, specifying reward functions that capture complex tasks is still an open problem. A promising approach is to \emph{learn} a reward function from human feedback \citep[e.g.,][]{christiano2017deep}. However, since human feedback is expensive, \emph{active reward learning} aims to minimize the number of queries. Prior work often focuses on approximating the reward function uniformly well. However, this may not be aligned with the original goal of RL: \emph{finding an optimal policy}, as \Cref{fig:gridworld} shows. Moreover, prior work is often tailored to specific types of queries, such as comparisons of two trajectories \citep[e.g.,][]{dorsa2017active} or numerical evaluations of trajectories \citep[e.g.,][]{daniel2015active}, limiting its applicability.

\begin{figure}[t]\centering
\newcommand{\iconCherry}{\ensuremath{%
  \mathchoice{\includegraphics[height=2ex]{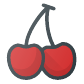}}
    {\includegraphics[height=2ex]{figures/gridworld/icons/Cherry.pdf}}
    {\includegraphics[height=1.5ex]{figures/gridworld/icons/Cherry.pdf}}
    {\includegraphics[height=1ex]{figures/gridworld/icons/Cherry.pdf}}
}}
\newcommand{\iconApple}{\ensuremath{%
  \mathchoice{\includegraphics[height=2ex]{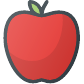}}
    {\includegraphics[height=2ex]{figures/gridworld/icons/Apple.pdf}}
    {\includegraphics[height=1.5ex]{figures/gridworld/icons/Apple.pdf}}
    {\includegraphics[height=1ex]{figures/gridworld/icons/Apple.pdf}}
}}
\newcommand{\iconCorn}{\ensuremath{%
  \mathchoice{\includegraphics[height=2ex]{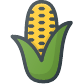}}
    {\includegraphics[height=2ex]{figures/gridworld/icons/Corn.pdf}}
    {\includegraphics[height=1.5ex]{figures/gridworld/icons/Corn.pdf}}
    {\includegraphics[height=1ex]{figures/gridworld/icons/Corn.pdf}}
}}
\newcommand{\iconPear}{\ensuremath{%
  \mathchoice{\includegraphics[height=2ex]{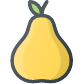}}
    {\includegraphics[height=2ex]{figures/gridworld/icons/Pear.pdf}}
    {\includegraphics[height=1.5ex]{figures/gridworld/icons/Pear.pdf}}
    {\includegraphics[height=1ex]{figures/gridworld/icons/Pear.pdf}}
}}

\begin{minipage}[t]{0.32\linewidth}\vspace{0.4cm}
    \includegraphics[width=\linewidth]{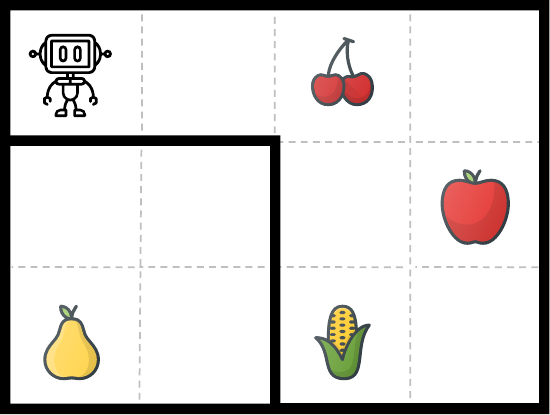}\\
    $T = 4$
\end{minipage}\hfill
\begin{minipage}[t]{0.67\linewidth}\vspace{0pt}
    \begin{minipage}[t]{0.4\linewidth}\vspace{0pt}\hspace{0pt}
        \begin{tabular}{>{\centering\arraybackslash} m{1.9cm} >{\centering\arraybackslash} m{1.9cm}}
            $\p_1$ & $\p_2$ \\
             \includegraphics[width=\linewidth]{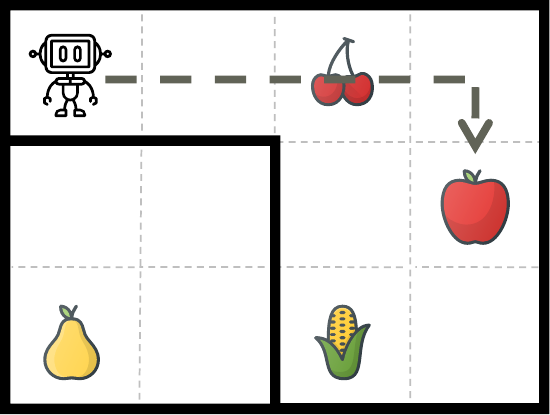} & \hspace*{-1em}\includegraphics[width=\linewidth]{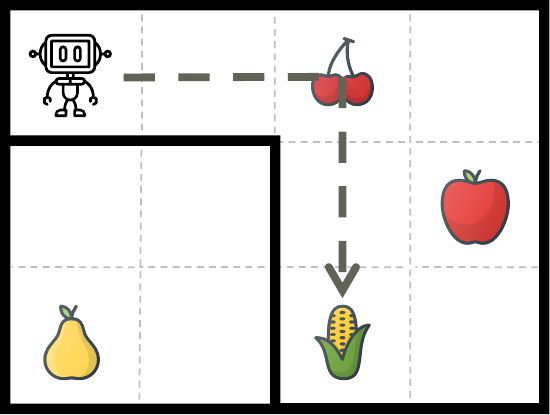}
        \end{tabular}
    \end{minipage}\hspace{0.4cm}
    \begin{minipage}[t]{0.59\linewidth}
    \begin{align*}
        &\ExpRetBelief{\p_1} = \RfunBelief({\protect\iconCherry}) + \RfunBelief({\protect\iconApple}) \\
        &\ExpRetBelief{\p_2} = \RfunBelief({\protect\iconCherry}) + \RfunBelief({\protect\iconCorn}) \\
        &\ExpRetBelief{\p_1} - \ExpRetBelief{\p_2} = \RfunBelief({\protect\iconApple}) - \RfunBelief({\protect\iconCorn})
    \end{align*}
    \end{minipage}
    
    \hspace*{0.08cm}
    \begin{tcolorbox}[width=0.95\linewidth, height=1.65cm, nobeforeafter,
                     colframe=black!100, colback=blue!0, boxrule=1.5pt]
    \vspace*{-0.2cm}
    \begin{align*}
        \argmax_{\Query \in \{{\protect\iconCherry},  {\protect\iconApple}, {\protect\iconCorn}, {\protect\iconPear}\}} \underbrace{\InformationGain{\ExpRetBelief{\p_1} - \ExpRetBelief{\p_2}}{(\Query, \EvidenceBelief)}}_{\text{\AlgoNameShort objective}}  = \{{\protect\iconApple}, {\protect\iconCorn}\}%
    \end{align*}
    \end{tcolorbox}
\end{minipage}
\caption{\small The robot wants to collect food for a human. It can only move $4$ timesteps in the gridworld, cannot pass through the black walls, and collecting more food is always better. The robot does not know the human's preferences, but it can ask for food ratings. Common active learning methods aim to learn the reward uniformly well, and would query all items similarly often. In contrast, \AlgoNameShort considers only the two plausibly optimal policies $\p_1$ and $\p_2$. Since both policies collect the cherry, and do not collect the pear, the robot only needs to learn about the apple and the corn. \AlgoNameShort can solve the task with 2 queries instead of 4.}
\label{fig:gridworld}
\end{figure}

\looseness -1
\paragraphsmall{Contributions.}
We propose \emph{\AlgoNameLong} (\AlgoNameShort), a general \emph{active reward learning} approach for learning a model of the reward function from expensive feedback with the goal of \emph{finding a good policy} rather than uniformly reducing the model's error. \AlgoNameShort can use arbitrary Bayesian reward models and arbitrary types of queries (\Cref{sec:idrl_general}), making it more general than existing methods. We describe an exact and efficient implementation of \AlgoNameShort using Gaussian process (GP) reward models (\Cref{sec:idrl_gp}) and different types of queries, and an approximation of \AlgoNameShort that uses a deep neural network reward model and a state-of-the-art policy gradient algorithm to learn from comparison queries (\Cref{sec:idrl_deep_rl}). We evaluate \AlgoNameShort extensively in simulated environments (\Cref{sec:experiments}), including a driving task and high-dimensional continuous control tasks in the MuJoCo simulator, and show that both implementations significantly outperform prior methods.

\section{Related work}

\paragraph{Reward Learning for RL.}
Several works aim to directly learn policies rather than reward functions from expert feedback in the form of numerical evaluations \citep{knox2009interactively,macglashan2017interactive} or comparisons \citep{regan2009regret,furnkranz2012preference}, and some work also explores active query selection \citep{akrour2012april,wilson2012bayesian}. However, learning policies directly from feedback has several downsides: it is difficult to combine different types of feedback, and policies tend to generalize poorly between environments.
\citet{ng2000algorithms} argue that reward functions are a more robust representation of desired behavior than policies. \emph{Inverse reinforcement learning} (IRL) aims to learn a reward model from expert demonstrations \citep{abbeel2004apprenticeship}. Reward models can also be learned from comparisons of two or more different behaviors \citep{wirth2017survey}, or other kinds of feedback \citep{jeon2020reward}. While reward models can successfully learn hard-to-specify control and game-playing tasks \citep{christiano2017deep,ibarz2018reward}, most work uses simple heuristics to select queries to make. In contrast, \emph{active reward learning} aims to select the most informative queries in a principled way.

\paragraphsmall{Active reward learning.}
For linear reward functions, \citet{dorsa2017active} ask the expert to compare trajectories synthesized to maximize the volume removed from a hypothesis space. \citet{biyik2019asking} argue that maximizing information gain leads to better sample efficiency and queries that are easier to answer than volume removal. \citet{biyik2020active} generalize maximizing information gain to non-linear reward functions using a GP model. We also use an information gain objective to select queries; however, our approach focuses on finding an optimal policy instead of uniformly reducing the error of the reward model. With a similar motivation, \citet{wilde2020active} aim to capture how informative a query is for distinguishing policies. However, their method is limited to comparisons between potentially optimal policies. \citet{daniel2015active} also introduce an acquisition function to measure how informative a query is for learning a good policy. However, their setting is restricted to observing the cumulative reward of a trajectory, and their acquisition function is computationally expensive. \Cref{tab:prior_work} gives an overview of how our method compares to this prior work.

\paragraphsmall{Bayesian optimization.}
\emph{Bayesian optimization} (BO) aims to maximize an expensive-to-evaluate function by learning a Bayesian model of the function and selecting informative queries \citep{mockus1978application}. We face a related but significantly harder problem: we aim to find an optimal policy in RL but only \emph{indirectly} obtain information about the value of policies.
Our problem has striking connections to variants of the \emph{multi-armed bandit problem} \citep{bubeck2012regret}, in particular \emph{partial monitoring problems} \citep{rustichini1999minimizing} and \emph{transductive linear bandits} \citep{fiez2019sequential}. We explore this connection in detail in \Cref{app:connection_TLB}.

\begin{table}[t]
\newcommand{\extension}{\xmark\tnote{1}}
\centering
\begin{adjustbox}{max width=\linewidth}\begin{threeparttable}
    \begin{tabular}{ccccccc}
       \toprule
        & \specialcell{Non-linear \\ Rewards} & \specialcell{Single-state \\ Queries} & \specialcell{Trajectory \\ Queries} & \specialcell{Numerical \\ Queries} & \specialcell{Comparison \\ Queries} & \specialcell{Considers \\ Env. Dynamics} \\
        \midrule
        \citet{dorsa2017active} & \xmark & \xmark & \cmark & \xmark & \cmark & \xmark \\
        \citet{biyik2019asking} & \xmark & \extension & \cmark & \extension & \cmark & \xmark \\
        \citet{biyik2020active} & \cmark & \extension & \cmark & \extension & \cmark & \xmark \\
        \citet{daniel2015active} & \cmark & \extension & \cmark & \cmark & \extension & \cmark  \\
        \citet{wilde2020active} & \xmark & \xmark & \cmark & \xmark & \cmark & \cmark \\
        \AlgoNameShort (ours) & \cmark & \cmark & \cmark & \cmark & \cmark & \cmark \\
       \bottomrule
    \end{tabular}
    \begin{tablenotes}\footnotesize
        \item[1] The original authors do not consider this setting, but we provide an extension to their method in \Cref{sec:experiments} and \Cref{app:baselines_details}.
    \end{tablenotes}\vspace{1em}
\end{threeparttable}\end{adjustbox}
\caption{\small In contrast to most prior work on active reward learning for RL, \AlgoNameShort can handle non-linear reward functions and different query types, in particular numerical evaluations and comparisons of individual states and (partial) trajectories. Further, \AlgoNameShort takes the environment dynamics into account to achieve better sample efficiency (cf. \Cref{fig:gridworld}).}\label{tab:prior_work}
\end{table}

\section{Background and problem setting}\label{sec:problem}

\paragraph{Markov decision process.}
Markov decision processes \citep[MDPs;][]{puterman2014markov} model sequential decision-making problems in dynamical systems. An MDP $\MDP$ consists of a state space $\sset$, an action space $\aset$, a transition function $\Tfun$, a reward function $\Rfun$, an initial state distribution $\initS$, and a discount factor $\Ddef$. In an MDP, the agent starts in state $s_0 \sim \initS(s)$ and, when taking action $a_t$, transitions from state $s_t$ to state $s_{t+1}$ with probability $\Tfun(s_{t+1} | s_t, a_t)$.
The agent affects the environment through actions determined by a policy $\p(a_t | s_t)$, indicating the probability of taking action $a_t$ in state $s_t$. The agent's goal is to find a policy $\p$ that maximizes the \textit{expected discounted return} $\ExpRet{\p} = \bbE_{\Tfun, \p, \initS} \sbr{\sum_{t=0}^{\infty} \D^t \Rfun_t}$, where $\Rfun_t$ is the reward obtained at time $t$.

\paragraphsmall{Information gain.} 
Intuitively, the information gain between two random variables measures the amount of information that can be obtained about one of them by observing the other. Formally, for two random variables $X$ and $Y$ with marginal distributions $p_X$, $p_Y$ and joint distribution $p_{(X,Y)}$, the \emph{information gain} (or mutual information) is $\InformationGain{X}{Y} = \KL{p_{(X,Y)}}{p_X \cdot p_Y}$, where $\KL{\cdot}{\cdot}$ is the KL-divergence. Given a third random variable $Z$, conditional information gain is defined as $\ConditionalInformationGain{X}{Y}{Z=z} = \KL{p_{(X,Y)|Z=z}}{p_{X|Z=z} \cdot p_{Y|Z=z}}$.

\paragraphsmall{Problem setting.}
We focus on MDPs where the reward function is not readily available. Instead, the agent can query an expert for information about the reward. In iteration $i$, the agent makes a query $\Query_i$ to the expert, and receives a response $\Evidence_i$. For example, $\Query_i$ could ask the expert to compare two trajectories or judge a single trajectory, and $\Evidence_i$ could indicate which of the two trajectories is better or provide the return of a single trajectory. We assume that the agent can interact with the environment cheaply, but queries to the expert are \emph{expensive}, and hence the agent has to find a policy $\p$ that \emph{maximizes the expected return} $\ExpRet{\p}$  using \emph{as few queries as possible}.

\paragraph{Our reward learning approach.} We approach this problem by learning a model of the reward function, i.e., a model that predicts the reward of a given state,%
\footnote{Our approach is also applicable to reward functions that depend on state-action pairs or transitions. We focus on state-dependent reward functions for simplicity of exposition.}
and computing a policy that maximizes the return induced by the model.
Importantly, we want to learn a reward model such that the induced optimal policy achieves a high return under the true reward function.
Note that any RL algorithm can be used to find the policy. Hence, the problem reduces to selecting a model for the reward function and deciding which queries to make. Our key insight is that queries that help most to find a good policy might differ from those that uniformly reduce the model's uncertainty.

\section{The \AlgoNameLong acquisition function}\label{sec:idrl_general}

This section introduces \AlgoNameLong (\AlgoNameShort) for a general Bayesian model of the reward and discusses how to select queries $\Query_i$, making no assumptions on their form nor on the responses $\Evidence_i$.

\paragraph{Reward model.}
To select informative queries, we need to quantify uncertainty; hence, we use a Bayesian model of the reward function. From a Bayesian perspective, it is important to distinguish between the agent's belief about a quantity and its ``actual'' value that is unknown to us. We denote the belief about the reward in state $s$ with $\RfunBelief(s)$ and its actual value with $\Rfun(s)$.

\paragraph{Query selection.} To select informative queries, we have to consider that the responses might only give \emph{indirect} information about the set of optimal policies. For example, assume the agent can ask the expert to quantify the reward of individual states. These rewards  provide information about the expected return of a policy but may yield no information about the set of optimal policies. For example, if a state is visited similarly often by every plausibly optimal policy, knowing its reward does not help decide between the policies (e.g., the cherry in \Cref{fig:gridworld}). Therefore, any approach that only aims to reduce the uncertainty of the reward model may waste expensive queries that do not help find an optimal policy.

\begin{algorithm}[t]
\caption{\emph{\AlgoNameLong} (\AlgoNameShort). The algorithm requires a set of candidate queries $\CandStates$, a Bayesian model of the reward function, and an RL algorithm that returns a policy given a reward function. $\ExpRetBelief{\p}$ is the belief about the expected return of policy $\p$, induced by the reward model $P(\RfunBelief | \Dataset)$, and $\RfunBelief$ is the belief about the reward function. }
\label{algorithm}
\begin{algorithmic}[1]
  \State $\Dataset \gets \{\}$; $\CandPol \gets$ initialize candidate policies;
  initialize reward model with prior distribution $P(\RfunBelief)$
  \While{not converged}
     \State Select a query:
     \State \hspace{\algorithmicindent}$\p_1, \p_2 \in \argmax_{\p, \p' \in \CandPol} \ConditionalEntropy{\ExpRetBelief{\p} - \ExpRetBelief{\p'}}{\Dataset}$
     \State \hspace{\algorithmicindent}$\Query^* \in \argmax_{\Query \in \CandStates} \ConditionalInformationGain{\ExpRetBelief{\p_1} - \ExpRetBelief{\p_2}}{(\Query, \EvidenceBelief)}{\Dataset}$
     \State Make query and update reward model:
     \State \hspace{\algorithmicindent}$\Evidence^* \gets$ Response to query $\Query^*$
     \State \hspace{\algorithmicindent}$P(\RfunBelief | \Dataset \cup \{(\Query^*, \Evidence^*)\} ) \propto P(\Evidence^* | \RfunBelief, \Dataset, \Query^*) P(\RfunBelief | \Dataset)$ \Comment{Update belief about the reward}
     \State \hspace{\algorithmicindent}$\Dataset \gets \Dataset \cup \{(\Query^*, \Evidence^*)\}$ \Comment{Add observation to dataset}
     \State Optionally update candidate policies $\CandPol$
  \EndWhile
  \State $\MeanRfun \gets$ mean estimate of the reward model;
  $\MeanOptPol \gets \mathtt{RL}(\MeanRfun)$
  \State \textbf{return} $\MeanOptPol$
\end{algorithmic}
\end{algorithm}

Intuitively, we want to instead select queries that \emph{help identify the optimal policy}. In the language of information theory, we want to maximize the \emph{information gain} of a query about \emph{the identity of the optimal policy}. More formally, if $\Dataset = \{(\Query_1, \Evidence_1), \dots, (\Query_t, \Evidence_t)\}$ is a dataset of past queries and responses, let us denote with $P(\hat{\p}^* | \Dataset)$ the agent's belief about the optimal policy, induced by our belief about the reward function $\RfunBelief$. Also, let $\CandStates$ be a set of candidate queries the agent can make. Then, one way to formalize this intuition is to select queries
$
\Query^* \in \argmax_{\Query \in \CandStates} I(\hat{\p}^*; (\Query, \EvidenceBelief) | \Dataset)
$,
where $\EvidenceBelief$ is the agent's belief about the response it will get to query $\Query$, and $I$ denotes the information gain. Unfortunately, this objective has two undesirable properties. First, the agent has to keep track of a distribution over all possible policies to compute it, which is intractable in general. Second, reducing uncertainty about the optimal policy only matters as long as there are significant differences in the return of plausibly optimal policies. For example, if the agent identifies a set of plausibly optimal policies with similar returns, we care less about identifying exactly which policy is optimal, compared to when such policies have very different returns.

To address the first challenge, we obtain a finite set of \emph{candidate policies} $\CandPol$ that are \emph{plausibly optimal} according to our Bayesian reward model. To address the second challenge, we select the most informative query for distinguishing policies in terms of their value.

Let us first discuss how to select queries, assuming a set of plausibly optimal policies $\CandPol$ to be available. We can exploit the fact that the belief about the reward function $\RfunBelief$ induces a belief about the expected return of policy $\p \in \CandPol$, denoted as $\ExpRetBelief{\p}$.
Concretely, the expected return of a policy can be computed as the scalar product $\ExpRet{\p} = \VectorProd{\StateVisit{\p}}{\vr}$,
where $\StateVisit{\p}$ is a vector of the (discounted) expected state-visitation frequencies of policy $\p$ and $\vr$ is a vector of rewards of the corresponding states. We can estimate $\StateVisit{\p}$ from trajectories sampled using policy $\p$, and then determine $\ExpRetBelief{\p}$ from $\RfunBelief$.

Given $\ExpRetBelief{\p}$, \AlgoNameShort proceeds in two steps. It first selects two policies that maximize the model's uncertainty about the difference in their expected returns: 
\begin{align}
\p_1, \p_2 \in \argmax_{\p, \p' \in \CandPol} \ConditionalEntropy{\ExpRetBelief{\p} - \ExpRetBelief{\p'}}{\Dataset},
\label{eq:dig_policy_selection}
\end{align}
where $H$ is the entropy of the belief conditioned on past queries. To gather information about the distinction between $\p_1$ and $\p_2$, \AlgoNameShort then selects queries that maximize the information gain about the difference in expected return between $\p_1$ and $\p_2$:
\begin{align}
\Query^* \in \argmax_{\Query \in \CandStates} \ConditionalInformationGain{\ExpRetBelief{\p_1} - \ExpRetBelief{\p_2}}{(\Query, \EvidenceBelief)}{\Dataset}.
\label{eq:dig_state_selection}
\end{align}
Note that this is not the same as jointly maximizing the information gain about $\ExpRetBelief{\p_1}$ and $\ExpRetBelief{\p_2}$. \Cref{eq:dig_state_selection} prefers queries that help to distinguish $\p_1$ and $\p_2$ over queries that help to determine the exact value of $\ExpRetBelief{\p_1}$ and $\ExpRetBelief{\p_2}$.
Assuming a set of optimal policies is contained in $\CandPol$, reducing the uncertainty about the difference in returns within $\CandPol$ will help to identify an optimal policy quickly. In particular, if there is no remaining uncertainty about the differences in return, we can clearly identify an optimal policy.

Let us now discuss how to obtain a set of candidate policies $\CandPol$. For \AlgoNameShort to select informative queries, $\CandPol$ has to reflect the agent's current belief about optimal policies. \AlgoNameShort uses Thompson sampling \citep[TS,][]{thompson1933likelihood} as a flexible way to create $\CandPol$. We implement TS by repeatedly sampling a reward function from the posterior reward model and finding an approximately optimal policy for this sampled reward function. This approximates sampling from the posterior distribution over optimal policies. 
Since TS is demanding, in our experiments, we investigate two effective alternatives to alleviate its computational burden: (1) we update $\CandPol$ in regular intervals, rather than at every step, and (2) we start from the candidates computed in previous steps rather than starting the policy optimization from scratch.

\Cref{algorithm} shows the full \AlgoNameShort algorithm. In each iteration, \AlgoNameShort identifies two plausibly optimal policies with high uncertainty about their difference in return and then aims to reduce this uncertainty. We can stop the algorithm after a fixed number of queries, or by checking a convergence criterion, and return a policy $\MeanOptPol$ that is optimized for the current reward model.

Note that \AlgoNameShort is agnostic to how the candidate queries $\CandStates$ are generated. Different applications might require different approaches to generating $\CandStates$. In our experiments, for example, we consider: using all possible queries in small environments, choosing states or trajectories to query from rollouts of the currently optimal policy $\MeanOptPol$, selecting queries from rollouts of the candidate policies, and selecting queries from trajectories of a pre-defined explorations policy. Importantly, all of these, and others, are compatible with \AlgoNameShort.

\section{An exact and efficient implementation of \AlgoNameShort for GP reward models}\label{sec:idrl_gp}

Here, we describe one concrete implementation of \AlgoNameShort using a Gaussian process \citep[GP,][]{rasmussen2006gaussian} reward model and linear query types. These choices allow us to compute equations \labelcref{eq:dig_policy_selection} and \labelcref{eq:dig_state_selection} exactly and efficiently.

\paragraph{Reward model.}
We model the reward function as a GP with (w.l.o.g.) a zero-mean prior distribution $\RfunBelief(s) \sim \GP(0,k(s,s'))$ using a kernel $k$ which measures the similarity of states.

\paragraph{Query selection.}
We first show how to compute equations \labelcref{eq:dig_policy_selection} and \labelcref{eq:dig_state_selection} if the posterior belief about the reward function is Gaussian. Then, we discuss a family of practically relevant query types that satisfy this assumption. We provide proofs for all results in \Cref{app:proofs}.

\begin{restatable}{proposition}{IDRLGP}\label{IDRLGP}
If $\RfunBelief(s) | \Dataset$ is a GP, then $P(\ExpRetBelief{\p} - \ExpRetBelief{\p'} | \Dataset)$ is Gaussian and:
\begin{align*}
    \argmax\limits_{\p, \p' \in \CandPol} \ConditionalEntropy{\ExpRetBelief{\p} - \ExpRetBelief{\p'}}{\Dataset} &= \argmax\limits_{\p, \p' \in \CandPol} \Var[\ExpRetBelief{\p} - \ExpRetBelief{\p'} | \Dataset] \\
    \argmax_{\Query \in \CandStates} I(\ExpRetBelief{\p_1} - \ExpRetBelief{\p_2}; (\Query, \EvidenceBelief) | \Dataset)
&= \argmin_{\Query \in \CandStates} \Var[\ExpRetBelief{\p_1} - \ExpRetBelief{\p_2} | \Dataset \cup \{(\Query, \EvidenceBelief)\}]
\end{align*}
\end{restatable}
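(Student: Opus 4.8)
The plan is to prove the three claims in turn, relying on two elementary facts: (i) any \emph{fixed} linear functional of a Gaussian process is a univariate Gaussian, and (ii) the differential entropy of a univariate Gaussian with variance $\sigma^2$ equals $\tfrac12\log(2\pi e\,\sigma^2)$, a strictly increasing function of $\sigma^2$. \emph{Gaussianity.} Using the representation $\ExpRet{\p} = \VectorProd{\StateVisit{\p}}{\vr}$, the belief about the difference in returns is the linear functional $\ExpRetBelief{\p} - \ExpRetBelief{\p'} = \VectorProd{\StateVisitDirection{\p}{\p'}}{\RfunBelief}$, where $\StateVisitDirection{\p}{\p'} = \StateVisit{\p} - \StateVisit{\p'}$ is a fixed vector. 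Since $\RfunBelief \mid \Dataset$ is a GP, its values at the states entering $\StateVisitDirection{\p}{\p'}$ are jointly Gaussian, and a fixed linear combination of them is univariate Gaussian; hence $P(\ExpRetBelief{\p} - \ExpRetBelief{\p'} \mid \Dataset)$ is Gaussian, with variance $\Var[\ExpRetBelief{\p} - \ExpRetBelief{\p'} \mid \Dataset]$ obtained as the quadratic form of $\StateVisitDirection{\p}{\p'}$ against the posterior covariance.

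\emph{Policy selection (first identity).} By fact (ii), $\ConditionalEntropy{\ExpRetBelief{\p} - \ExpRetBelief{\p'}}{\Dataset} = \tfrac12\log\!\big(2\pi e\,\Var[\ExpRetBelief{\p} - \ExpRetBelief{\p'} \mid \Dataset]\big)$. Because $\argmax$ is invariant under a strictly increasing transformation of the objective, maximizing this entropy over $\p, \p' \in \CandPol$ is equivalent to maximizing the posterior variance, which is the first identity.

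\emph{Query selection (second identity).} Writing $X = \ExpRetBelief{\p_1} - \ExpRetBelief{\p_2}$ and $Y = (\Query, \EvidenceBelief)$, I would use the decomposition $\ConditionalInformationGain{X}{Y}{\Dataset} = \ConditionalEntropy{X}{\Dataset} - \ConditionalEntropy{X}{Y, \Dataset}$. The first term does not depend on the query $\Query$, so maximizing the information gain over $\Query \in \CandStates$ is equivalent to minimizing the posterior entropy $\ConditionalEntropy{X}{Y, \Dataset}$. Assuming the response is jointly Gaussian with $\RfunBelief$ (which holds for the linear query types characterized next), $X$ together with the response is jointly Gaussian, so the posterior of $X$ after incorporating $(\Query, \EvidenceBelief)$ is Gaussian with a variance that, crucially, does not depend on the realized value of the response. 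Thus $\ConditionalEntropy{X}{Y, \Dataset} = \tfrac12\log\!\big(2\pi e\,\Var[X \mid \Dataset \cup \{(\Query, \EvidenceBelief)\}]\big)$, and minimizing it is, by fact (ii) again, equivalent to minimizing the posterior variance, which is the second identity.

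\emph{Main obstacle.} The delicate point is the last step: the equivalence relies both on the joint Gaussianity of the return difference and the query response, and on the response-independence of the Gaussian posterior variance, which is what lets the expectation over possible responses implicit in $\ConditionalEntropy{X}{Y, \Dataset}$ collapse to the entropy evaluated at the single deterministic posterior variance $\Var[X \mid \Dataset \cup \{(\Query, \EvidenceBelief)\}]$. Securing joint Gaussianity is precisely the constraint that confines us to linear-Gaussian query types; the first identity, by contrast, needs only that $\RfunBelief \mid \Dataset$ is a GP.
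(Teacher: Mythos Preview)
Your proposal is correct and follows essentially the same route as the paper: Gaussianity via linearity of $\ExpRetBelief{\p}-\ExpRetBelief{\p'}$ in $\RfunBelief$, then monotonicity of the Gaussian entropy in the variance for the first identity, and the decomposition $I(X;Y\mid\Dataset)=H(X\mid\Dataset)-H(X\mid Y,\Dataset)$ with the first term query-independent for the second. Your explicit remark that the posterior variance is response-independent under joint Gaussianity (so the expectation over $\EvidenceBelief$ in the conditional entropy collapses) is a point the paper's proof uses implicitly but does not spell out.
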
\vspace{-0.3em}

We can compute both variances analytically, enabling exact implementation of equations \eqref{eq:dig_policy_selection} and \eqref{eq:dig_state_selection}.

\paragraph{Query types.} To apply this result, we need $\RfunBelief(s) | \Dataset$ to be a GP, which is not the case for general observations $(\Query_i, \Evidence_i)$. If the queries are individual states, i.e., $\Query_i = s_i$ and $\Evidence_i = r(s_i)$, the problem is standard GP regression, and $\RfunBelief(s) | \Dataset$ is a GP \citep{rasmussen2006gaussian}. More generally, a similar statement holds if the observations are \emph{linear} combinations of rewards.

\begin{restatable}{definition}{DefLinearObservations}\label{DefLinearObservations}
We call $\Query = (\linstates, \linfactors)$ a \emph{linear reward query}, if it consists of states $\linstates = \{s_{1}, \dots, s_{N}\}$ and linear weights $\linfactors = \{\linfac_{1}, \dots, \linfac_{N}\}$, and the response to query $\Query$ is a linear combination of rewards $\Evidence = \sum_{j=1}^N \linfac_{j} \Rfun(s_{j}) + \varepsilon$, with Gaussian noise $\varepsilon \sim \cN(0, \sigma_n^2)$.
\end{restatable}\vspace{-0.5em}

\begin{restatable}{proposition}{GPLinearObservations}\label{GPLinearObservations}
Let $\Query$ be a linear reward query. If the prior belief about the reward $\RfunBelief(s)$ is a GP, then the posterior belief about the reward $\RfunBelief(s) | (\Query,\Evidence)$ is also a GP.
\end{restatable}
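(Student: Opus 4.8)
The plan is to reduce the claim to the standard characterization of a Gaussian process: a stochastic process $\RfunBelief$ is a GP precisely when, for every finite collection of input states, the corresponding vector of process values is jointly Gaussian. Hence it suffices to show that for an arbitrary finite set of test states $t_1, \dots, t_m$, the posterior vector $(\RfunBelief(t_1), \dots, \RfunBelief(t_m)) \mid (\Query, \Evidence)$ is multivariate Gaussian.

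First I would fix arbitrary test states $t_1, \dots, t_m$ and form an augmented vector that includes both the test states and the query states $\linstates = \{s_1, \dots, s_N\}$ appearing in $\Query$, namely $\vz = (\RfunBelief(t_1), \dots, \RfunBelief(t_m), \RfunBelief(s_1), \dots, \RfunBelief(s_N))$. Since the prior belief $\RfunBelief$ is a GP, $\vz$ is jointly Gaussian by definition. Next I would use that, by \Cref{DefLinearObservations}, the response $\Evidence = \sum_{j=1}^N \linfac_j \RfunBelief(s_j) + \varepsilon$ is a linear combination of the coordinates of $\vz$ plus an independent Gaussian noise term $\varepsilon \sim \cN(0, \sigma_n^2)$. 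Consequently the stacked vector $(\RfunBelief(t_1), \dots, \RfunBelief(t_m), \Evidence)$ is a linear image of the jointly Gaussian vector $(\vz, \varepsilon)$ and is therefore itself jointly Gaussian. This is exactly the step where the linearity of the query and the Gaussianity of the noise are used: it is the closedness of GPs under linear functionals and additive independent Gaussian noise.

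Finally I would invoke the standard Gaussian conditioning formula: conditioning a jointly Gaussian vector on the observed value of one of its coordinates ($\Evidence = y$) yields a Gaussian law on the remaining coordinates, with mean and covariance given in terms of the cross-covariances $\Cov[\RfunBelief(t_i), \Evidence]$ and the variance $\Var[\Evidence]$. Thus $(\RfunBelief(t_1), \dots, \RfunBelief(t_m)) \mid (\Query, \Evidence)$ is multivariate Gaussian, and since the test states were arbitrary, every finite-dimensional posterior marginal is Gaussian. By the characterization above, the posterior process $\RfunBelief(s) \mid (\Query, \Evidence)$ is a GP.

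There is no deep difficulty in this argument; the one point requiring care is the construction of the joint law of the observation with the test values. \textbf{The main obstacle} is ensuring the query states $\linstates$ are included in the finite collection, so that $\Evidence$ is genuinely an affine function of Gaussian variables already under consideration—one cannot condition on $\Evidence$ before its joint Gaussianity with the test values is established. The explicit posterior mean and covariance kernel then follow directly from the conditioning formula, but this computation is routine GP-regression bookkeeping and I would not carry it out in detail.
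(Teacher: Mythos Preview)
Your proposal is correct and follows essentially the same approach as the paper: fix arbitrary test states, form the joint Gaussian vector of test-state rewards together with the query-state rewards, observe that the response $\Evidence$ is an affine function of this vector plus independent Gaussian noise (hence jointly Gaussian with the test values), and then apply the standard Gaussian conditioning formula. The only difference is that the paper additionally writes out the explicit posterior mean and covariance, which you correctly identify as routine bookkeeping not needed for the proposition as stated.
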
\vspace{-0.5em}

Linear reward queries result in a particularly efficient implementation of \AlgoNameShort. Of course, \AlgoNameShort with a GP model could be extended to non-linear observations using approximate inference. However, it turns out that many commonly used query types can be modeled as linear reward queries, including the return of trajectories or comparisons of trajectories (see \Cref{app:linear_reward_queries}).

\section{A scalable Deep RL approximation of \AlgoNameShort}\label{sec:idrl_deep_rl}

GP models provide a convenient way to implement \AlgoNameShort exactly. But, can \AlgoNameShort also be used if we can not model the reward function as a GP? Moreover, can we scale it to large environments on the scale of typical Deep RL applications?

To address these questions, we propose a second implementation of \AlgoNameShort using a deep neural network (DNN) reward model. To scale \AlgoNameShort to large Deep RL scenarios, we integrate it into a policy optimization algorithm, similar to \citet{christiano2017deep}.\footnote{We provide a detailed comparison between our setup and \citet{christiano2017deep} in \Cref{app:comparison_to_christiano}.} In our experiments, we focus on comparison queries, but it is straightforward to extend the algorithm to other query types.

\paragraphsmall{Reward model.} To model the reward function, we use adaptive basis function regression with DNNs, similar to \citet{snoek2015scalable}. Concretely, we train a DNN from comparisons of short clips of the agents behavior using the Bradley-Terry model and $\ell_2$-regularization. We then treat the learned representation as a basis function and the final layer of the DNN as a maximum \emph{a posteriori} (MAP) estimate of the parameters of a Bayesian logistic regression model. Finally, we approximate the full posterior using a Laplace approximation.

\paragraphsmall{Query selection.} Because of the Laplace approximation, the posterior distribution of $\RfunBelief(s) | \Dataset$ is Gaussian, and we can compute equations \labelcref{eq:dig_policy_selection} and \labelcref{eq:dig_state_selection} the same way we did for a GP reward model.

\paragraphsmall{Query types.} Similar to \citet{christiano2017deep}, we consider queries $\Query_i = (\Segment_i^1, \Segment_i^2)$ that compare two segments of trajectories $\Segment_i^1$ and $\Segment_i^2$, where the user responds with their preference $\Evidence_i \in \{-1, 1\}$.

\paragraphsmall{Candidate policies.} In large environments, it is infeasible to train new policies from scratch during the Thompson sampling step. To avoid this, we maintain a fixed set of policies that we update regularly, instead of training new policies from scratch whenever we receive new samples.

\paragraphsmall{Candidate queries.} We generate candidate queries by rolling out the current policy optimized for the mean estimate of the reward model, as well as the candidate policies and uniformly sampling pairs of segments from the resulting trajectories.

\paragraphsmall{Full algorithm.} We use a policy gradient algorithm to train a policy for the current reward model, and the candidate policies. Similar to \citet{christiano2017deep}, the agent queries comparisons following a fixed schedule in which the number of samples is proportional to $\frac{1}{T}$, where $T$ is the number of policy training steps, i.e., we provide more samples early during training and less later on. For more details, including full pseudocode for the Deep RL algorithm, see \Cref{app:comparison_to_christiano}.

\section{Experiments}\label{sec:experiments}

We empirically test \AlgoNameShort in several environments, ranging from gridworlds to complex continuous control tasks, and for several different query types, including numerical evaluations and comparisons of trajectories. Our evaluation covers most scenarios existing in the literature and shows that \AlgoNameShort attains comparable or superior performance to methods designed for specific scenarios.

In all experiments, the agent's queries are answered with simulated feedback based on an underlying true reward function unknown to the agent. We usually evaluate the \emph{regret} of a policy $\p$ trained using the reward model, i.e., $\ExpRet{\p^*} - \ExpRet{\p}$ for an optimal policy $\p^*$. If we do not know $\p^*$, we approximate it with a policy trained on the true reward function.

We first validate that GP-based \AlgoNameShort improves sample efficiency in simple gridworld environments for numerical and comparison queries (\Cref{sec:experiments_toy_envs}). Next, we consider the most common setup in the literature, that is,  learning from comparisons of trajectories, and compare GP-based \AlgoNameShort against alternative approaches in a driving simulator, proposed in prior work (\Cref{sec:experiments_comparison}). Then, we study another natural feedback type: ratings of clips of the agent's behavior. In this setting, we demonstrate how GP-based \AlgoNameShort can be scaled up to bigger environments in the MuJoCo simulator (\Cref{sec:experiments_gp_mujoco}). Finally, we further demonstrate scalability by considering the Deep RL implementation of \AlgoNameShort to learn standard MuJoCo tasks from comparisons of clips of trajectories, similar to \citet{christiano2017deep} (\Cref{sec:experiments_deep_rl_mujoco}).

For each environment, we choose our setup to be close to prior work to promote a fair comparison. This leads to some design choices, such as the RL solver or the query types, to differ between environments. As a side effect, this highlights \AlgoNameShort's generality.
\Cref{app:implementation_details_gp,app:implementation_details_deep_rl} describe the experimental setup in more detail, and we provide code to reproduce all experiments.\footnote{\github}

\subsection{Baselines}\label{sec:experiments_baselines}
We consider five baselines:
(\textit{i}) \emph{Uniform sampling} selects queries from $\CandStates$ with equal probability.
(\textit{ii}) \emph{Information gain on the reward} (IGR) selects queries that maximize information gain about the reward $\ConditionalInformationGain{(\Query, \EvidenceBelief)}{\RfunBelief}{\Dataset}$. For a GP model, this is equivalent to maximizing $\Var[\EvidenceBelief | \Dataset, \Query]$. \citet{biyik2019asking} use IGR to learn rewards from comparisons of trajectories; however, it can be extended to other query types.
(\textit{iii}) \emph{Expected improvement on the reward} (EIR) maximizes the improvement in the value of a query compared to the best observation so far, in \emph{expectation}, and is a common acquisition function in BO \citep{mockus1978application}. EIR can not be applied to comparison queries.
(\textit{iv}) \emph{Expected policy divergence} (EPD) is an active reward learning method introduced by \citet{daniel2015active}, which makes queries that maximally change the current policy. Since EPD updates the policy for each potential observation, it is prohibitively expensive for large $\CandStates$.
While EPD was introduced to query the return of trajectories, we extend it to other query types (cf. \Cref{app:baselines_details}).
(\textit{v}) \emph{Maximum regret} (MR) is an acquisition function proposed by \citet{wilde2020active}. It assumes access to a set of candidate reward functions and corresponding optimal policies. MR compares policies that perform well according to one reward function but poorly according to a different one. It can only be used with comparisons of full trajectories.
We also tested \emph{expected volume removal} \citep[EVR,][]{dorsa2017active} for comparison queries; however, we found it to get stuck often, which confirms the findings of \citet{biyik2019asking}. Note that IGR and EIR reduce uncertainty uniformly over the state space, while EPD and MR consider the environment dynamics.

\subsection{Can \AlgoNameShort improve sample efficiency by considering the environment dynamics?}\label{sec:experiments_toy_envs}

\begin{wrapfigure}{R}{0.45\textwidth}\centering\vspace*{-1em}
\begin{tabular}{>{\centering\arraybackslash}m{0.49\linewidth} @{\hskip3pt} >{\centering\arraybackslash}m{0.49\linewidth}}
\hspace*{2em}{\scriptsize Reward of States} & \hspace*{-0.1em} {\scriptsize Comparisons of States} \\
\includegraphics[height=9.7em]{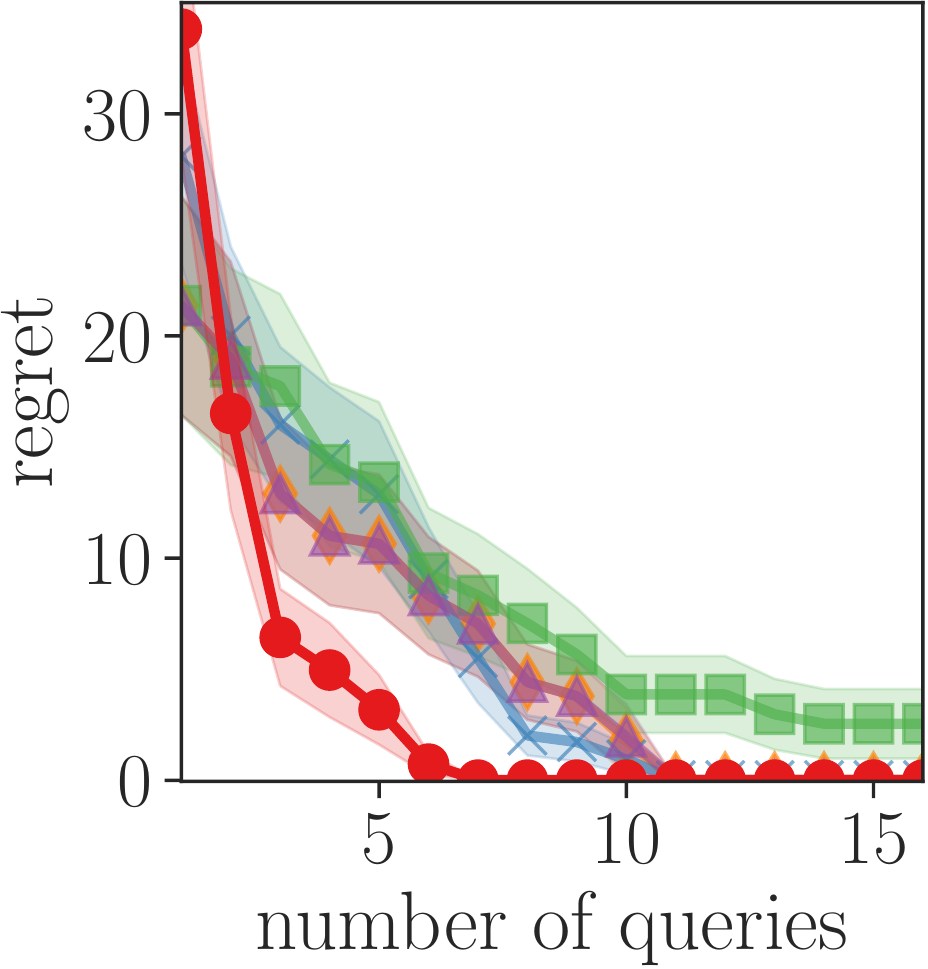} &
\includegraphics[height=9.7em]{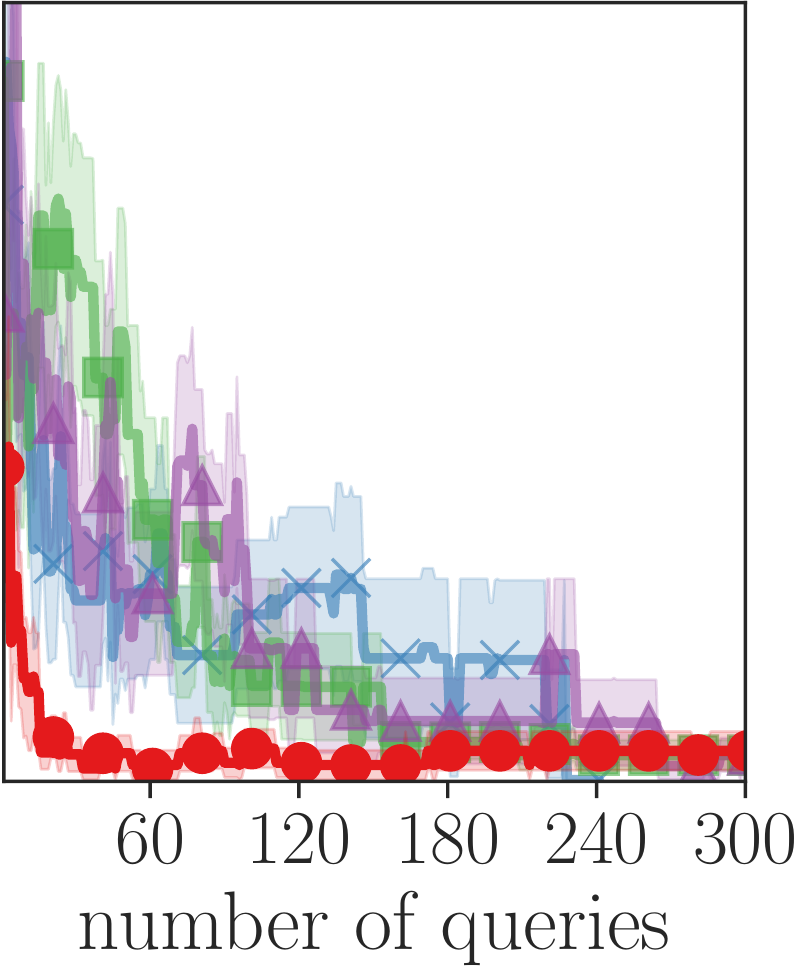}
\end{tabular}
\caption{\small Regret of a policy trained in randomly generated gridworlds as a function of the number of queries. The queries are either asking about the reward of individual states, or comparisons between two states. The plot compares IDRL (\legendIDRL) to EPD (\legendEPD), IGR (\legendIGGP) and uniform sampling (\legendUniform). For numerical queries, we also compare to EIR (\legendEI), which is not applicable to comparisons. MR is not applicable to single-state queries. The plots show mean and standard error over 30 random seeds.}\label{fig:results_gridworlds}
\vspace*{-1em}
\end{wrapfigure}

We first validate our hypothesis that \AlgoNameShort improves sample efficiency in small toy environments. Here, we highlight experiments in a set of \emph{Gridworlds} similar to \Cref{fig:gridworld}. \Cref{app:additional_results} presents two additional toy environments that isolate specific reasons why \AlgoNameShort outperforms the baselines.

\paragraph{Setup.}
We consider $10 \times 10$ \emph{Gridworlds} with randomly placed walls and objects with different rewards. The agent has to find the object with the largest reward.
We consider queries about the reward of individual states, i.e., $\Query_i = s_i \in \sset$ and $\Evidence_i = \Rfun(s_i)$, and comparison queries with $\Query_i = (s_{i1}, s_{i2})$ and $\Evidence_i \in \{-1, 1\}$. The candidate queries $\CandStates$ either consist of all states or all pairs of states. We use GP-based \AlgoNameShort, with a kernel that encodes which objects are the same, and which are different.
All experiments run for less than 1 hour on a single CPU.

\paragraph{Results.}
\Cref{fig:results_gridworlds} shows the regret of a policy trained on the reward model after different numbers of queries. \AlgoNameShort finds better policies than the baselines with a limited number of queries because it focuses on regions of the state space relevant for finding the optimal policy. As shown in \Cref{fig:gridworld}, this improves sample efficiency over methods that uniformly reduce uncertainty, such as IGR and EIR. \AlgoNameShort also outperforms EPD, because EPD's goal of selecting queries that maximally change the current policy is also misaligned with the goal of finding an optimal policy. We investigate EPD's specific failure modes in \Cref{app:additional_results}.

\subsection{Can \AlgoNameShort learn from comparisons of trajectories using a GP reward model?}\label{sec:experiments_comparison}

Most prior work studies reward learning from comparisons of trajectories. To evaluate \AlgoNameShort in this setting, we consider the 2-dimensional, continuous \emph{Driver} environment by \citet{dorsa2017active}.

\paragraph{Setup.}
In \emph{Driver}, the agent controls a car on a highway with another car driving on a fixed trajectory (cf. \Cref{fig:result_driver}). For each experiment, we randomly sample an underlying (linear) reward function to describe the desired driving behavior. We use code by \citet{dorsa2017active} to simulate and solve the environment, but we adapt it to our setting. In contrast to \citet{dorsa2017active}, we do not synthesize queries. Instead, we sample a fixed set of $200$ reward functions from a Gaussian prior distribution. We then optimize a policy for each of these reward functions, and, similarly to \citet{wilde2020active}, consider all pairs of policies as potential queries. 
Moreover, we assume a linear observation model (see \Cref{app:linear_reward_queries}), whereas  \citet{dorsa2017active} and \citet{wilde2020active} choose different non-linear observation models.
Each experiment runs for less than 24 hours on a single CPU.

\paragraph{Results.}
\Cref{fig:result_driver} shows the regret curves for the learned policy and the cosine similarity for the learned reward function weights. \AlgoNameShort outperforms the baselines and finds a better policy with fewer queries. However, the difference to pure information gain is small in this simple environment.

\begin{figure}[t]
\begin{adjustbox}{max width=\linewidth}\begin{tabular}{llllll}
    \textbf{Legend:} & & & & \\
    \legendUniform & Uniform Sampling &
    \legendEI & Expected Improvement (EI) &
    \legendIGGP & Information Gain on Reward (IGR) \\
    \legendEPD & Expected Policy Divergence (EPD) &
    \legendMAXREG & Maximum Regret &
    \legendIDRL & \AlgoNameShort (ours)
\end{tabular}\end{adjustbox}

\begin{minipage}{0.6\textwidth}
    \begin{subfigure}{\linewidth}
        \hspace*{1em}\includegraphics[width=0.22\linewidth]{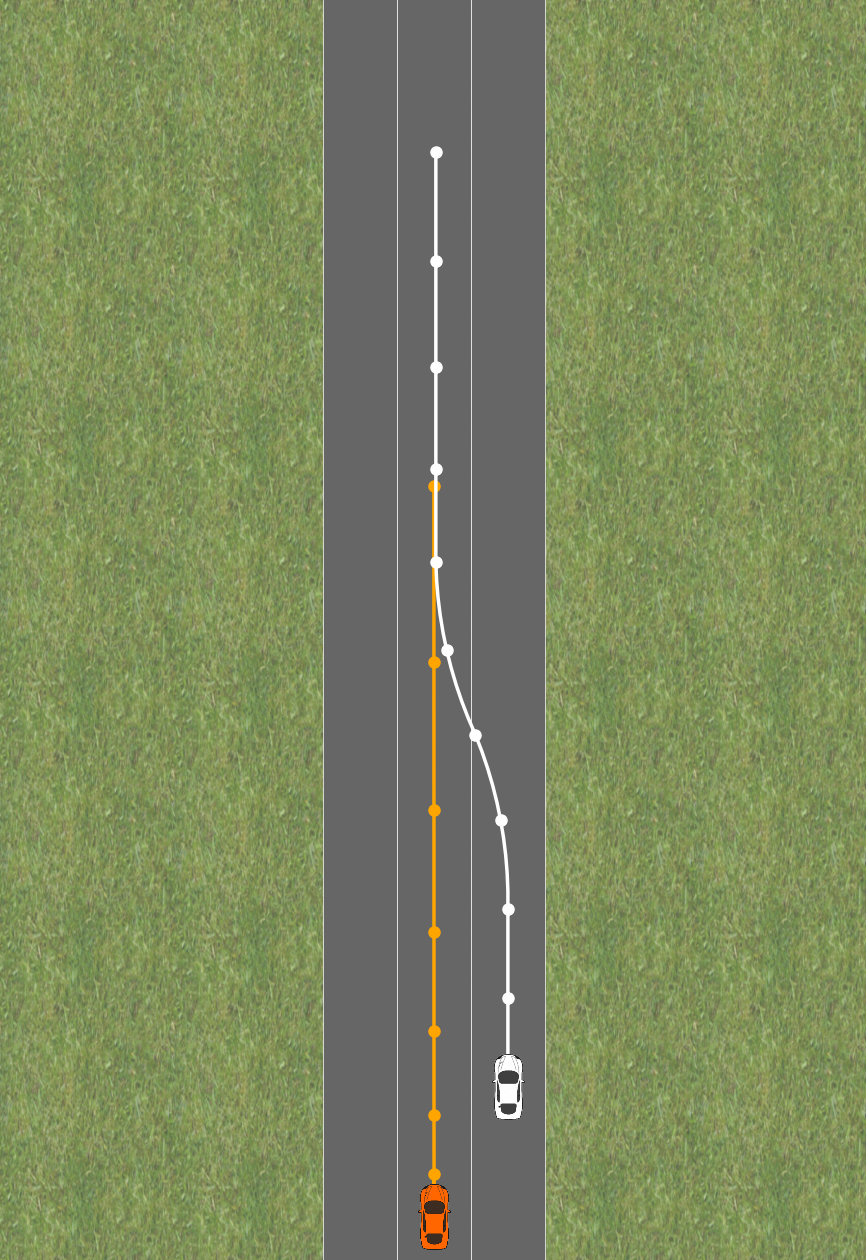}
        \hspace*{1em}\includegraphics[width=0.34\linewidth]{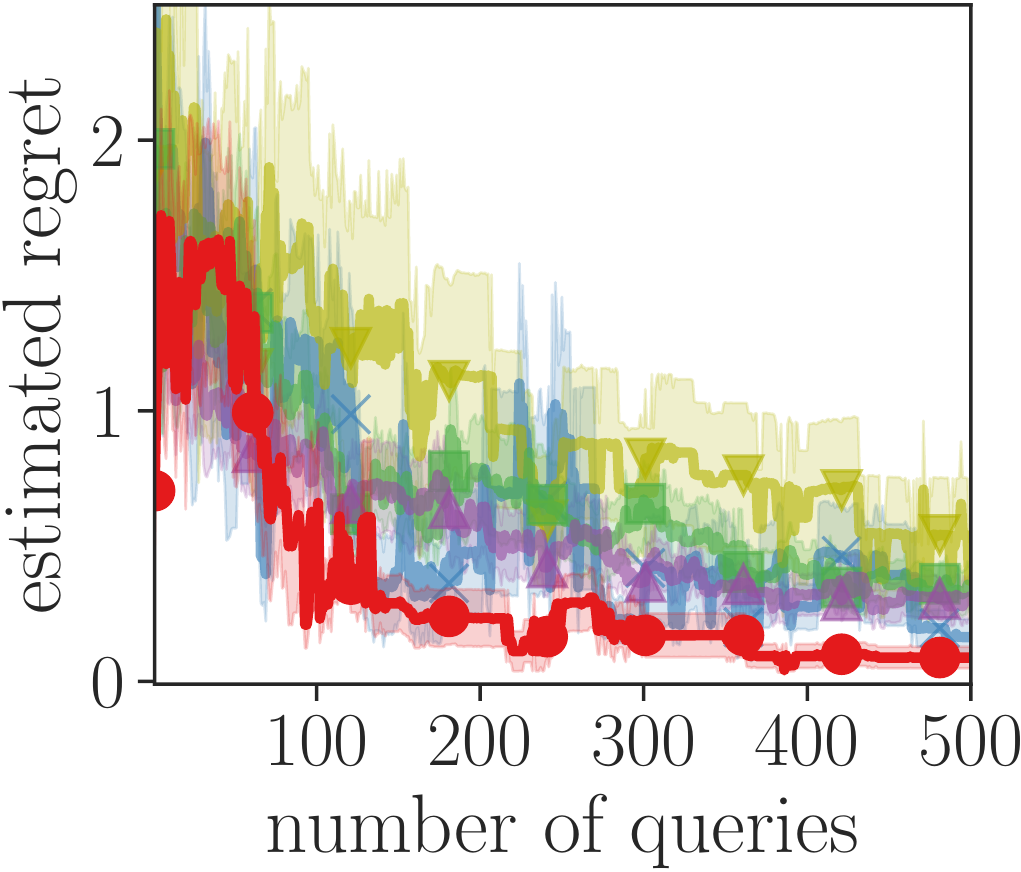}
        \includegraphics[width=0.34\linewidth]{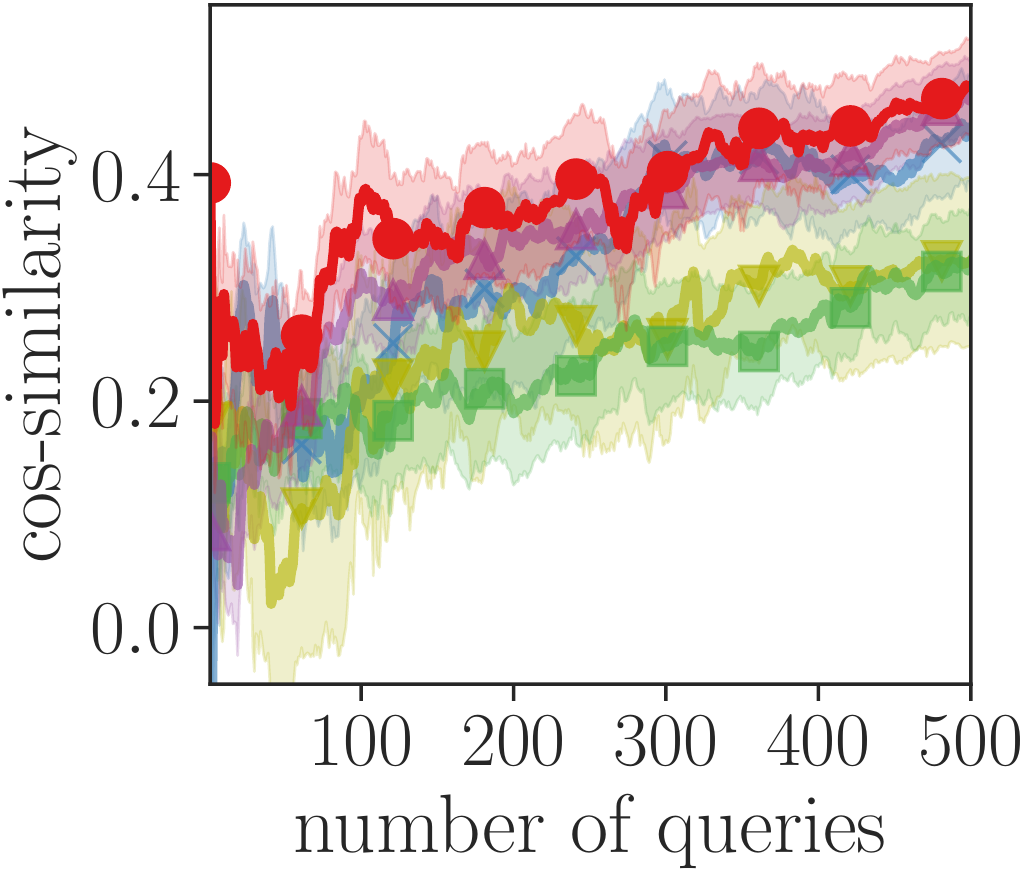}
        \caption{\emph{Driver} (comparison of trajectories)}\label{fig:result_driver}
    \end{subfigure} \\
    \begin{subfigure}{\linewidth}
        \raisebox{2.5em}{\includegraphics[width=0.3\linewidth]{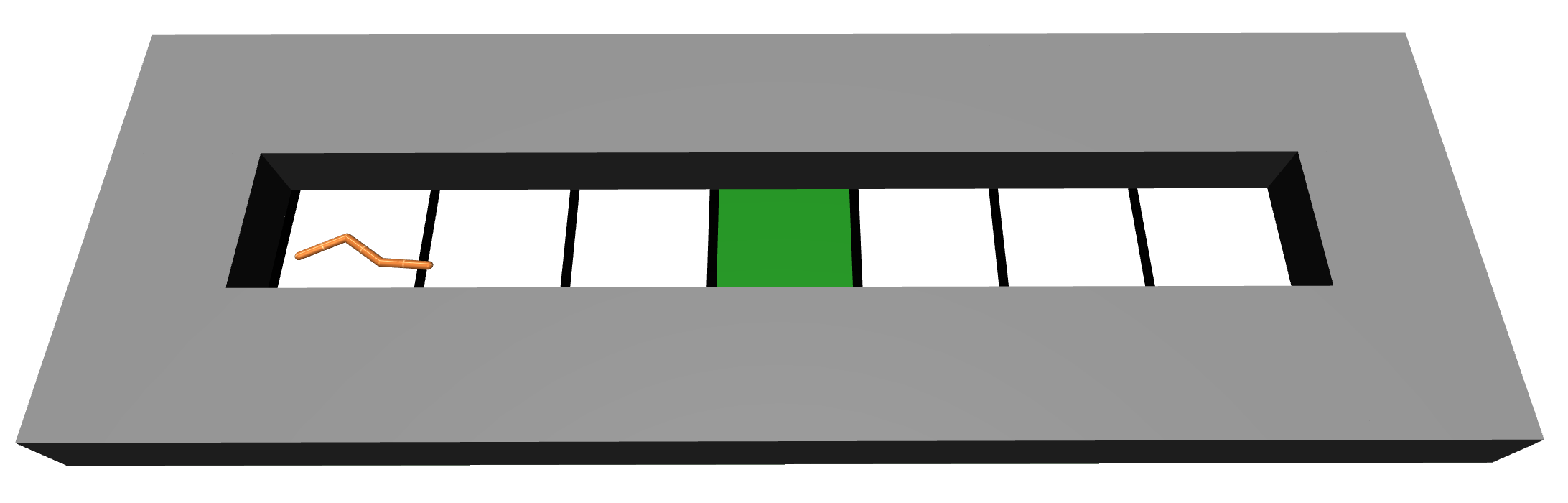}}
        \includegraphics[width=0.33\linewidth]{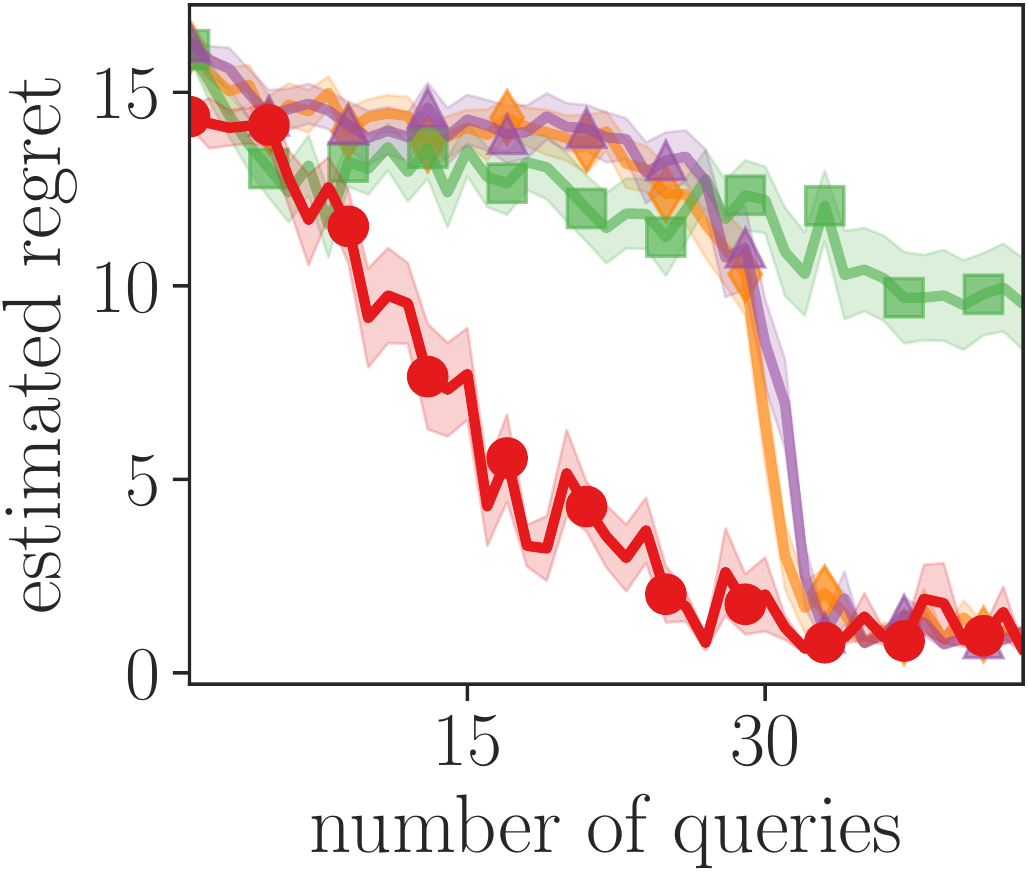}
        \includegraphics[width=0.35\linewidth]{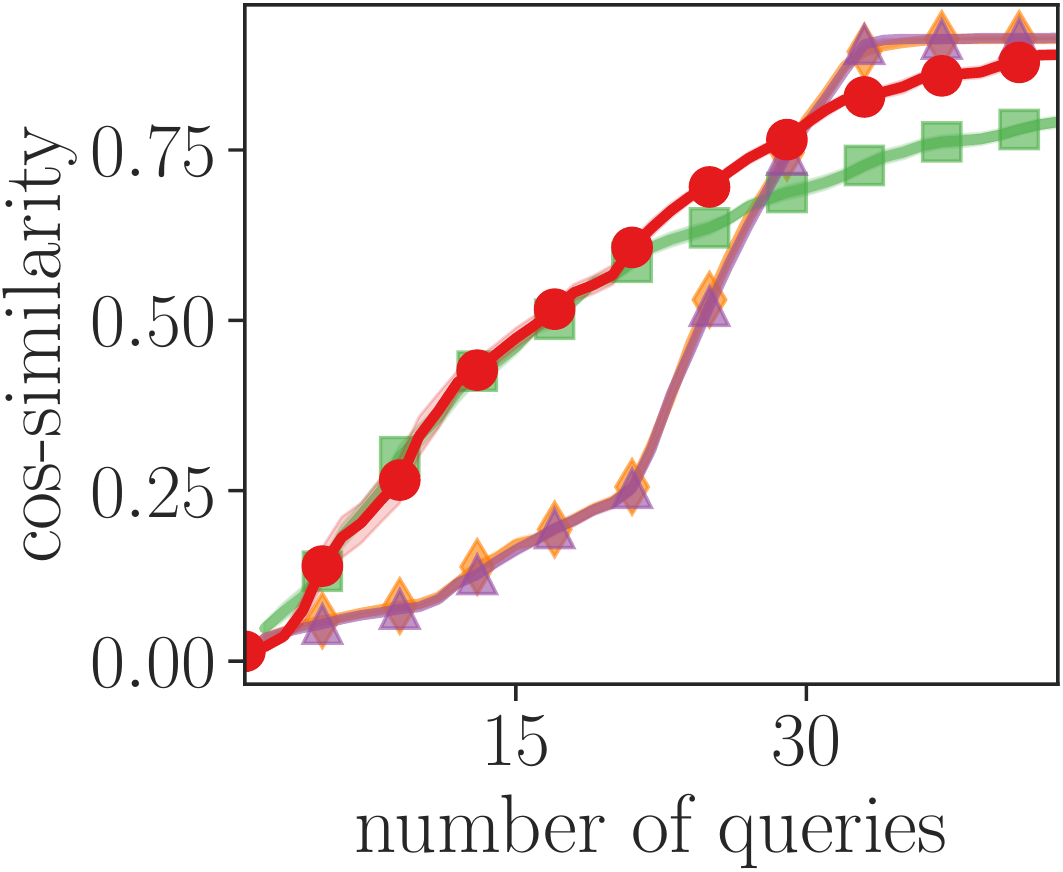}
        \caption{\emph{Swimmer-Corridor} (evaluation of trajectory clips)}\label{fig:result_mujoco}
    \end{subfigure}
\end{minipage}
\hspace*{0.5em}
\begin{minipage}{0.38\textwidth}\vspace*{1em}
\caption{\small Results in the (\subref{fig:result_driver}) \emph{Driver} and (\subref{fig:result_mujoco}) \emph{Swimmer-Corridor} environments (shown on the left).
We show the regret of a policy trained on the reward model compared to a policy trained on the true reward function as a function of the number of queries (middle plot).
We  report the cosine similarity between the learned and the true reward function (right plot). The plots show mean and standard error over 30 random seeds. \AlgoNameShort finds significantly better policies, while not necessarily learning an overall more accurate model of the reward function. A similar plot for \emph{Ant-Corridor} can be found in \Cref{app:additional_results}.}\label{fig:return_plots_mujoco}
\end{minipage}
\end{figure}

\subsection{Can \AlgoNameShort with a GP model be scaled to bigger environments?}\label{sec:experiments_gp_mujoco}

\begin{wraptable}{R}{0.4\textwidth}\centering\vspace*{-1em}
\begin{adjustbox}{max width=\linewidth}
    \begin{tabular}{ccc}
      \toprule
        & \specialcell{\emph{Swimmer-}\\\emph{Corridor}} & \specialcell{\emph{Ant-}\\\emph{Corridor}} \\
        \midrule
        Uniform Sampling & \result{11.8}{0.9} & \result{15}{1} \\
        IGR & \result{13.3}{0.6} & \result{17.2}{0.5} \\
        EIR & \result{12.0}{0.8} & \result{17.5}{0.8} \\
        \AlgoNameShort ($20$ updates) & \hresult{2.4}{0.8} & \hresult{2.2}{0.8} \\
        \AlgoNameShort ($4$ updates) & \result{2.8}{0.7} & \result{5}{1} \\
        \AlgoNameShort ($2$ updates) & \result{5.1}{0.6} & \result{8}{1} \\
        \AlgoNameShort ($1$ update) & \result{7.6}{0.8} & \result{12}{1} \\
      \bottomrule
    \end{tabular}
\end{adjustbox}
\caption{\small Results comparing \AlgoNameShort for different update frequencies of the candidate policies in the \emph{Corridor} environments. The table shows the estimated regret of a policy trained using $20$ queries about the reward function.}\label{tab:results_batch_size}
\vspace*{-1em}
\end{wraptable}

To demonstrate that GP-based \AlgoNameShort scales to larger environments, we use the  MuJoCo simulator \citep{todorov2012mujoco}, which provides challenging environments commonly used as benchmarks for RL.
However, its standard locomotion tasks are very easy to learn for a GP model because the reward is directly proportional to the agent's velocity in x-direction. Instead, we propose a task where the reward function is harder to learn.

\paragraph{Setup.}\looseness=-1
In our \emph{Corridor} environments (\Cref{fig:result_mujoco}), a robot (\emph{Swimmer}, or \emph{Ant}) has to move forward and stop at a goal position.
The simulated expert rates trajectory clips according to a reward function that is proportional to the velocity in the direction of the goal. This reward function is linear in a set of features of the state, as described in \Cref{app:mujoco_corridor}.
We use \emph{augmented random search} \citep{mania2018simple} as RL algorithm. For the \emph{Swimmer-Corridor} we learn a linear policy, and for the \emph{Ant-Corridor} we learn a hierarchical policy on top of pre-trained policies moving in four different directions.
To generate candidate queries, we use a fixed, noisy exploration policy that moves along the whole corridor. Unfortunately, EPD is too expensive to evaluate in this environment and MR is not suited to this kind of queries.

\paragraph{Results.}
\Cref{fig:result_mujoco} shows that \AlgoNameShort needs significantly fewer queries to find a good policy than any of the baselines. \AlgoNameShort adapts its queries to the policies that the current reward model induces: it initially samples clips in which the robot moves close to its starting position and shifts its focus to other regions as the reward model improves, and the learned policy starts to move.
In contrast, the baselines make queries in the whole reachable space similarly often, and, therefore, waste queries in regions that are not directly relevant for improving the policy.

\looseness -1 The computationally most expensive part of this implementation of \AlgoNameShort is updating the candidate policies in each iteration. Updating them less often reduces the computational cost at the expense of potentially reducing the sample efficiency. \Cref{tab:results_batch_size} studies this trade-off and shows that \AlgoNameShort outperforms the baselines even when the policies are updated only once at the beginning of training.
In this extreme case, we reduce IDRL's runtime from about $40$ hours to about $20$ hours in \emph{Swimmer-Corridor}, and from about $40$ hours to about $10$ hours in \emph{Ant-Corridor}. This shows the benefits are larger when solving the RL problem is more expensive. Nonetheless, the baseline algorithms are still faster, and run for only $2-3$ hours. This is because they do not require the additional inference steps necessary to optimize \Cref{eq:dig_state_selection}.  These results indicate that \AlgoNameShort using full Thompson sampling to generate candidate policies can trade-off computational cost and sample efficiency, which allows it to be applied to large environments.

\subsection{Can \AlgoNameShort be scaled to a Deep RL setting?}
\label{sec:experiments_deep_rl_mujoco}

\begin{wrapfigure}{R}{0.45\textwidth}\centering\vspace*{-2em}
\includegraphics[width=0.9\linewidth]{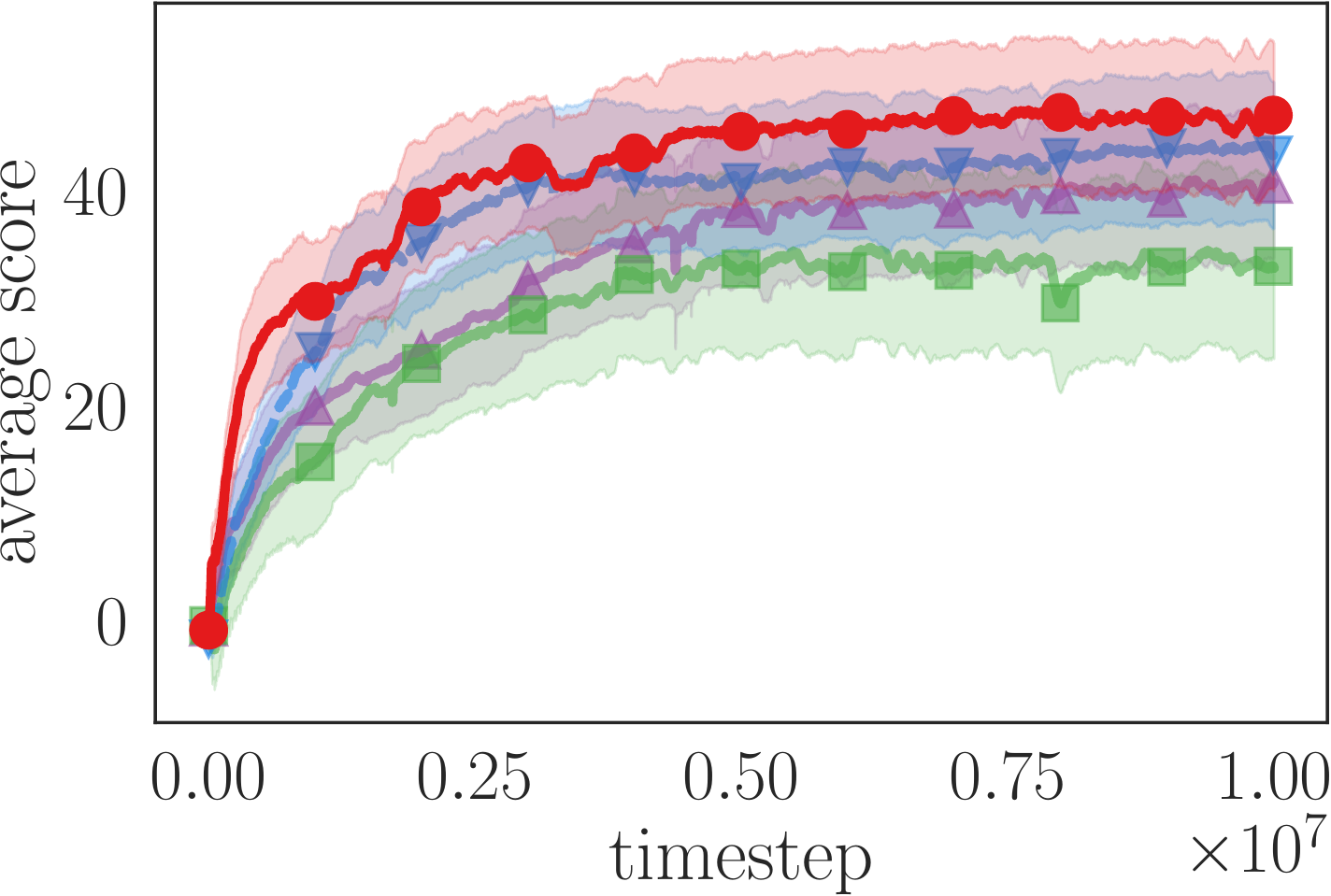}
\caption{\small Normalized score of policies learned from $1400$ (synthetic) comparisons of clips of the agent's behavior, averaged over all MuJoCo environments (higher is better). We show the mean and standard error of the score averaged over $5$ random seeds per environment. The plot compares IDRL (\legendIDRL) to IGR (\legendIGGP) and uniform sampling (\legendUniform), as well as an ablation of \AlgoNameShort that does not use the candidate policies to generate additional candidate queries (\legendIDRLABLATION). EPD is too expensive and MR is not suited to this kind of queries.}\label{fig:results_deep_rl}
\vspace*{-1em}
\end{wrapfigure}

Finally, we consider the Deep RL implementation of \AlgoNameShort from \Cref{sec:idrl_deep_rl}, using the Soft Actor-Critic algorithm \citep[SAC;][]{haarnoja2018soft}. We test it on standard MuJoCo locomotion tasks, which are harder to learn with a DNN than with a GP model because the former encodes less prior information.

\paragraph{Setup.} We consider a suite of standard tasks in MuJoCo implemented in OpenAI Gym \citep{openaiGym}: \emph{HalfCheetah-v3}, \emph{Walker2d-v3}, \emph{Hopper-v3}, \emph{Ant-v3}, \emph{Swimmer-v3}, \emph{InvertedPendulum-v2}, \emph{InvertedDoublePendulum-v2}, \emph{Reacher-v2}. Similar to \citet{christiano2017deep}, we modify some environments to remove the termination conditions. Our environments differ slightly from \citeauthor{christiano2017deep}, for the details see \Cref{app:comparison_to_christiano}. Our evaluation metric is a normalized score, averaged over all environments. A score of $0$ corresponds to a random policy and a score of $100$ is the performance of a policy trained on the true reward function. We provide results for the individual environments in \Cref{app:additional_results}. Since \AlgoNameShort tracks the candidate policies, it generates the candidate queries rolling out the currently optimal policy \textit{and} the candidate policies. However, the baselines do not have access to the candidate policies, and therefore consider a smaller set of potential queries. For a fair comparison, we perform an ablation where \AlgoNameShort does not consider the candidate policies to generate candidate queries. Since \AlgoNameShort maintains $3$ (additional) candidate policies, it is roughly $4$ times slower (about $80$ hours on a single GPU) than the baselines (about $20$ hours on a single GPU). 

\paragraph{Results.}
\Cref{fig:results_deep_rl} shows that \AlgoNameShort on average learns good policies significantly faster than the baselines. The individual results in each environment (in \Cref{app:additional_results}) are more nuanced. \AlgoNameShort clearly outperforms the baselines in some environments (e.g., \emph{Hopper-v3}), performs comparable in other environments (e.g., \emph{Walker2d-v3}), and performs worse than uniform sampling in a few environments (e.g., \emph{HalfCheetah-v3}). Also while mostly using the candidate policy rollouts improves the performance of IDRL, this is not always the case (e.g., in \emph{Swimmer-v3} the ablation performs better). This indicates that much of the variance might be caused by which queries are considered, which could be improved by using other exploration strategies than the candidate policies to generate candidate queries. Crucially, these experiments demonstrate that \AlgoNameShort is scalable to high-dimensional, complex tasks, while still improving sample efficiency over existing methods for such tasks.

\section{Conclusion}\label{sec:conclusion}

We studied the problem of actively learning reward function models using as few expert queries as possible. We introduced \emph{\AlgoNameLong} (\AlgoNameShort), a novel information-theoretic algorithm that focuses on learning a good policy rather than attaining a low approximation error of the reward and that, differently from most prior methods, works with multiple types of feedback. We show it needs significantly fewer queries than prior methods and that it scales to complex environments.

\paragraph{Limitations and future work.} The main practical limitation of $\AlgoNameShort$ is its computational cost. We demonstrated how to scale $\AlgoNameShort$ to complex environments, increasing the runtime by only a constant factor. While \AlgoNameShort is still more demanding than most existing algorithms, it is preferable in situations where better sample efficiency is more important than low computational cost.

Our problem setup also has some conceptual limitations. We assume that interactions with the environment are cheap, which is not the case in many applications. Future work could aim to achieve low sample complexity in terms of environment interactions as well as reward queries. Moreover, we assume that the goal of RL is to learn a good policy in a single environment, which does not consider the problem of generalizing to other environment. In fact, being designed to learn a good policy in a single environment, \AlgoNameShort might not be best for learning a reward model that generalizes well. To address this, future versions of IDRL could aim to learn a reward model that leads to good policies over a \emph{distribution} of environments instead of a single environment.

Overall, we consider \AlgoNameShort an addition to the set of existing active reward learning algorithms rather than a replacement of existing methods.

\paragraph{Broader impact.} \AlgoNameShort improves the sample efficiency of learning reward models, which is a step towards making RL a viable solution for real-world problems. RL systems can be used in various ways, and they could cause risks from malicious actors \citep{brundage2018malicious}. However, overall, learning reward models is likely to help in making RL more robust and safe \citep{leike2018scalable}.

Improving the sample efficiency of learning reward models, is crucial for making RL more useful. By addressing this problem, \AlgoNameShort takes a step towards making RL a more viable solution for real-world problems.

\section*{Acknowledgements}
This research was supported through the Microsoft Swiss Joint Research Center.
We thank Johannes Kirschner and Jonas Rothfuss for valuable feedback on an earlier version of this paper, and Nils Wilde for valuable comments about the Maximum Regret approach to reward learning.

\newcommand{\ICML}{Proceedings of International Conference on Machine Learning (ICML)}
\newcommand{\RSS}{Proceedings of Robotics: Science and Systems (RSS)}
\newcommand{\NeurIPS}{Advances in Neural Information Processing Systems}
\newcommand{\IJCAI}{Proceedings of International Joint Conferences on Artificial Intelligence}
\newcommand{\ICLR}{International Conference on Learning Representations (ICLR)}
\newcommand{\CoRL}{Conference on Robot Learning (CoRL)}
\newcommand{\UAI}{Uncertainty in Artificial Intelligence}

\bibliography{references}
\bibliographystyle{plainnat}

\clearpage
\appendix

\section{Proofs of propositions}\label{app:proofs}

In this section, we provide proofs of all results mentioned in the main paper. The results generally follow from well-known facts about Gaussian distributions and information theory.

\begin{restatable}{proposition}{ReturnDiffGaussian}\label{ReturnDiffGaussian}
If $\RfunBelief(s)$ is a GP, the difference in expected return between two fixed policies $\p, \p'$ follows a Gaussian distribution $\cN(\mu, \sigma^2)$ with
\begin{align*}
\mu &= \bbE[\ExpRetBelief{\p} - \ExpRetBelief{\p'}] = \VectorProd{\StateVisitDirection{\p}{\p'}}{\mu_{\hat{\vr}}}\\
\sigma^2 &= \Var[\ExpRetBelief{\p} - \ExpRetBelief{\p'}] = \StateVisitDirection{\p}{\p'}^T \cdot \Sigma_{\hat{\vr}} \cdot \StateVisitDirection{\p}{\p'}
\end{align*}
where $\StateVisitDirection{\p}{\p'} = \StateVisit{\p} - \StateVisit{\p'}$ is the difference between expected state-visitation frequencies of $\p$ and $\p'$ respectively, and $\mu_{\hat{\vr}}$ and $\Sigma_{\hat{\vr}}$ are the mean and covariance of the joint Gaussian distribution of the reward of all states $\p$ or $\p'$ visit.
\end{restatable}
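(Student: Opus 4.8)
The plan is to reduce the claim to the elementary fact that a linear image of a multivariate Gaussian is again Gaussian. First I would recall from the main text that the expected return is a \emph{linear functional} of the reward, $\ExpRetBelief{\p} = \VectorProd{\StateVisit{\p}}{\hat{\vr}}$, so that the quantity of interest can be written as a single inner product
\begin{align*}
\ExpRetBelief{\p} - \ExpRetBelief{\p'} = \VectorProd{\StateVisit{\p}}{\hat{\vr}} - \VectorProd{\StateVisit{\p'}}{\hat{\vr}} = \VectorProd{\StateVisitDirection{\p}{\p'}}{\hat{\vr}},
\end{align*}
where $\StateVisitDirection{\p}{\p'} = \StateVisit{\p} - \StateVisit{\p'}$ is a fixed, deterministic vector because the two policies are held fixed. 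The only randomness left is in $\hat{\vr}$.

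Second, I would restrict attention to the finite collection of states that either $\p$ or $\p'$ visits with nonzero frequency, so that $\StateVisitDirection{\p}{\p'}$ and $\hat{\vr}$ live in the same finite-dimensional space. By the defining property of a Gaussian process, the vector $\hat{\vr}$ of rewards at these states is jointly Gaussian, $\hat{\vr} \sim \cN(\mu_{\hat{\vr}}, \Sigma_{\hat{\vr}})$. Then $\VectorProd{\StateVisitDirection{\p}{\p'}}{\hat{\vr}}$ is a deterministic linear combination of jointly Gaussian random variables, hence Gaussian. Computing its first two moments via linearity of expectation and bilinearity of covariance yields $\mu = \VectorProd{\StateVisitDirection{\p}{\p'}}{\mu_{\hat{\vr}}}$ and $\sigma^2 = \StateVisitDirection{\p}{\p'}^T \Sigma_{\hat{\vr}} \StateVisitDirection{\p}{\p'}$, which are exactly the claimed expressions; since a Gaussian is determined by its mean and variance, this finishes the argument.

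The main obstacle — really the only subtlety — is justifying the finite-dimensional reduction when the state space is large or continuous, or the horizon is infinite, so that $\StateVisit{\p}$ has infinite support and the inner product becomes a series or integral against the GP. I would handle this by either (i) assuming, as elsewhere in the paper, that the visitation frequencies have finite support (finite state space, or effectively truncated discounted horizon), in which case the reduction above is immediate; or (ii) in the general case, appealing to the fact that a convergent linear functional of a GP is again Gaussian, obtained as a mean-square limit of the finite-dimensional Gaussian marginals, provided $\StateVisitDirection{\p}{\p'}^T \Sigma_{\hat{\vr}} \StateVisitDirection{\p}{\p'} < \infty$. For the purposes of this proposition I expect option (i) to suffice, since the downstream algorithm operates on empirically estimated, finitely supported visitation vectors $\StateVisit{\p}$.
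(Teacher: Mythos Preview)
Your proposal is correct and follows essentially the same approach as the paper: write $\ExpRetBelief{\p} - \ExpRetBelief{\p'} = \VectorProd{\StateVisitDirection{\p}{\p'}}{\hat{\vr}}$ and invoke the fact that a linear image of a multivariate Gaussian is Gaussian with the stated mean and variance. The paper's proof is a two-line application of this fact (citing Wasserman) and does not discuss the infinite-support subtlety you raise; your option (i) is exactly the implicit assumption the paper makes.
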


\begin{proof}
If a random variable $X$ is Gaussian distributed $X \sim \cN(\vmu, \Sigma)$, $\vmu \in \R^n, \Sigma \in \R^{n \times n}$, then for $\va \in \R^n$, $\va^T X$ is also Gaussian distributed $\va^T X \sim \cN(\VectorProd{\va}{\vmu}, \va^T \Sigma \va)$ \citep[][Theorem 14.2]{wasserman2004all}.

We can directly apply this fact to $\hat{\vr} \sim \cN(\mu_{\hat{\vr}}, \Sigma_{\hat{\vr}})$ and  $\ExpRetBelief{\p} - \ExpRetBelief{\p'} = \VectorProd{\StateVisitDirection{\p}{\p'}}{\hat{\vr}}$, resulting in
\[
\ExpRetBelief{\p} - \ExpRetBelief{\p'} \sim \cN(\VectorProd{\StateVisitDirection{\p}{\p'}}{\mu_{\hat{\vr}}}, \StateVisitDirection{\p}{\p'}^T \Sigma_{\hat{\vr}} \StateVisitDirection{\p}{\p'}).
\]
\end{proof}

\IDRLGP*
\begin{proof}
If a random variable $X$ is Gaussian distributed $X \sim \cN(\mu, \sigma^2)$, then the entropy $\Entropy{X}$ is given by \citep[][Theorem 8.4.1]{cover2006elements}
\begin{align}
\Entropy{X} = \frac12 \log (2\pi\eup \sigma^2).
\end{align}
\Cref{ReturnDiffGaussian} shows that the conditional distribution of $\ExpRetBelief{\p} - \ExpRetBelief{\p'} | \Dataset$ is Gaussian, which implies both statements.

For the first statement, observe that the entropy of $\ExpRetBelief{\p} - \ExpRetBelief{\p'} | \Dataset$ is
\begin{align}
\Entropy{\ExpRetBelief{\p} - \ExpRetBelief{\p'} | \Dataset} = \frac12 \log (2\pi\eup \Var[\ExpRetBelief{\p} - \ExpRetBelief{\p'} | \Dataset]),
\end{align}
and that two policies that maximize the variance on the r.h.s. also maximize the entropy, because the logarithm is a monotonic function.

\newcommand{\PolicyDiff}{\hat{\Delta}_{\p_1,\p_2}}
\newcommand{\DatasetNew}{\Dataset \cup \{ (\Query, \EvidenceBelief) \}}
To see the second statement, let $\PolicyDiff = \ExpRetBelief{\p_1} - \ExpRetBelief{\p_2}$. Then
\begin{align*}
&\argmax_{\Query \in \CandStates} I(\PolicyDiff; (\Query, \EvidenceBelief) | \Dataset) \\
= &\argmax_{\Query \in \CandStates} \rbr{\ConditionalEntropy{\PolicyDiff}{\Dataset} - \ConditionalEntropy{\PolicyDiff}{\DatasetNew}} \\
= &\argmin_{\Query \in \CandStates} \ConditionalEntropy{\PolicyDiff}{\DatasetNew} \\
= &\argmin_{\Query \in \CandStates} \frac12 \log \rbr{2\pi\mathrm{e} \Var[\PolicyDiff | \DatasetNew]} \\
= &\argmin_{\Query \in \CandStates} \Var[\PolicyDiff | \DatasetNew].
\end{align*}
Here we wrote the information gain in terms of conditional entropies \citep[][Theorem 2.4.1]{cover2006elements}, and used that only one of the terms depends on $\Query$. This turns the maximization of information gain into a minimization of a conditional entropy.
As before, we can further simplify this to minimizing conditional variance by using the entropy of a Gaussian and the fact that the logarithm is a monotonic function.
\end{proof}

\GPLinearObservations*
\begin{proof}

Let $\Query = (\linstates, \linfactors)$ be a linear reward query, i.e., $\linstates = \{s_{1}, \dots, s_{N}\} \subseteq \sset$ is a set of states and $\linfactors = \{\linfac_{1}, \dots, \linfac_{N}\}$ a set of linear weights, and $\Evidence = \sum_{j=1}^N \linfac_{j} \Rfun(s_{j})$ the corresponding observation.

Let $S^* = \{s_1^*, \dots, s_n^*\} \subseteq \sset$ be a set of states for which we want to compute the posterior belief. We show that
\[
P(\RfunBelief(s_1^*), \dots, \RfunBelief(s_n^*) | (\Query, \Evidence)) \sim \cN(\vmu^*_{\Query}, \Sigma^*_{\Query})
\]
for some $\vmu^*_{\Query}$ and $\Sigma^*_{\Query}$. Because this holds for any set of states $S^*$, it shows that the posterior reward model is a GP.

We define the following vector notation:
\begin{align*}
    \vc &= ( \linfac_{1}, \dots, \linfac_{N} )^T \in \R^{N} \\
    \hat{\vr} &= ( \RfunBelief(s_{1}), \dots, \RfunBelief(s_{N}) )^T \in \R^{N} \\
    \hat{\vr}^* &= ( \RfunBelief(s_{1}^*), \dots, \RfunBelief(s_{n}^*) )^T \in \R^{n}
\end{align*}
such that $\Evidence = \VectorProd{\vc}{\hat{\vr}}$.

The prior distribution of $\hat{\vr}$ is Gaussian, i.e.,
\begin{align*}
P(\hat{\vr} | \linstates) \sim \cN(\vmu, \Sigma),
\end{align*}
with mean $\vmu$ and covariance $\Sigma$.

Because $\EvidenceBelief$ is a linear function of $\hat{\vr}$ plus Gaussian noise, the prior distribution of $\EvidenceBelief$ is also Gaussian \citep[][Theorem 14.2]{wasserman2004all}:
\begin{align*}
P(\EvidenceBelief | \linstates, \linfactors) \sim \cN(\VectorProd{\vc}{\vmu}, \vc^T \Sigma \vc + \sigma_n^2 I).
\end{align*}

Further, $\hat{\vr}$ and $\hat{\vr}^*$ are jointly Gaussian distributed:
\begin{align*}
P\left (\left[ \begin{smallmatrix} \hat{\vr} \\ \hat{\vr}^* \end{smallmatrix} \right] | \linstates, S^* \right) \sim \cN \left( \left[ \begin{smallmatrix} \vmu \\ \vmu^* \end{smallmatrix} \right],  \left[ \begin{smallmatrix}
     \Sigma & \Sigma^*  \\
     (\Sigma^*)^T & \Sigma^{**}
\end{smallmatrix} \right] \right)
\end{align*}
where $\vmu^*$ is the mean of $\hat{\vr}^*$, and $\Sigma^{*} = \Cov[\hat{\vr},\hat{\vr}^*]$ and $\Sigma^{**} = \Cov[\hat{\vr}^*,\hat{\vr}^*]$ denote the components of the joint covariance matrix.

Hence, $\hat{\vr}^*$ and $\EvidenceBelief$ are also jointly Gaussian distributed:
\begin{align*}
P\left (\left[ \begin{smallmatrix} \hat{\vr}^* \\ \EvidenceBelief \end{smallmatrix} \right] | \linstates, \linfactors, S^* \right) \sim \cN \left( \left[ \begin{smallmatrix} \vmu^* \\ \VectorProd{\vc}{\vmu} \end{smallmatrix} \right],  \left[ \begin{smallmatrix}
     \Sigma^{**} & (\Sigma^*)^T \vc \\
     \vc^T \Sigma^* & \vc^T \Sigma \vc + \sigma_n^2 I
\end{smallmatrix} \right] \right)
\end{align*}
where we used the linearity of the covariance function to find the covariance matrix:
\begin{align*}
\Cov[\EvidenceBelief, \hat{\vr}^*] &= \Cov[\VectorProd{\vc}{\hat{\vr}}, \hat{\vr}^*] \\
&= \vc^T \Cov[\hat{\vr}, \hat{\vr}^*] = \vc^T \Sigma^* \\
\Cov[\hat{\vr}^*, \EvidenceBelief] &= (\Sigma^*)^T \vc
\end{align*}

Finally, we can use standard results on conditioning Gaussian distributions \citep[cf.][Chapter A.2]{rasmussen2006gaussian} to find that the conditional distribution is still Gaussian:
\begin{align*}
P(\hat{\vr}^* | (\Query, \Evidence)) = P(\hat{\vr}^* | \Evidence, \linstates, \linfactors, S^*) \sim \cN(\vmu^*_{\Query}, \Sigma^*_{\Query})
\end{align*}
with
\begin{align*}
\vmu^*_{\Query} &= \vmu^* + ((\Sigma^*)^T \vc) \inv{(\vc^T \Sigma \vc + \sigma_n^2 I)} (\Evidence - \VectorProd{\vc}{\vmu}) \\
\Sigma^*_{\Query} &= \Sigma^{**} - ((\Sigma^*)^T \vc) \inv{(\vc^T \Sigma \vc + \sigma_n^2 I)} (\vc^T \Sigma_*).
\end{align*}

When conditioning the distribution, we replaced our belief about the observation $\EvidenceBelief$ with its actual realization $\Evidence$.
\end{proof}

\section{Linear reward queries}\label{app:linear_reward_queries}

We consider linear reward queries for our implementation of \AlgoNameShort with GP models, which makes all computations analytically tractable (cf. \Cref{sec:idrl_gp}). Linear reward queries can be used to model many different observation types that are typical in practical settings. In this section, we recall the definition of linear reward queries, and then present a few particularly common types of linear reward queries, which include all query types used in our empirical evaluation of \AlgoNameShort with a GP model.

\DefLinearObservations*

\paragraphsmall{Single state rewards.} If $N=1$, a query consists of a single state $\Query_i = s_i \in \sset$, for which the expert provides a noisy reward $\Evidence = \Rfun(s_i) + \varepsilon$.

\paragraphsmall{Return of trajectories.}
For $N > 1$ and all $c_{i} = 1$, the agent observes the sum of rewards of multiple states. The set $\linstates$ could, e.g., contain the states in a trajectory or a sub-sequence of it. Then, the queries ask about the return, i.e., sum of rewards, of this sequence or states.

\paragraphsmall{Comparisons of states and trajectories.}
We can model a comparison of the reward in states $s_a$ and $s_b$ by defining $S = \{s_a, s_b\}$ and defining $\linfactors = \{1, -1\}$. Then the agent might observe $\Evidence = \Rfun(s_a) - \Rfun(s_b)$. In practice, comparison queries usually result in binary feedback, i.e., the expert states that either $s_a$ or $s_b$ is prefered. We can model this, e.g., with a Bernoulli distribution $P(\Evidence = 1) = (1+\Rfun(s_a)-\Rfun(s_b))/2$, if all rewards are between $0$ and $1$. The observations from this distribution have expectation $\Rfun(s_a) - \Rfun(s_b)$ and the noise model is subgaussian, which we can approximate with a Gaussian noise distribution \citep[cf.][]{kirschner2020information}. Hence, we can model such comparison queries as linear reward queries. We can model comparisons between two sets of states, e.g., between two trajectories, analogously. Other observation models for comparisons have been proposed in the literature, such as softmax \citep{dorsa2017active}, probit \citep{biyik2020active} or Bernoulli distributions with constant probability \citep{wilde2020active}. While we focus on linear observations, \AlgoNameShort could be extended to these alternatives by using approximate inference to update the reward model, similar to \citet{biyik2020active}.
\section{Connection to multi-armed bandits}\label{app:connection_TLB}

In the main paper, we motivated \AlgoNameShort from information-theoretic considerations. However, there are close connections to related algorithms in multi-armed bandits (MAB) that can serve as additional motivation.

Efficient exploration is extensively studied in MAB problems \citep{bubeck2012regret}. Recent work successfully uses decision criteria based on information gain in various MAB problems \citep{russo2014learning}. However, our setting is no standard MAB problem, because we do not directly observe the quantity we are optimizing for, i.e., the return of a policy.

In this section, we discuss two settings that are more closely related to our setting: the \emph{linear partial monitoring} problem and \emph{transductive linear bandits}.

\subsection{Linear partial monitoring}
Our setting is closely related to partial monitoring problems, which generalize the standard MAB to cases where the agent's observations provide only indirect information about the reward \citep{rustichini1999minimizing}.
For tabular MDPs, our setting can be interpreted as a linear partial monitoring problem. Let $\vr$ be a vector of all rewards in a tabular MDP. We consider observations that are a linear function of the rewards $\Rfun(s_i) = \VectorProd{\vc}{\vr}$, and the optimization target is also a
linear function of the reward vector $\ExpRet{\p} = \VectorProd{\StateVisit{\p}}{\vr}$. \citet{kirschner2020information} analyze linear partial monitoring problems and propose an information gain based criterion for selecting observations. One criterion they propose to measure information gain, called \emph{directed information gain}, is equivalent to our information gain criterion \citep[App. B.2]{kirschner2020information}. However, they consider cumulative regret minimization, and, therefore, their algorithm has to trade-off the information gain of an observation with its expected regret. In our setting, minimizing cumulative regret would correspond to maximizing $\sum_{t=1}^T \ExpRet{\bar{\p}_t}$, where $\bar{\p}_t$ is the policy that \AlgoNameShort returns if it is stopped after $t$ iterations. Instead, we just evaluate the final policy and aim to maximize $\ExpRet{\bar{\p}_T}$. Consequently, our algorithm directly uses directed information gain as a selection criterion.

\subsection{Transductive linear bandits}
Our setting is a \emph{pure exploration} problem \citep{bubeck2009pure}: we only evaluate the performance of the final policy after a fixed budget of queries and not the intermediary policies. Our problem is closely related to pure exploration in \emph{transductive linear bandits} that consider maximizing a linear reward function in a set $\cZ$ by making queries in a potentially different set $\cX$ \citep{fiez2019sequential}. In fact, for a tabular MDP, our problem is a special case of the transductive linear bandit setting. Moreover, we can understand \AlgoNameShort as an adaptive version of the RAGE algorithm introduced by \citet{fiez2019sequential}.

To see the connection between both settings, let us first define the transductive linear bandit problem.

\begin{definition}[\citeauthor{fiez2019sequential}, \citeyear{fiez2019sequential}]
A \emph{transductive linear bandit problem} is defined by two sets $\cX \subset \R^d$ and $\cZ \subset \R^d$, where the goal is to find $\argmax_{\vz\in \cZ} \VectorProd{\vz}{\vtheta^\ast}$ for some hidden parameter vector $\vtheta^\ast \in \R^d$. However, instead of observing this objective directly, the learning agent interacts with the bandit at each time-step by selecting an arm $\vx \in \cX$ to play, and then observing $\VectorProd{\vx}{\vtheta^{\ast}} + \eta$ where $\eta$ is independent, zero-mean, subgaussian noise. The agent's goal is to find the maximum in $\cZ$ by making as few queries in $\cX$ as possible.
\end{definition}

\begin{restatable}{proposition}{ObsTLBSpecialCase}
For finite state and action spaces, a fixed set of candidate policies $\CandPol$ and a set of linear reward queries $\CandStates$, our reward learning problem is a transductive linear bandit problem with $\cZ = \{\StateVisit{\p} | \p \in \CandPol\} \subset \R^{|\sset|}$ and $\cX = \{\vc_i | i \in \{1, \dots, | \CandStates |\}\} \subset \R^{|\sset|}$ a set of linear observations.
\end{restatable}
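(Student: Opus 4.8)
The plan is to exhibit an explicit dictionary between the ingredients of our reward learning problem and those of a transductive linear bandit, and then to check that the objective and the observation model agree under this dictionary. I would set the ambient dimension to $d = |\sset|$ and identify the hidden parameter with the true reward vector, $\vtheta^\ast = \vr$, so that the quantity we ultimately wish to optimize, the expected return $\ExpRet{\p} = \VectorProd{\StateVisit{\p}}{\vr}$, is already a linear functional of $\vtheta^\ast$.

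First I would build the target set $\cZ$. Because the state and action spaces are finite, each candidate policy $\p \in \CandPol$ has a well-defined (discounted) expected state-visitation frequency vector $\StateVisit{\p} \in \R^{|\sset|}$, so $\cZ = \{\StateVisit{\p} \mid \p \in \CandPol\}$ is a finite subset of $\R^{|\sset|}$. Using $\ExpRet{\p} = \VectorProd{\StateVisit{\p}}{\vr}$, the bandit objective $\argmax_{\vz \in \cZ} \VectorProd{\vz}{\vtheta^\ast}$ is attained at $\StateVisit{\p^\ast}$ for any $\p^\ast \in \argmax_{\p \in \CandPol} \ExpRet{\p}$, so identifying the best arm in $\cZ$ is exactly identifying the best candidate policy.

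Next I would build the query set $\cX$. Each linear reward query $\Query_i = (\linstates_i, \linfactors_i)$, with $\linstates_i = \{s_1, \dots, s_{N_i}\}$ and weights $\linfactors_i = \{\linfac_1, \dots, \linfac_{N_i}\}$, has response $\Evidence_i = \sum_{j=1}^{N_i} \linfac_j \Rfun(s_j) + \varepsilon$. I would gather these sparse weights into a dense vector $\vc_i \in \R^{|\sset|}$ whose coordinate at state $s$ equals $\sum_{j : s_j = s} \linfac_j$, so that $\Evidence_i = \VectorProd{\vc_i}{\vr} + \varepsilon = \VectorProd{\vc_i}{\vtheta^\ast} + \varepsilon$. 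Setting $\cX = \{\vc_i \mid i \in \{1, \dots, |\CandStates|\}\}$, playing arm $\vc_i$ produces precisely the transductive bandit observation $\VectorProd{\vc_i}{\vtheta^\ast} + \varepsilon$; and since $\varepsilon \sim \cN(0, \sigma_n^2)$ is Gaussian, hence zero-mean subgaussian, the noise model matches the one in the definition.

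Finally I would observe that the two objectives coincide: both problems seek the maximizer over the target set while making as few queries (arm pulls) as possible, which completes the identification. The main obstacle — the only step beyond matching definitions — is the weight-gathering construction of $\vc_i$: I must verify that collapsing the sparse state-weight representation into a single vector in $\R^{|\sset|}$, summing the weights of any state queried more than once, leaves the induced linear functional of $\vr$ unchanged, so that the two observation models are genuinely identical. Everything else follows directly from the definitions together with the facts, already established earlier, that $\ExpRet{\p}$ is linear in $\vr$ and that Gaussian noise is subgaussian.
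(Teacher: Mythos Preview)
Your proposal is correct and follows essentially the same approach as the paper: identify $\vtheta^\ast$ with the reward vector $\vr$, read off $\cZ$ from the state-visitation frequencies via $\ExpRet{\p} = \VectorProd{\StateVisit{\p}}{\vr}$, and read off $\cX$ from the linear query weights via $\Evidence_i = \VectorProd{\vc_i}{\vr} + \varepsilon$. The paper's own argument is a two-line remark making exactly this identification; your version is simply more explicit (in particular, the weight-gathering for repeated states and the subgaussian noise check are details the paper leaves implicit).
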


To see this, note that our goal is to maximize $\ExpRet{\p} = \VectorProd{\StateVisit{\p}}{\vr}$ and we query linear combinations of rewards in each round $\VectorProd{\vc_{i_t}}{\vr}$. Here $i_t$ is the index of the query that the agent selects at time $t$, and $\vc_{i_t}$ is a vector of linear weights that defines query $\Query_{i_t}$.

To understand the connection between \AlgoNameShort and the RAGE algorithm proposed by \citet{fiez2019sequential}, it is helpful to assume that the reward function is a linear function of some features of the state, and to use a linear kernel for the GP model, which is equivalent to Bayesian linear regression.

Let $\vphi: \sset \to \R^d$ be a feature function, and the true reward function $r(s) = \VectorProd{\vphi(s)}{\vtheta^*}$. Similarly, we can define a feature vector for each query $\Query \in \CandStates$ and overload the notation $\vphi(\Query) = \sum_{i=1}^N \linfactors_i \vphi(s_i)$. Also, we can write the expected return of a policy as $\ExpRet{\p} = \VectorProd{\StateVisit{\p}_{\phi}}{\vtheta^*}$ with $\StateVisit{\p}_{\phi} = (\StateVisit{\p})^T \Phi$ and $\Phi = (\vphi(s_1), \dots, \vphi(s_{|\sset|}))^T$.

To solve the transductive linear bandit problem, the RAGE algorithm proceeds in multiple rounds, in each of which it follows an allocation rule
\begin{align}\label{eq:rage_selection2}
\design_{t}^{\ast} &= \argmin_{\design \in \simplex_{\cX}}\max_{\p_1, \p_2 \in \calzhat_{t}} \| \StateVisit{\p_1}_{\phi} - \StateVisit{\p_2}_{\phi} \|_{\inv{A_\lambda}}^2
\end{align}
where $A_\lambda = \sum_{\Query_i \in \CandStates} \lambda_i \vphi(\Query_i) \vphi(\Query_i)^T$, and where $\simplex_{\cX}$ is the probability simplex over candidate queries, so this rule would select query $\Query_i$ at round $t$ with probability $(\design_{t}^{\ast})_i$.
Additionally, RAGE keeps track of a set of plausibly optimal arms $\calzhat_t$, i.e., plausibly optimal policies in our case. RAGE ensures that the suboptimality gap of arms in this set shrinks exponentially as the algorithm proceeds.

The next proposition provides an alternative notation for \AlgoNameShort that shows a formal similarity to RAGE.
\begin{restatable}{proposition}{ObsTLBConnection}\label{ObsTLBConnection}
Assume we estimate $\hat{\vtheta}$ with Bayesian linear regression with noise variance $\sigma^2$, and prior $\hat{\vtheta} \sim \cN(0, \inv{\alpha} I)$ after collecting data
\[
\Dataset = ((\vphi(\Query_{i_1}), \Evidence_{i_1}), \dots, (\vphi(\Query_{i_{t-1}}), \Evidence_{i_{t-1}})).
\]
Also, assume an infinitely wide prior $\inv{\alpha} \to \infty$.

We can then write the maximization in the first step of \AlgoNameShort as
\begin{align*}
\argmax_{\p, \p' \in \CandPol} \ConditionalEntropy{\ExpRetBelief{\p} - \ExpRetBelief{\p'}}{\Dataset}
= \argmax_{\p, \p' \in \CandPol} \| \StateVisit{\p}_{\phi} - \StateVisit{\p'}_{\phi} \|_{\inv{A_\Dataset}}^2
\end{align*}
where $A_\Dataset = \sum_{\Query_i \in \CandStates} N_i \vphi(\Query_i) \vphi(\Query_i)^T$.

Furthermore, for a given pair of policies, $\p_1$ and $\p_2$, we can write the maximization in the second step of \AlgoNameShort as
\begin{align*}
&\argmax_{\Query \in \CandStates} I(\ExpRetBelief{\p_1} - \ExpRetBelief{\p_2}; (\Query, \EvidenceBelief) | \Dataset) \\
= &\argmin_{\Query \in \CandStates} \| \StateVisit{\p_1}_{\phi} - \StateVisit{\p_2}_{\phi} \|_{\inv{A_{\Dataset,\Query}}}^2
\end{align*}
where $A_{\Dataset,\Query} = \vphi(\Query)\vphi(\Query)^T + \sum_{\Query_i \in \CandStates} N_i \vphi(\Query_i) \vphi(\Query_i)^T$ and $N_i$ is the number of times $\Query_i$ occurs in $\Dataset$.
\end{restatable}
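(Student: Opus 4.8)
The plan is to reduce both identities to computations of the posterior variance of the return difference, and then to evaluate that variance explicitly for Bayesian linear regression in the improper-prior limit. I would begin by invoking \Cref{IDRLGP}: it already establishes that the conditional entropy in the first step of \AlgoNameShort is maximized by the same pair of policies that maximizes $\Var[\ExpRetBelief{\p} - \ExpRetBelief{\p'} \mid \Dataset]$, and that the information gain in the second step is maximized by the query that \emph{minimizes} $\Var[\ExpRetBelief{\p_1} - \ExpRetBelief{\p_2} \mid \Dataset \cup \{(\Query, \EvidenceBelief)\}]$. Hence it suffices to show that each of these posterior variances is a fixed positive multiple of the corresponding weighted norm, after which the multiplicative constant drops out of the $\argmax$ and $\argmin$.

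Next I would write the return difference as a linear functional of the parameter vector. Since $\ExpRet{\p} = \VectorProd{\StateVisit{\p}_{\phi}}{\vtheta^*}$, the belief satisfies $\ExpRetBelief{\p} - \ExpRetBelief{\p'} = \VectorProd{\StateVisit{\p}_{\phi} - \StateVisit{\p'}_{\phi}}{\hat{\vtheta}}$. Applying the same linear-transformation-of-a-Gaussian fact used in the proof of \Cref{ReturnDiffGaussian}, its posterior variance equals $(\StateVisit{\p}_{\phi} - \StateVisit{\p'}_{\phi})^T \Sigma_{\hat{\vtheta}} (\StateVisit{\p}_{\phi} - \StateVisit{\p'}_{\phi})$, where $\Sigma_{\hat{\vtheta}}$ is the posterior covariance of $\hat{\vtheta}$. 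The whole argument therefore hinges on identifying $\Sigma_{\hat{\vtheta}}$.

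The computational core is the standard Bayesian linear regression posterior: with noise variance $\sigma^2$ and prior $\hat{\vtheta} \sim \cN(0, \inv{\alpha} I)$, the posterior covariance is $\Sigma_{\hat{\vtheta}} = \big( \alpha I + \sigma^{-2} \sum_k \vphi(\Query_{i_k})\vphi(\Query_{i_k})^T \big)^{-1}$. Grouping repeated queries into counts $N_i$ and taking the limit $\inv{\alpha} \to \infty$ (equivalently $\alpha \to 0$) sends the prior-precision term to zero, leaving $\Sigma_{\hat{\vtheta}} = \sigma^2 \inv{A_\Dataset}$ for the first step, and $\Sigma_{\hat{\vtheta}} = \sigma^2 \inv{A_{\Dataset,\Query}}$ once the candidate query $\Query$ is appended to the data. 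Substituting back gives the variances $\sigma^2 \| \StateVisit{\p}_{\phi} - \StateVisit{\p'}_{\phi} \|_{\inv{A_\Dataset}}^2$ and $\sigma^2 \| \StateVisit{\p_1}_{\phi} - \StateVisit{\p_2}_{\phi} \|_{\inv{A_{\Dataset,\Query}}}^2$; since $\sigma^2 > 0$ does not depend on the optimization variable, it drops out of both the $\argmax$ and the $\argmin$, yielding exactly the claimed identities.

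I expect the only real obstacle to be technical rather than conceptual: the limit $\alpha \to 0$ is well-defined only once $A_\Dataset$ (and $A_{\Dataset,\Query}$) is invertible, which requires the collected feature vectors to span $\R^d$. I would flag this non-degeneracy assumption explicitly, since for small datasets it is precisely the finite prior $\alpha > 0$ that regularizes the inverse, and the clean correspondence with RAGE's design objective in \Cref{eq:rage_selection2} emerges only in this improper-prior, full-rank regime.
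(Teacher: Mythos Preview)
Your proposal is correct and follows essentially the same approach as the paper: reduce to posterior variances via \Cref{IDRLGP}, write the return difference as a linear functional of $\hat{\vtheta}$, compute the Bayesian linear regression posterior covariance, take the $\alpha\to 0$ limit to obtain $\sigma^2\inv{A_\Dataset}$ (respectively $\sigma^2\inv{A_{\Dataset,\Query}}$), and drop the constant $\sigma^2$. Your explicit remark about the invertibility of $A_\Dataset$ in the improper-prior limit is a nice addition that the paper's proof leaves implicit.
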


\begin{proof}
In the Bayesian linear regression setting \citep[cf.][Chapter 3.3]{bishop2006pattern} with prior weight distribution $\vw \sim \cN(\mathbf{0}, \inv{\alpha})$, the posterior weight distribution is a Gaussian with covariance matrix
\begin{align*}
\Sigma_{\theta} &= \inv{\rbr{\alpha I + \sigma^{-2} \sum_{\Query \in \Dataset} \vphi(\Query) \vphi(\Query)^T }} \\
&= \inv{\rbr{\alpha I + \sigma^{-2} A_\Dataset }} \\
A_\Dataset &= \sum_{\Query \in \Dataset} \vphi(\Query) \vphi(\Query)^T \\
&= \sum_{\Query_i \in \CandStates} N_i \vphi(\Query_i) \vphi(\Query_i)^T
\end{align*}
For an infinitely wide prior ($\inv{\alpha} \to \infty$): $\Sigma_{\theta} \to \sigma^2 \inv{A_\Dataset}$.

Using the linear mapping from $\hat{\vtheta}$ to the expected return of a policy $\ExpRetBelief{\p}$, the posterior variance of the difference in return between two policies is
\begin{align} \label{eq:varG1G2_expansion}
\Var[\ExpRetBelief{\p_1} - \ExpRetBelief{\p_2} | \Dataset] = \sigma^2 (\StateVisitDirection{\p_1}{\p_2}^{\phi})^T \inv{A_\Dataset} \StateVisitDirection{\p_1}{\p_2}^{\phi}
\end{align}
where $\StateVisitDirection{\p_1}{\p_2}^{\phi} = \StateVisit{\p_1}_{\phi} - \StateVisit{\p_2}_{\phi}$

The first part of the statement follows using \Cref{IDRLGP}:
\begin{align*}
&\argmax_{\p, \p' \in \CandPol} \ConditionalEntropy{\ExpRetBelief{\p} - \ExpRetBelief{\p'}}{\Dataset} \\
= &\argmax_{\p, \p' \in \CandPol} \Var[\ExpRetBelief{\p} - \ExpRetBelief{\p'} | \Dataset] \tag{ \Cref{IDRLGP}} \\
= &\argmax_{\p, \p' \in \CandPol} \sigma^2 (\StateVisitDirection{\p}{\p'}^{\phi})^T \inv{A_\Dataset} \StateVisitDirection{\p}{\p'}^{\phi} \tag{ \Cref{eq:varG1G2_expansion}} \\
= &\argmax_{\p, \p' \in \CandPol} (\StateVisitDirection{\p}{\p'}^{\phi})^T \inv{A_\Dataset} \StateVisitDirection{\p}{\p'}^{\phi} \\
= &\argmax_{\p, \p' \in \CandPol} \| \StateVisitDirection{\p}{\p'}^{\phi} \|_{\inv{A_\Dataset}} \\
= &\argmax_{\p, \p' \in \CandPol} \| \StateVisit{\p}_{\phi} - \StateVisit{\p'}_{\phi} \|_{\inv{A_\Dataset}}
\end{align*}

After defining
\begin{align*}
A_{\Dataset,\Query} &= A_{\Dataset \cup \{(\Query, \Evidence)\}} \\
&= \vphi(\Query)\vphi(\Query)^T + \sum_{\Query_i \in \CandStates} N_i \vphi(\Query_i) \vphi(\Query_i)^T,
\end{align*}
the second part of the statement follows analogously to the first one after applying \Cref{IDRLGP}:
\begin{align*}
&\argmax_{\Query \in \CandStates} I(\ExpRetBelief{\p_1} - \ExpRetBelief{\p_2}; (\Query, \EvidenceBelief) | \Dataset) \\
= &\argmin_{\Query \in \CandStates} \Var[\ExpRetBelief{\p_1} - \ExpRetBelief{\p_2} | \Dataset \cup \{\Query, \Evidence\}] \tag{Prop. \ref{IDRLGP}} \\
= &\argmin_{\Query \in \CandStates} \sigma^2 (\StateVisitDirection{\p_1}{\p_2}^{\phi})^T \inv{A_{\Dataset,\Query}} \StateVisitDirection{\p_1}{\p_2}^{\phi} \tag{\Cref{eq:varG1G2_expansion}}\\
= &\argmin_{\Query \in \CandStates} (\StateVisitDirection{\p_1}{\p_2}^{\phi})^T \inv{A_{\Dataset,\Query}} \StateVisitDirection{\p_1}{\p_2}^{\phi} \\
= &\argmin_{\Query \in \CandStates} \| \StateVisitDirection{\p_1}{\p_2}^{\phi} \|_{\inv{A_{\Dataset,\Query}}} \\
= &\argmin_{\Query \in \CandStates} \| \StateVisit{\p_1}_{\phi} - \StateVisit{\p_2}_{\phi} \|_{\inv{A_{\Dataset,\Query}}}
\end{align*}
\end{proof}

Comparing this proposition with \cref{eq:rage_selection2} shows a formal similarity between both algorithms.  In particular, we can understand \AlgoNameShort as a version of \cref{eq:rage_selection2} that adapts to the data seen so far and selects the next observation that would minimize this objective. Instead of the matrix $A_\lambda$ that is induced by the allocation rule $\lambda$, \AlgoNameShort computes the variances using $A_\Dataset$, i.e., based on data observed in the past, and using $A_{\Dataset,\Query}$, i.e., evaluating the effect of an additional observation. Additionally, \AlgoNameShort performs two separate optimizations which one can consider as an approximation to the min-max problem in \cref{eq:rage_selection2}, which would be infeasible to evaluate in our setting. Similar to RAGE, \AlgoNameShort keeps track of a set of plausibly optimal policies. However, \AlgoNameShort uses Thompson sampling, while RAGE uses suboptimality gaps to build this set.

\section{Implementation details of \AlgoNameShort with GP reward models}
\label{app:implementation_details_gp}
\label{app:candidate_policies}
\label{app:baselines_details}

In this section we describe our implementation of \AlgoNameShort, the baselines we compare to, and our environments in more detail. For the choice of hyperparameters and additional implementation details we refer to the code of our experiments.

\subsection{Thompson sampling}

For some environments, a set of potentially optimal policies $\CandPol$ might be available. In other cases, we use Thompson sampling (TS) to generate $\CandPol$. Our TS approach is shown in \Cref{alg:ts}. We select $N$ policies by sampling reward functions from the posterior belief of the reward model and then finding optimal policies for them using some RL algorithm ($\mathtt{RL}$ in the pseudocode).

The set of candidate policies $\CandPol$ should be updated regularly during \AlgoNameShort to reflect the current posterior belief on optimal policies. In our experiments, we use $N=5$ policies, and update them in each iteration, if not stated differently in the text.

Depending on the environment, we use different RL algorithms. 
For \emph{Chain}, \emph{Junction}, and \emph{Gridworld} environments, we use an exact solver (using linear programming, or a lookup table of all deterministic policies).
For the \emph{Driver} environment, we use the L-BFGS-B solver provided by \citet{dorsa2017active}. However, we combine it with a lookup table of pre-computed policies to reduce noise, as suggested by \citet{wilde2020active}. Whenever the lookup table contains a policy that is better than the one returned by the solver, we use the policy from the table instead.
In the \emph{MuJoCo} environment, we use \emph{augmented random search} with linear policies \citep{mania2018simple} as $\mathtt{RL}$.

\begin{algorithm}[t]
\begin{algorithmic}
\State $\CandPol \gets \{\}$
\For{$i \in \{1, \dots, N\}$}
    \State sample reward function $\Rfun_i \sim P(\RfunBelief | \Dataset)$
    \State $\pi_i^* \gets \mathtt{RL}(\Rfun_i)$
    \State $\Pi_c \gets \Pi_c \cup \{\pi_i^*\}$
\EndFor
\State \textbf{return} $\Pi_c$
\end{algorithmic}
\caption{Thompson sampling for creating a set of candidate policies $\CandPol$.}\label{alg:ts}
\end{algorithm}

\subsection{Details on the baselines}

\begin{algorithm}[t]
\caption{Generic reward learning algorithm using an acquisition function $u(\Query, \Dataset)$. Our baselines use information gain and expected improvement for $u$. Uniform sampling samples $\Query^*$ uniformly from $\CandStates$ instead of \cref{line:baselines:query}.}
\label{alg:baselines}
\begin{algorithmic}[1]
  \State $\Dataset \gets \{\}$
  \State Initialize reward model with prior distribution $P(\RfunBelief)$
  \While{not converged}
     \State Select a query:
     \State \hspace{\algorithmicindent}$\Query^* \in \argmax_{\Query \in \CandStates} u(\Query, \Dataset)$ \label{line:baselines:query}
     \State Query $\Query^*$ and update reward model:
     \State \hspace{\algorithmicindent}$\Evidence^* \gets$ Response to query $\Query^*$
     \State \hspace{\algorithmicindent}$P(\RfunBelief | \Dataset \cup \{(\Query^*, \Evidence^*)\} ) \propto P(\Evidence^* | \RfunBelief, \Dataset, \Query^*) P(\RfunBelief | \Dataset)$
     \State \hspace{\algorithmicindent}$\Dataset \gets \Dataset \cup \{(\Query^*, \Evidence^*)\}$
  \EndWhile
  \State $\MeanRfun \gets$ mean estimate of the reward model
  \State $\MeanOptPol \gets \mathtt{RL}(\MeanRfun)$
  \State \textbf{return} $\MeanOptPol$
\end{algorithmic}
\end{algorithm}

In this section, we discuss the baselines in more detail. In \Cref{alg:baselines} we present pseudocode for the general reward learning algorithm that all of our baselines implement. They only differ in the choice of acquisition function in \cref{line:baselines:query}. In the following, we discuss the different choices.

\subsubsection{Uniform sampling}
The uniform sampling baseline runs \Cref{alg:baselines} with $\Query^*$ sampled uniformly from $\CandStates$ instead of \cref{line:baselines:query}.

\subsubsection{Information gain on the reward}
Another baseline uses information gain on the reward as acquisition function for \Cref{alg:baselines}, that is $u(\Query, \Dataset) = \ConditionalInformationGain{(\Query, \EvidenceBelief)}{\RfunBelief}{\Dataset}$. Note that we can write this information gain in terms of conditional entropies \citep[][Theorem 2.4.1]{cover2006elements}:
\begin{align}
    \ConditionalInformationGain{(\Query, \EvidenceBelief)}{\RfunBelief}{\Dataset}
    = \ConditionalEntropy{\EvidenceBelief}{\Dataset, \Query} - \ConditionalEntropy{\EvidenceBelief}{\RfunBelief, \Dataset, \Query}. \label{eq:ig_entropies}
\end{align}
If we assume that $q$ is a linear reward query, it is described by a set of states $\linstates = \{s_1, \dots, s_n\}$ and a set of linear weights $\linfactors = \{c_1, \dots, c_n\}$, such that $\EvidenceBelief = \sum_{i=1}^n \linfac_i \RfunBelief(s_i) + \varepsilon$. Then, the second term of \cref{eq:ig_entropies} is constant because $\RfunBelief$ contains all information about $\EvidenceBelief$. Further, the distribution $P(\EvidenceBelief | \cD, \Query)$ is Gaussian, and its entropy is \citep[][Theorem 8.4.1]{cover2006elements}:
\begin{align*}
    \ConditionalEntropy{\EvidenceBelief}{\Dataset, \Query}
    = \frac12 \log \rbr{2\pi\mathrm{e} \Var[\EvidenceBelief | \Dataset, \Query]}.
\end{align*}
Because the logarithm is a monotonic function, we can show, analogously to \Cref{IDRLGP}, that
\begin{align*}
    \argmax_{\Query\in\CandStates} \ConditionalInformationGain{(\Query, \EvidenceBelief)}{\RfunBelief}{\Dataset}
    = \argmax_{\Query\in\CandStates}  \Var[\EvidenceBelief | \Dataset, \Query].
\end{align*}
Hence, for a GP reward model with linear reward observations, using $u(\Query, \Dataset) = \ConditionalInformationGain{(\Query, \EvidenceBelief)}{\RfunBelief}{\Dataset}$ is equivalent to using $u(s, \Dataset) = \Var[\EvidenceBelief | \Dataset, \Query]$, which is what we do in practice.

\subsubsection{Expected improvement}

We define the \emph{expected improvement} (EI) acquisition function as:
\begin{align*}
    u(s, \Dataset) &= L \cdot \Phi (M) + \Var[\EvidenceBelief | \Dataset, \Query] \cdot \rho(M), \\
    M &= \frac{L}{\Var[\EvidenceBelief | \Dataset, \Query]}, \\
    L &= \bbE[\EvidenceBelief | \Dataset, \Query] - \Evidence_{\text{max}} - \xi,
\end{align*}
where $\rho$ and $\Phi$ are the probability density function, and the cumulative density function of the standard normal distribution, respectively. $\Evidence_{\text{max}}$ is the highest observation made so far. If $\Var[\EvidenceBelief | \Dataset, \Query] = 0$ we define $u(s, \Dataset) = 0$. $\xi$ is a hyperparameter, that we set to $0.001$.
EI quantifies how much higher a new observation is expected to be than the highest observation made so far. In our setting, EI is only applicable if observations are numerically, e.g., if the observations are rewards of individual states or trajectories. In particular, we cannot use EI if the queries are comparisons of states or trajectories.

\subsubsection{Expected policy divergence}

\begin{algorithm}[t]
\begin{algorithmic}
\State $\MeanRfun \gets$ mean estimate of the model $P(\RfunBelief | \Dataset)$
\State $\tilde{\p} \gets \mathtt{RL}(\MeanRfun)$
\State $\tilde{\Evidence} \gets \bbE[\EvidenceBelief | \Dataset, \Query] + \Var[\EvidenceBelief | \Dataset, \Query]$
\State $P(\RfunBelief | \Dataset \cup \{(\Query, \tilde{\Evidence})\} ) \propto P(\tilde{\Evidence} | \RfunBelief, \Dataset, \Query) P(\RfunBelief | \Dataset)$
\State $\MeanRfun^* \gets$ mean estimate of the model $P(\RfunBelief | \Dataset \cup \{(\Query, \tilde{\Evidence})\} )$
\State $\p^* \gets \mathtt{RL}(\bar{\Rfun}^*)$
\State $u(\Query, \Dataset) \gets d(\tilde{\p}, \p^*)$
\State \textbf{return} $u(\Query, \Dataset)$
\end{algorithmic}
\caption{The expected policy divergence (EPD) acquisition function, introduced by \citet{daniel2015active}, adapted to our setting. The algorithm computes EPD for a given query $\Query$.}\label{alg:epd}
\end{algorithm}

\citet{daniel2015active} introduce the expected policy divergence (EPD) acquisition function. EPD compares two policies: $\tilde{\p}$ is trained from a reward model conditioned on the current dataset $\Dataset$, and $\p^*$ estimates a policy trained from a reward model  conditioned on $\Dataset \cup \{(\Query, \Evidence)\}$. EPD aims to quantify the effect of making a query $\Query$ and observing response $\Evidence$ on the currently optimal policy. For each potential query it assumes an observation at an upper confidence bound  conditioned on the current model, and then selects observations that maximize some distance measure between policies $d(\tilde{\p}, \p^*)$. \Cref{alg:epd} shows how EPD is computed in our setting, which our implementation combines with \Cref{alg:baselines}. EPD requires solving an RL problem for each potential observation, which makes it computationally infeasible for bigger environments.

\citet{daniel2015active} introduce EPD using the KL divergence $\KL{\tilde{\p}}{\p^*}$ as distance measure $d(\tilde{\p}, \p^*)$.
However, in most of our experiments, the policies are deterministic, in which case the KL divergence is not well-defined. For tabular environments, we define $d$ to count the number of states in which the policies differ. For the \emph{Driver} environment we use an $\ell_2$-distance between the policy representations.

\subsubsection{Maximum regret}

\newcommand{\ExpRetUnder}[2]{\Ret_{#1}(#2)}

\citet{wilde2020active} introduce the \emph{Maximum Regret} (MR) acquisition function for selecting queries that compare two policies. They assume a set of candidate reward functions $\mathcal{R}_c = \{ \Rfun_1, \dots, \Rfun_n \}$ to be given and then consider a set of candidate policies $\CandPol = \{ \p_1, \dots, \p_n \}$, where each policy $\p_i$ is optimal for one of the reward functions $\Rfun_i$.

MR can be used to select comparison queries of the form $q=(\p_i, \p_j)$. In practice, we use MR for queries that compare trajectories sampled from $\p_i$ and $\p_j$.

MR aims to compare policies $\p_i$ and $\p_j$ that perform poorly when evaluated under each other's reward function $\Rfun_j$ and $\Rfun_i$ respectively. To formalize this in our notation, let us introduce the notation $\ExpRetUnder{\Rfun_i}{\p_j}$ to indicate the return of policy $\p_j$ evaluated using reward function $\Rfun_i$.  MR is defined as

\begin{align*}
u((\p_i, \p_j), \cD) = P(\Rfun_i | \cD) \cdot P(\Rfun_j | \cD) \cdot \rbr{R(\Rfun_i, \Rfun_j) + R(\Rfun_j, \Rfun_i)}
\end{align*}

where $R(\Rfun_i, \Rfun_j)$ is a measure of regret of a policy optimized for reward function $\Rfun_i$ when evaluated under reward function $\Rfun_j$. \citet{wilde2020active} use a regret measure based on a ratio of returns: $R(\Rfun_i, \Rfun_j) = 1 - \ExpRetUnder{\Rfun_j}{\p_i} / \ExpRet{\p_j}$. However, this measure is only meaningful if all rewards are positive, which is not the case in our experiments. Therefore, we instead use a regret measure based on differences of returns: $R(\Rfun_i, \Rfun_j) = \ExpRet{\p_j} - \ExpRetUnder{\Rfun_j}{\p_i}$.\footnote{This change was suggested by the authors of \citet{wilde2020active} in personal communication to deal with negative rewards.}

For computing the probabilities $P(\Rfun_i | \cD)$, \citet{wilde2020active} use a simple Bayesian model that assumes a uniform prior and a likelihood of making an observation of the form
\begin{align}
P(y=+1 | q=(\p_i, \p_j)) = \begin{cases}
p & \text{if }\ExpRet{\p_i} > \ExpRet{\p_j} \\
1-p & \text{else}
\end{cases} \label{eq:maxreg_model}
\end{align}
with $0.5 < p \leq 1$.
In the main paper we report results for MR using a GP reward model. We tested the simple Bayesian model in \cref{eq:maxreg_model} in preliminary experiments, and found it to result in comparable results to the GP model with observations simulated using a linear observation model.

In our experiments, MR performed worse than reported by \citet{wilde2020active}. This difference is likely explained by differences in the implementation of the reward model, the environment, or the acquisition function.
Unfortunately, \citet{wilde2020active} did not publicly release code for their implementation of MR. Therefore, we were unable to reproduce their exact setup and results, and could not investigate differences between our implementation and theirs in detail.

\subsection{Details on the environments}

This section provides more details on the environments we test IDRL with a GP model on. We start by introducing two additional environments: \emph{Chain} and \emph{Junction}, which are not presented in the main paper. Then, we discuss the \emph{Gridworld}, \emph{Driver}, and MuJoCo \emph{Corridor} experiments from the main paper. \Cref{app:additional_results} provides more detailed results for all environments.

\subsubsection{Chain}
\Cref{fig:chain_mdp} shows the \emph{Chain} environment. It has a discrete state space with $N$ states, and a discrete action space with 2 actions $a_r$ and $a_l$. In the first $M$ states of the chain both actions moves the agent right, whereas in the last $N-M$ states $a_l$ moves the agent left and $a_r$ moves the agent right. The dynamics are deterministic. The initial state distribution is uniform over the state space.

For the GP model of the reward, we choose a squared-exponential (SE) kernel
\[
k(s, s') = \sigma^2 \exp\rbr{- \frac{d(s, s')^2}{2 l^2}}
\]
with variance $\sigma=2$ and lengthscale $l=3$. The distance $d$ counts the number of states between $s$ and $s'$ on the chain.

The \emph{Chain} environment shows that, to select informative queries, it is important to consider in which states the agents actions change the transition probabilities. Queries about the first $M$ states are less informative, because in these states the agent cannot choose how to move.

\subsubsection{Junction}
\Cref{fig:junction_mdp} shows the \emph{Junction} environment. It has a discrete state space with $N+2M$ states, and a discrete action space with 2 actions $a_1$ and $a_2$. In the first $N$ states either action moves the agent right. From state $s_N$ action $a_1$ moves the agent to $s_{A_1}$ and action $a_2$ moves the agent to $s_{B_1}$. In either of the two paths the agent moves to one of the adjacent states with probability $0.5$, independent of the action it took. The reward of states $s_1, \dots, s_N$ is $0$, the reward of states $s_{B_1}, \dots, s_{B_M}$ is $0.8$, and the reward of state $s_{A_i}$ is
\[
\Rfun(s_{A_i}) = 1-\rbr{0.7 \cdot \frac{i}{M} - 1}^2
\]
This reward function ensures that the average reward in the upper chain is smaller than $0.8$ but the maximum reward is bigger than $0.8$. The initial state distribution is uniform over the state space.

For the GP model of the reward, we choose a SE kernel with variance $\sigma=2$ and lengthscale $l=3$. The distance $d$ measures the shortest path between $s$ and $s'$ on graph that defines the \emph{Junction} (disregarding the transition function).

The \emph{Junction} environment shows that it is not sufficient to select queries that are informative about the maximum of the reward function. Instead, it is important to consider the specifics of the environment to determine informative queries. In the \emph{Junction} environment, the agent has to find the path with the higher average reward instead of the one with the higher maximum reward.

\subsubsection{Gridworld}
The \emph{Gridworld} environment consists of a $10 \times 10$ grid in which $2$ objects of each of $10$ different types are placed, so $20$ objects in total. Each object type gives a reward uniformly sampled from $[-1, 1]$ when standing on it, while floor tiles give $0$ reward. Between each two cells, with probability $0.3$ there is a wall. The environment has a discrete states space with $100$ states and a discrete action space with $5$ actions: \emph{north}, \emph{east}, \emph{south}, \emph{west}, and \emph{stay}. The dynamics are deterministic. The initial position of the agent is randomly selected but fixed for one instance of the environment.

For the GP model of the reward we choose a kernel
\[
k(s, s') = \begin{cases}
1 &\parbox{15em}{if in $s$ and $s'$ the agent is standing on the same object type}\\
0 &\text{else}
\end{cases}
\]
so that the model learns a reward for each of the object types independently.

The \emph{Gridworld} environment shows that to select informative queries it is important to consider the reachable space in the environment. Queries of objects that are not reachable from the agent's initial position are not informative.

\subsubsection{Driver}
We implement the \emph{Driver} environment based on code provided by \citet{dorsa2017active} and \citet{biyik2019asking}. Here, we provide a brief description of the dynamics and features of the environment. For more details, refer to our implementation, or \citet{dorsa2017active}.

The \emph{Driver} environment uses point-mass dynamics with a continuous state and action space. The state $s = (x, y, \theta, v)$ consists of the agent's position $(x, y)$, its heading $\theta$, and its velocity $v$. The actions $a = (a_1, a_2)$ consist of a steering input and an acceleration. The environment dynamics are defined as
\begin{align*}
s_{t+1} &= (x_{t+1}, y_{t+1}, \theta_{t+1}, v_{t+1}) \\
&= (x_{t} + \Delta x, y_{t} + \Delta y, \theta_{t} + \Delta \theta, \mathrm{clip}(v_{t} + \Delta v, -1, 1))
\end{align*}\vspace{-2em}
\begin{align*}
(\Delta x, \Delta y, \Delta \theta, \Delta v) = (v\cos{\theta}, v\sin{\theta}, v a_1, a_2 - \alpha v)
\end{align*}
where $\alpha = 1$ is a friction parameter, and the velocity is clipped to $[-1, 1]$ at each timestep.

The environment contains a highway with three lanes. In addition to the agent, the environment contains a second car that changes from the right to the middle lane, moving on a predefined trajectory. The reward function is linear in a set of features
\[
f(s) = (f_1(s), f_2(s), f_3(s), f_4(s), 1)
\]
where $f_1(s) \propto \exp(d_1^2)$, $d_1$ is the distance to the closest lane center, $f_2(s) \propto (v-1)^2$, $f_3(s) \propto \sin(\theta)$, and $f_4(s) \propto \exp\rbr{-d_2^2 - c d_3^2}$ where $d_2$ and $d_3$ are the distance between the agent's car and the other car along the $x$ and $y$ directions respectively, and $c$ is a constant.
Note that these features correspond to the version of the environment that \citet{biyik2019asking} use and differ slightly from \citet{dorsa2017active}.
Reward functions for the environment are sampled from a Gaussian with zero mean and unit covariance. The first 4 features are normalized before the constant is appended.

The \emph{Driver} environment uses a fixed time horizon $T=50$, and policies are parameterized by $5$ actions that are each applied for $10$ time steps. For solving the environment we optimize over these policies using an L-BFGS-B solver as proposed by \citet{dorsa2017active}. We additionally use the set of candidate policies $\CandPol$ as a lookup table to reduce the variance of this solver. Whenever $\CandPol$ contains a policy that is better for a given reward function than the one returned by the solver, we choose this one instead. This was first proposed by \citet{wilde2020active}.

\subsubsection{MuJoCo Corridor}\label{app:mujoco_corridor}

Our MuJoCo \emph{Corridor} environments are based on code of the maze environments by \citet{duan2016benchmarking}. Our ``maze'' is a corridor of $13$ cells. The robot starts in the leftmost cell and one of the cells is a fixed goal cell. The true reward function, that is not directly available to the agent, rewards the agent proportional to its velocity in positive x-direction if the agent is before the goal, and rewards the agent proportionally to its velocity in negative x-direction if the agent is past the goal. This provides a reward function that is harder to learn than just moving in one direction. We encode this reward function as a linear function of a set of features
\[
f(s) = (v_x I_1, v_y I_1, I_1, \dots, v_x I_{13}, v_y I_{13}, I_{13})^T \in \R^{39}
\]
where $I_k$ are indicator features that are $1$ if the agent is in cell $k$ and $0$ otherwise, and $v_x$ and $v_y$ are the x- and y-velocity of the center of mass of the \emph{Swimmer}.

We use \emph{augmented random search} \citep{mania2018simple} with linear policies to solve the environment for a given reward function. The policy is linear in a seperate set of features than the reward function. The features for the policy are based on the standard features provided by the \emph{Swimmer} environment, extended by an indicator feature for the Swimmer being in each of the cells.

\begin{figure*}\centering
\begin{subfigure}{0.48\linewidth}
    \strut\vspace*{-\baselineskip}\newline\resizebox{0.99\linewidth}{!}{\definecolor{ETHa}{RGB}{31,64,122}

\begin{tikzpicture}
 \node (a) at (0,0)
   {
      \begin{tikzpicture}[->, >=stealth', shorten >=1pt, auto, node distance=2cm, semithick, state/.style={circle, draw,
      minimum size=1.2cm}]
         \node[state] (s1) {$s_1$};
         \node[state] (s2) [right of=s1] {$s_2$};
         \node[minimum size=1cm] (dots1) [right of=s2] {$\dots$};
         \node[state] (sM) [right of=dots1] {$s_M$};
         \node[minimum size=1cm] (dots2) [right of=sM] {$\dots$};
         \node[state] (sN-1) [right of=dots2] {$s_{N-1}$};
         \node[state] (sN) [right of=sN-1] {$s_N$};

         \path (s1) edge node {} (s2);
         \path (s2) edge node {} (dots1);
         \path (dots1) edge node {} (sM);
         \path (sM) edge[bend left] node {$a_r$} (dots2);
         \path (dots2) edge[bend left] node {$a_l$} (sM);
         \path (dots2) edge[bend left] node {$a_r$} (sN-1);
         \path (sN-1) edge[bend left] node {$a_l$} (dots2);
         \path (sN-1) edge[bend left] node {$a_r$} (sN);
         \path (sN) edge[bend left] node {$a_l$} (sN-1);
      \end{tikzpicture}
   };
\end{tikzpicture}}
    \caption{Chain}\label{fig:chain_mdp}
\end{subfigure}\hfill
\begin{subfigure}{0.48\linewidth}
    \strut\vspace*{-\baselineskip}\newline\resizebox{0.99\linewidth}{!}{\definecolor{ETHa}{RGB}{31,64,122}

\begin{tikzpicture}[->, >=stealth', shorten >=1pt, auto, node distance=2cm, semithick, state/.style={circle, draw,
minimum size=1.2cm}]
   \node[state] (s1) {$s_1$};
   \node[state] (s2) [right of=s1] {$s_2$};
   \node[minimum size=1cm] (dots1) [right of=s2] {$\dots$};
   \node[state] (sN) [right of=dots1] {$s_{N}$};

   \path (sN) ++(30:2.3) node[state] (sA1) {$s_{A_1}$};
   \node[state] (sA2) [right of=sA1] {$s_{A_2}$};
   \node[minimum size=1cm] (dotsA) [right of=sA2] {$\dots$};
   \node[state] (sAM) [right of=dotsA] {$s_{A_{M}}$};
   \node (empty1) [above of=sAM] {}; %

   \path (sN) ++(330:2.3) node[state] (sB1) {$s_{B_1}$};
   \node[state] (sB2) [right of=sB1] {$s_{B_2}$};
   \node[minimum size=1cm] (dotsB) [right of=sB2] {$\dots$};
   \node[state] (sBM) [right of=dotsB] {$s_{B_{M}}$};
   \node (empty2) [below of=sBM] {}; %

   \path (s1) edge node {} (s2);
   \path (s2) edge node {} (dots1);
   \path (dots1) edge node {} (sN);

   \path (sN) edge node {$a_1$} (sA1);
   \path[<->] (sA1) edge node {} (sA2);
   \path[<->] (sA2) edge node {} (dotsA);
   \path[<->] (dotsA) edge node {} (sAM);
   \path[<->] (sA1.south west) edge[transform canvas={yshift=-8mm}, thick, ETHa] node[label={[label distance=1mm, pos=0.6] above:random walk}] {} (sAM.south east);

   \path (sN) edge node {$a_2$} (sB1);
   \path[<->] (sB1) edge node {} (sB2);
   \path[<->] (sB2) edge node {} (dotsB);
   \path[<->] (dotsB) edge node {} (sBM);

   \draw[-] [
          thick,
          decoration={
              brace,
              mirror,
              amplitude=10pt,
              raise=0.5cm
          },
          decorate
      ] (s1.south west) -- (sN.south east) node [black,midway,yshift=-1.5cm] {$r = 0$};

   \draw[-] [
          thick,
          decoration={
              brace,
              mirror,
              amplitude=10pt,
              raise=0.5cm
          },
          decorate
      ] (sAM.north east) -- (sA1.north west) node [black,midway,yshift=1.6cm] {$\mathrm{avg}(r) < 0.8$, but $\max(r) > 0.8$};

   \draw[-] [
       thick,
       decoration={
           brace,
           mirror,
           amplitude=10pt,
           raise=0.5cm
       },
       decorate
   ] (sB1.south west) -- (sBM.south east) node [black,midway,yshift=-1.5cm] {$r = 0.8$};
\end{tikzpicture}}
    \caption{Junction}\label{fig:junction_mdp}
\end{subfigure}
\caption{Illustration of the \emph{Chain} and \emph{Junction} MDPs.
In the \emph{Chain} MDP (\subref{fig:chain_mdp}), the agent moves right with probability $1$ from state $s_1$ to $s_M$, and the agent can move deterministically left and right from state $s_M$ to state $s_N$ with $N > M$. The reward function is sampled from the GP prior with a square-exponential kernel. We choose $M=10$ and $N=20$.
In the \emph{Junction} MDP (\subref{fig:junction_mdp}), the agent moves right with probability $1$ from state $s_1$ to $s_{N}$. In $s_{N}$, the agent can take the upper or lower path, denoted with $A$ and $B$, respectively. Along both paths, the agent moves either left or right with probability $0.5$. In the lower path, the reward of all states is $0.8$. In the upper path, the highest reward is greater than $0.8$, but the average reward is less than $0.8$. We choose $N=15$ and $M=5$.
For both environments, the initial state distribution is uniform over the state space.
}
\end{figure*}
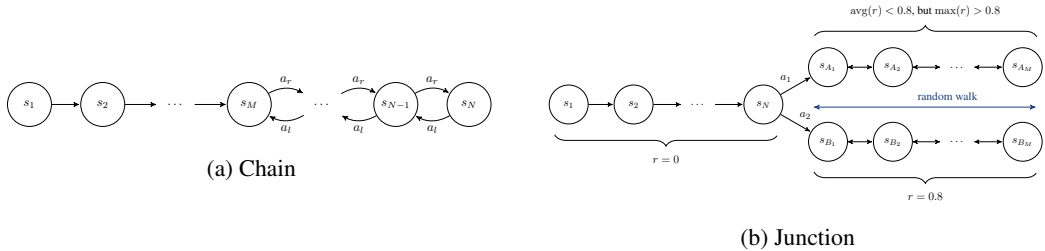

\begin{algorithm}[p]
\caption{\emph{\AlgoNameLong} (\AlgoNameShort) using DNNs and SAC to train policies. The algorithm maintains a currently optimal policy $\MeanOptPol$ as well as a set of candidate policies $\CandPol$. The reward function is represented with a neural network feature function parameterized by $\phi$, and a Bayesian linear model given by mean $\theta$ and covariance $H^{-1}$. The main training loop alternates between updating $\MeanOptPol$, querying new samples and updating the candidate policies. To select queries, the algorithm implements the IDRL objective, where \cref{line:idrl_deep_rl:policy_selection} corresponds to \cref{eq:dig_policy_selection} and \cref{line:idrl_deep_rl:query_selection} corresponds to \cref{eq:dig_state_selection}. \Cref{line:idrl_deep_rl:dnn_update} updates the feature representation and estimates the MAP of the posterior, and \cref{line:idrl_deep_rl:cov_update} approximates the posterior using a Laplace approximation. This pseudocode omits all hyperparameters that control how often new samples are queried, how many samples are queries, and how often and for how many steps all models are updated.
}
  \label{alg:idrl_deep_rl}
\begin{algorithmic}[1]
  \State Initialize policy $\MeanOptPol$
  \State Initialize candidate policies $\CandPol \gets \{ \p^c_1, \dots, \p^c_{n} \}$
  \State $\Dataset \gets$ initial dataset
  \State
  \State Initialize DNN reward model $\mu_{\RfunBelief}(o, a) = \theta^T f_\phi(o, a)$ with $\phi$ random weights, and $\theta = 0$
  \State Update DNN parameters, $\phi$ and $\theta$, via supervised learning on $\Dataset$
  \State $H \gets \left[ \nabla^2 \log~ p(\theta | \mathcal{D}, \phi) \right]\Bigr \rvert_{\theta}$
  \State
  \While{not done}
    \State Train policy $\MeanOptPol$ on reward $r_i(o, a) = \theta^T f_\phi(o, a)$ using SAC
    \State
    \If{update candidate policies}
        \State Sample $\vomega_1, \dots, \vomega_n$ from $\cN(\theta, H^{-1})$
        \For{$i \in \{1, \dots, n\}$}
            \State Train policy $\p^c_i$ on reward $r_i(o, a) = \vomega_i^T f_\phi(o, a)$ using SAC
            \State Estimate state visitation frequency $\StateVisit{\p^c_i}$ using Monte-Carlo rollouts
        \EndFor
    \EndIf
    \State
    \If{query samples}
        \State $\p_1, \p_2 \in \argmax_{\p, \p' \in \CandPol} (\StateVisit{\p} - \StateVisit{\p'})^T H^{-1} (\StateVisit{\p} - \StateVisit{\p'})$ \label{line:idrl_deep_rl:policy_selection}
        \State $\StateVisitDirection{\p_1}{\p_2} \gets \StateVisit{\p_1} - \StateVisit{\p_2}$
        \State Roll out policy $\MeanOptPol$ and collect a set of candidate queries $\CandStates$
        \For{$\Query_i \in \CandStates$}
            \State Compute Hessian with the expected response to $\Query_i$:
            \State \hspace*{1em} $H_{\Query_i} \gets \left[ \nabla^2 \log~ p(\theta | \mathcal{D} \cup \{(\Query_i, \EvidenceBelief_i)\}, \phi) \right]\Bigr \rvert_{\theta}$
            \State $u(\Query_i, \Dataset) \gets - \StateVisitDirection{\p_1}{\p_2}^T H_{\Query_i}^{-1} \StateVisitDirection{\p_1}{\p_2}$ \label{line:idrl_deep_rl:query_selection}
        \EndFor
        \State Sort queries by $u(\Query_i, \Dataset)$ and select $k$ queries $\{\Query_1, \dots, \Query_k\}$ with the largest values
        \State Make queries $\{\Query_1, \dots, \Query_k\}$ and observe $\{\Evidence_1, \dots, \Evidence_k\}$
        \State $\Dataset \gets \Dataset \cup \{((\Query_1, \Evidence_1), \dots, (\Query_k, \Evidence_k))\}$
        \State Update DNN parameters, $\phi$ and $\theta$, via supervised learning on $\Dataset$ \label{line:idrl_deep_rl:dnn_update}
        \State $H \gets \left[ \nabla^2 \log~ p(\theta | \mathcal{D}, \phi) \right]\Bigr \rvert_{\theta}$ \label{line:idrl_deep_rl:cov_update}
    \EndIf
  \EndWhile
  \State \textbf{return} $\MeanOptPol$
\end{algorithmic}
\end{algorithm}

\section{Implementation details of \AlgoNameShort with neural network reward models}
\label{app:implementation_details_deep_rl}

This section provides more details on our implementation of \AlgoNameShort using DNN reward models. \Cref{alg:idrl_deep_rl} shows pseudocode of the full algorithm.

\subsection{Training the reward model}

We represent the reward function as a function of observations $o$ and actions $a$, using a DNN model which we can write as $\mu_{\RfunBelief}(o, a) = \theta^T f_\phi(o, a)$. We conceptually separate the model into a feature representation $f_\phi(o, a)$ parameterized by weights $\phi$ and a linear function $\theta$. In practice $\theta$ is the last layer of the DNN and $\phi$ and $\theta$ are trained jointly.

We train the model on a dataset of pairwise comparisons of clips of trajectories. Let $\sigma = (s_1, a_1, \dots, s_L, a_L)$ denote a sequence of state-action pairs of lengths $L$, and let $r(\sigma) = \sum_{i=1}^L r(s_i, a_i)$ be the sum of rewards over the sequence. We make queries $\Query_i = (\sigma_{i1}, \sigma_{i2})$, $\Evidence_i \in \{-1, 1\}$ that compare two such sequences of state-action pairs. Similar to \citet{christiano2017deep}, we choose the Bradley-Terry observations model for the comparisons:
\[
p(\sigma_1 > \sigma_2 | \theta, \phi) = \frac{\exp(\hat{r}(\sigma_1))}{\exp(\hat{r}(\sigma_1)) + \exp(\hat{r}(\sigma_2))} = \frac{1}{1 + \exp(-(\hat{r}(\sigma_1)-\hat{r}(\sigma_2)))}
\]
which is equivalent to Logistic regression on $\hat{r}(\sigma_1)-\hat{r}(\sigma_2)$. We use gradient descent to minimize the negative log-likelihood of the data under this observation model
\[
\hat{\phi}, \hat{\theta} \in \argmin \cL(\theta, \phi)
\]
\[
\cL(\theta, \phi) = -\mathrm{log}~ p(\mathcal{D} | \phi, \theta) + \lambda (||\theta||^2 + ||\phi||^2)
\]
where we use $\ell_2$-regularization which corresponds to a Gaussian prior on the weights. Because the Bradley-Terry model is invariant to shifting the reward function, we use DNN layers without biases. Additionally, we normalize the output of the model when using it to train policies.

To compute the \AlgoNameShort objective, we need a Bayesian posterior. We fix the features $f_{\hat{\phi}}$, and perform Bayesian regression to approximate the posterior $p(\theta | \mathcal{D}, \hat{\phi})$. To this end, we consider $\hat{\theta}$ to be the mode of this posterior, and compute a Laplace approximation:
\[
p(\theta | \mathcal{D}, \hat{\phi}) \approx \mathcal{N}(\hat{\theta}, H^{-1})
\]
\[
H = \left[ \nabla^2 \mathrm{log}~ p(\theta | \mathcal{D}, \hat{\phi}) \right]\Bigr \rvert_{\theta = \hat{\theta}}
\]
where $H$ is the Hessian of the log-likelihood at point $\hat{\theta}$. The Laplace approximation is a very basic technique for approximate inference; however, it is convenient in our case because it approximates the posterior as a Gaussian distribution. This means we can compute the entropy and information gain of this distribution similarly easy as for a GP model.

\subsection{Hyperparameter choices}

\paragraph{Neural network model.} We use the same network architecture as \citet{christiano2017deep}: a two-layer neural network with $64$ hidden units each and leaky-ReLU activation functions ($\alpha=0.01$). For training we use $\ell_2$-regularization with $\lambda=0.5$.

\paragraphsmall{Policy training.} We use the \texttt{stable-baselines3} implementation of SAC \citep{stable-baselines3}, with default hyperparameters. For training the policy, we append a feature to the observations that measure the remaining time within an episode, $f_t = (t_{\max} - t) / t_{\max}$, where $t$ is the current time step in an episode and $t_{\max}$ is the episode length. Adding this feature tends to speed up training significantly in the MuJoCo environments. We do not add this feature for learning the reward function. Policies are trained for $10^7$ timesteps in total.

\paragraphsmall{Sampling rate.} We provide $25\%$ of samples to the reward model before starting to train the policy, and during training provide samples at a sampling rate proportional to $1/T$. Concretely, if $N_s$ samples are provided in $N_b$ batches over the course of training, the $i$-th batch will contain $\frac{N_s}{H_{N_b}} \cdot \frac{1}{T}$ samples, where $H_n$ is the $n$-th harmonic number.

\paragraphsmall{Candidate policies.} We maintain a set of $3$ candidate policies, that are each updated $10^7$ timesteps, as the main policy. The candidate policies are updated in regular intervals, which are controled by a hyperparameter $N_p$. Over the course of training, the candidate policies will be updated $N_p$ times using $10^7 / N_p$ timesteps each time.

\paragraphsmall{Hyperparameter tuning.} We only tuned two hyperparameters explicitly: $N_b$, the number of batches of training samples the model gets during training, and $N_p$ the number of times the candidate policies are updated during training. We selected all other hyperparameters after preliminary experiments and to be as similar as possible to \citet{christiano2017deep}. We first tuned $N_b$ using a random acquisition function and values in $\{10, 100, 1000, 10000\}$, and chose $N_b = 1000$ which gave the best performance evaluated over $5$ random seeds. We choose the same $N_b$ for all acquisition functions. Then, we tuned $N_p$ for the \AlgoNameShort acquisition function and values in $\{10, 100, 200, 400, 600, 800, 1000\}$. We chose $N_p = 100$ which lead to best performance evaluated over $5$ random seeds. All hyperparameters were only tuned on the HalfCheetah environment.

\subsection{Comparison of our Deep RL setup to Christiano et al.}\label{app:comparison_to_christiano}

In this section we point out differences in our Deep RL setup compared to \citet{christiano2017deep}. Some of the modifications are necessary for applying \AlgoNameShort. Other differences result from us not being able to reproduce the exact environments and hyperparameters because \citeauthor{christiano2017deep} do not provide code of their experiments.

\paragraphsmall{Reward model.} \citeauthor{christiano2017deep} model the reward function with an ensemble of DNNs. We learn a feature representation using a single DNN, and combine this with a Bayesian linear model which makes computing the \AlgoNameShort objective more straightforward.

\paragraphsmall{Policy learning.} We use SAC while \citeauthor{christiano2017deep} use TRPO for learning the policy. We chose SAC because it is significantly more sample efficient in MuJoCo environments.

\paragraphsmall{Sampling rate.} \citeauthor{christiano2017deep} provide $25\%$ of total samples to the model intially, and provide the rest of the samples at an adaptive sampling rate which they choose to be ``roughly proportional to $2 \cdot 10^6 / (T+2*10^6)$'' \citep[App. A.1]{christiano2017deep}, where $T$ is the number of environment interactions so far. Unfortunately, they do not provide enough information to exactly reproduce their sampling schedule. Instead, we simplify the schedule to be proportional to $1/T$.

\paragraphsmall{Clip length.} \citeauthor{christiano2017deep} query comparisons between clips that ``last 1.5 seconds, which varies from 15 to 60 timesteps depending on the task''\citep[App. A.1]{christiano2017deep}. Unfortunately, they do not specify the framerate used for each task, so we can not reproduce the exact clip lengths. Instead, we simply choose a length of 40 timesteps for each environment which is roughly in the middle of the range they provide.

\paragraphsmall{Observations. }From their paper it is unclear whether \citet{christiano2017deep} include the agents position in the observation in locomotion environments such as the HalfCheetah. Note, that including the observation makes the reward learning task much easier because the reward function is linear in the change of the agent's x-position. Therefore, we do not include the position in the observation which we use to predict the reward function.

\paragraphsmall{Penalties for termination.} Most of the standard MuJoCo environments have termination conditions, that, e.g., terminate an episode when the robot falls over. Such termination can leak information about the reward, i.e., longer episodes are better. Therefore, \citeauthor{christiano2017deep} replace ``these termination conditions by a penalty which encourages the parameters to remain in the range'' \citep[App. A]{christiano2017deep}. Unfortunately, they do not specify the exact penalties they use. We also remove the termination condition, but replace it with a bonus for ``being alive'' which is implemented in the version 3 environments of OpenAI Gym.

\section{Additional experimental results}\label{app:additional_results}

\subsection{Experiments in small environments}

In \Cref{fig:additional_results_toy}, we provide more detailed results of our experiments in small environments. This includes experiments in the \emph{Gridworld} environment, but also the \emph{Chain} and \emph{Junction} environments that were not presented in the main paper. The results confirm the conclusion we presented in the main paper: by focusing on queries that are informative about which policy is optimal, \AlgoNameShort is able to learn better policies with fewer queries than the baselines.

\begin{figure}[p]
{\centering
    \begin{adjustbox}{max width=\textwidth}
    \begin{tabular}{>{\centering\arraybackslash}m{2cm}|>{\centering\arraybackslash}m{7.5cm}>{\centering\arraybackslash}m{7.5cm}}
        Environment & \multicolumn{2}{c}{Query Type} \\
        & Reward of States & Comparison of States \\
        \midrule
        Chain &
        \includegraphics[width=0.49\linewidth]{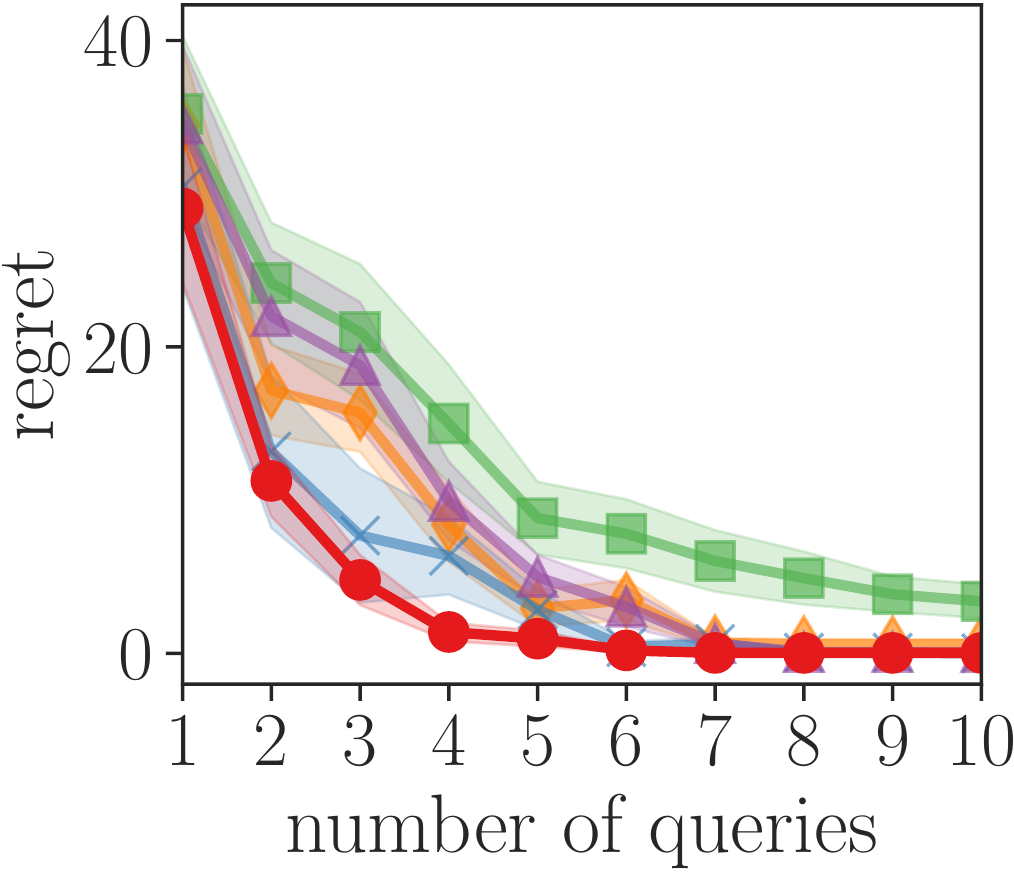}
        \includegraphics[width=0.49\linewidth]{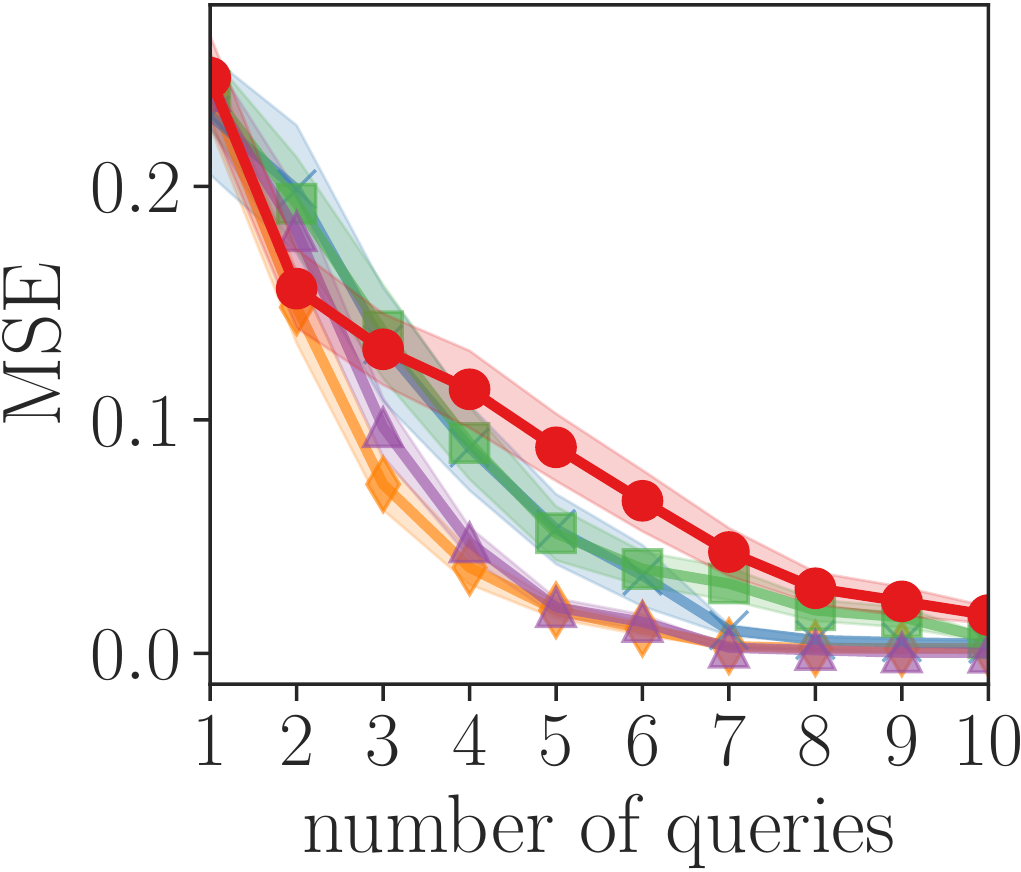} &
        \includegraphics[width=0.49\linewidth]{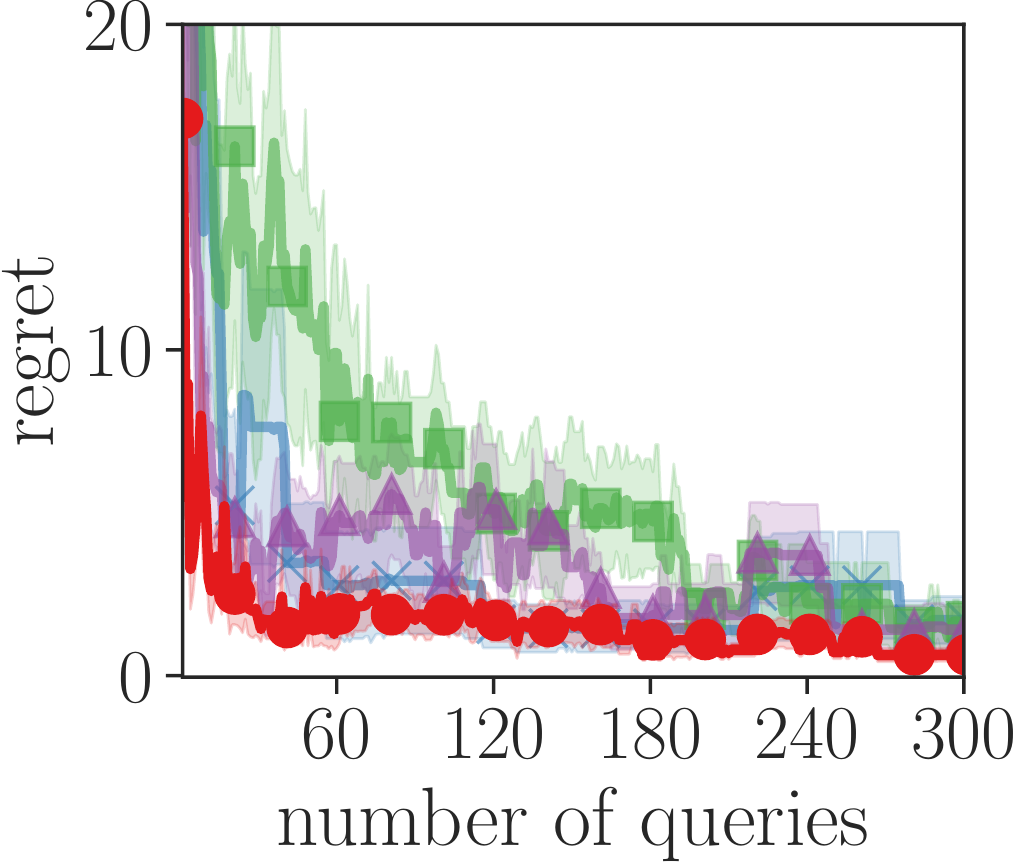}
        \includegraphics[width=0.49\linewidth]{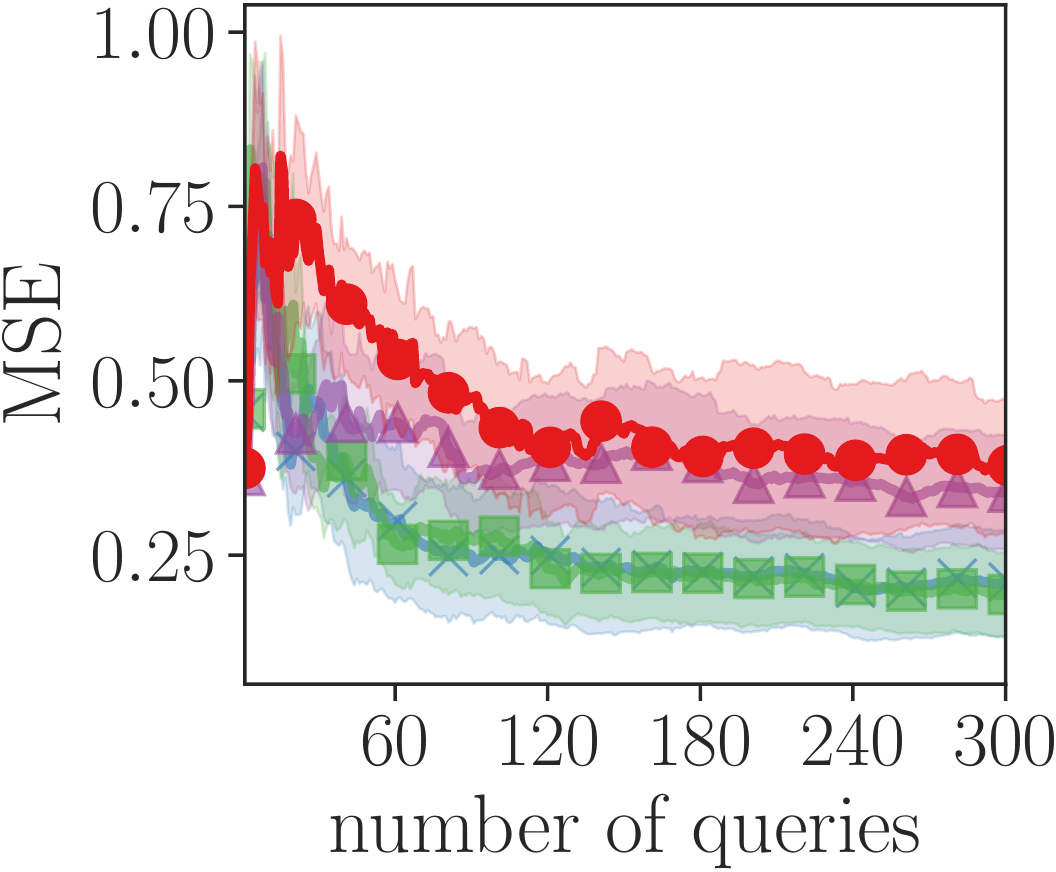} \\
        Junction &
        \includegraphics[width=0.49\linewidth]{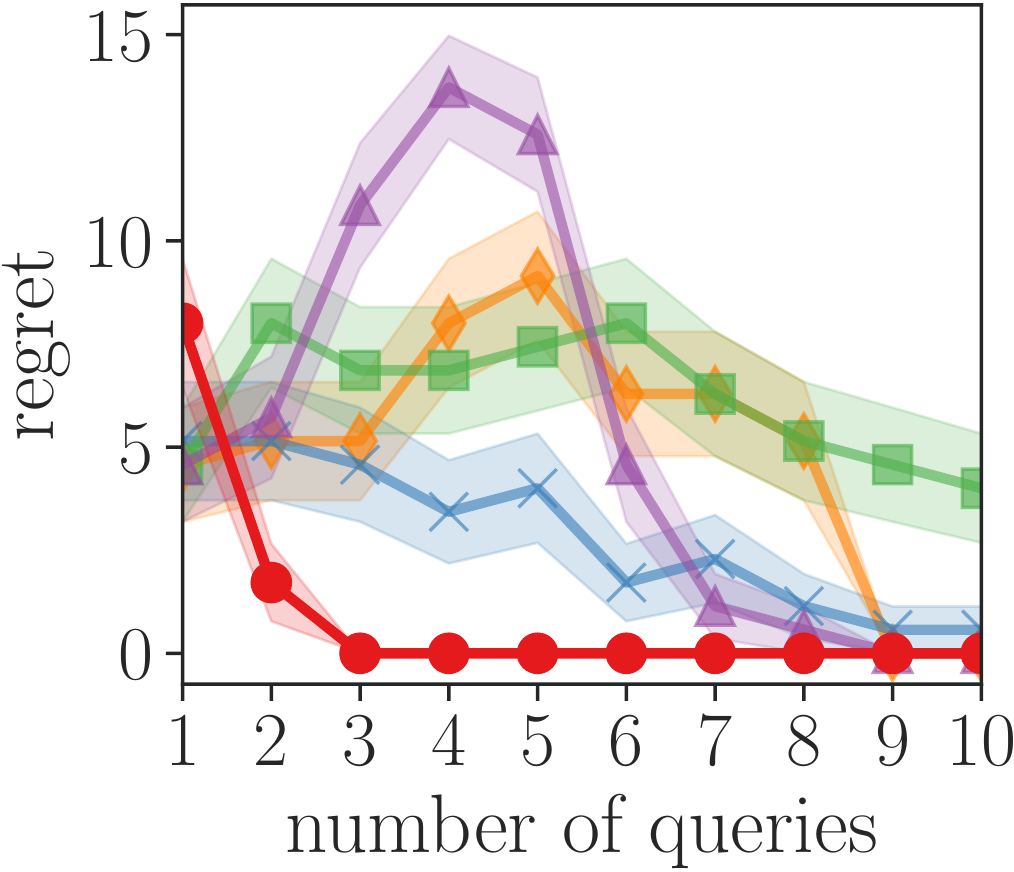}
        \includegraphics[width=0.49\linewidth]{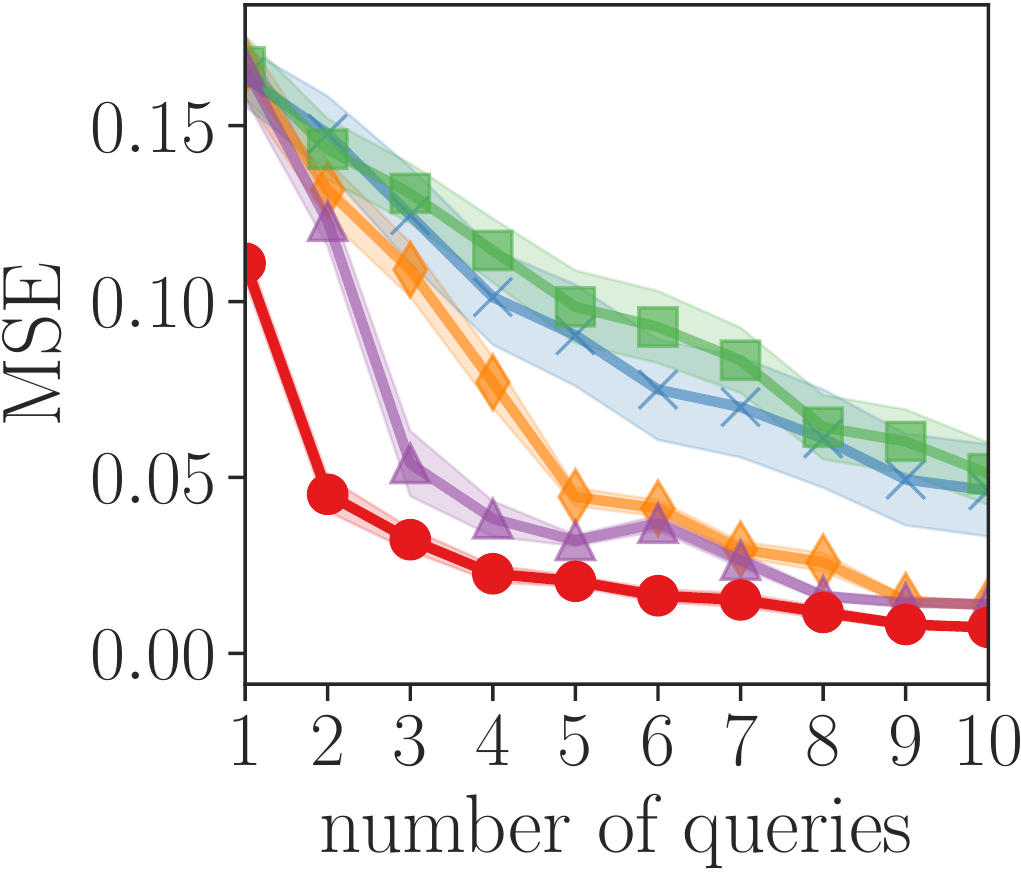} &
        \includegraphics[width=0.49\linewidth]{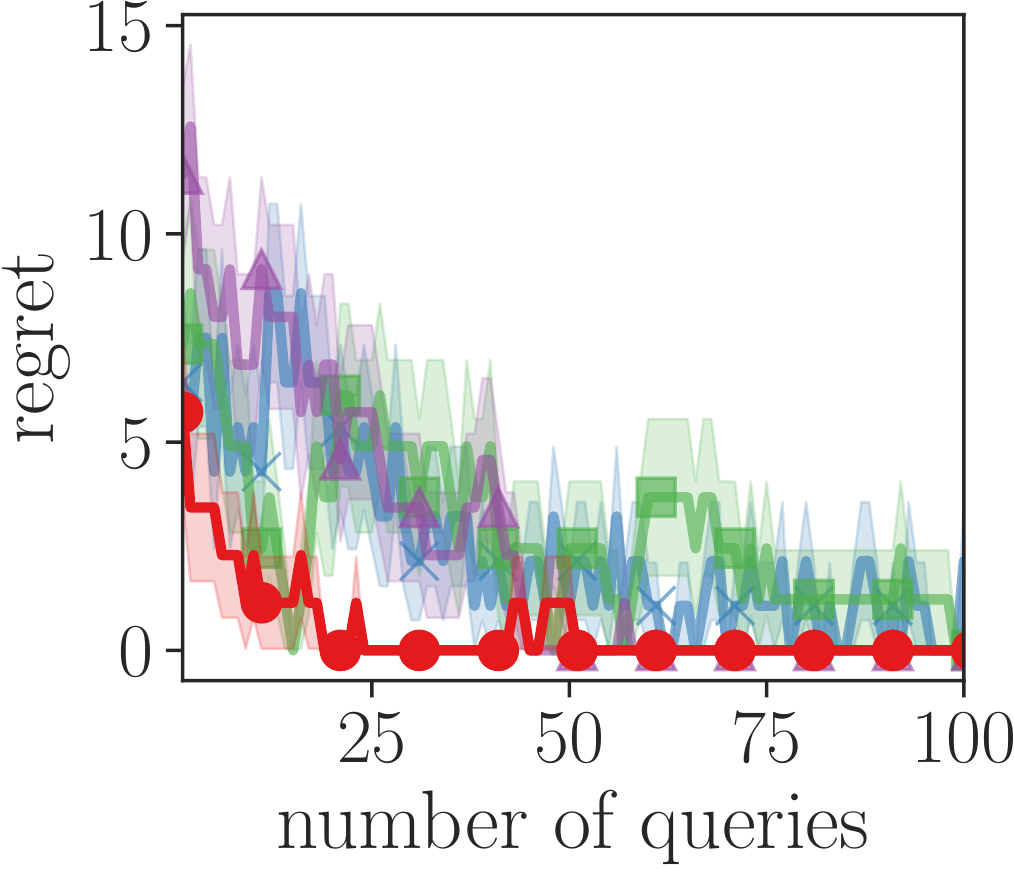}
        \includegraphics[width=0.49\linewidth]{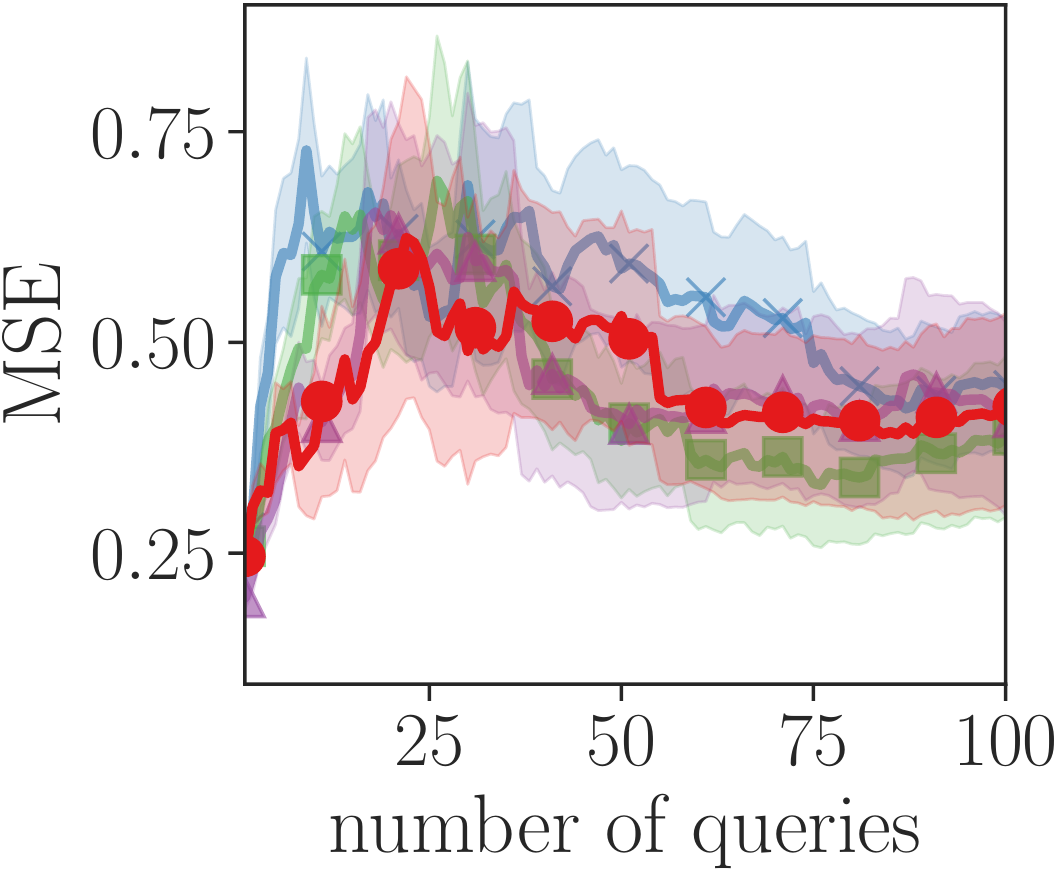} \\
        Gridworld &
        \includegraphics[width=0.49\linewidth]{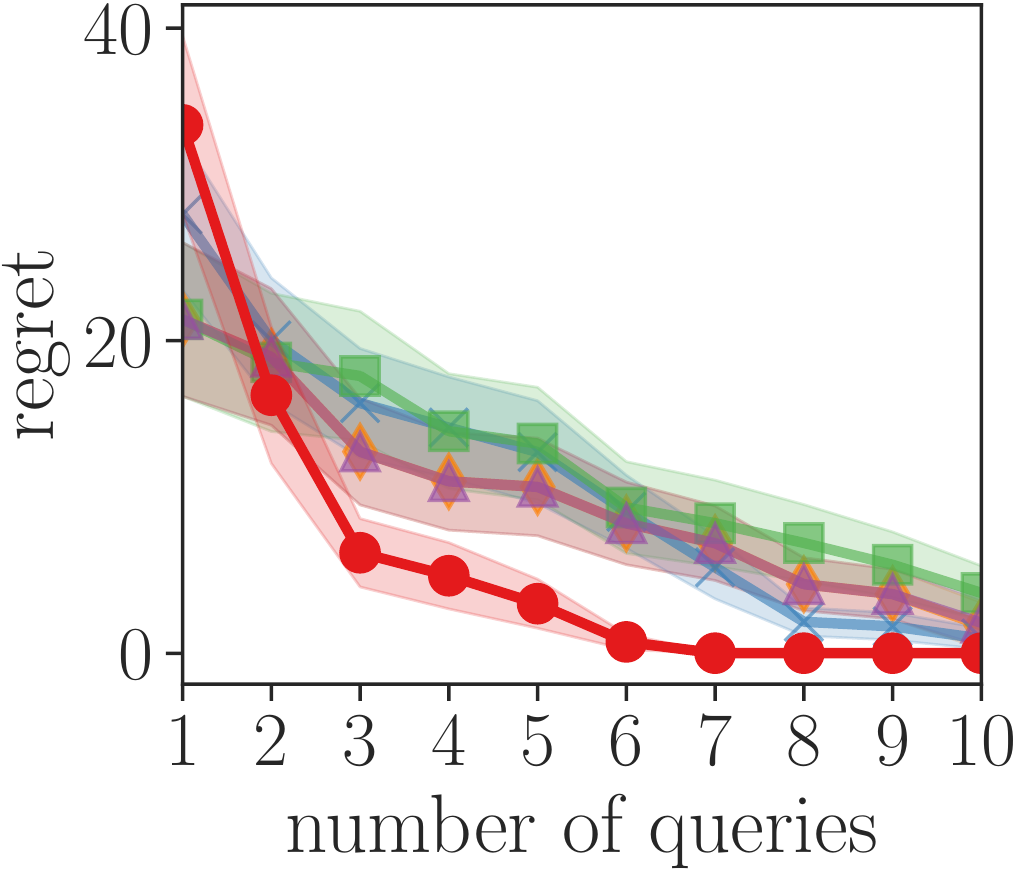}
        \includegraphics[width=0.49\linewidth]{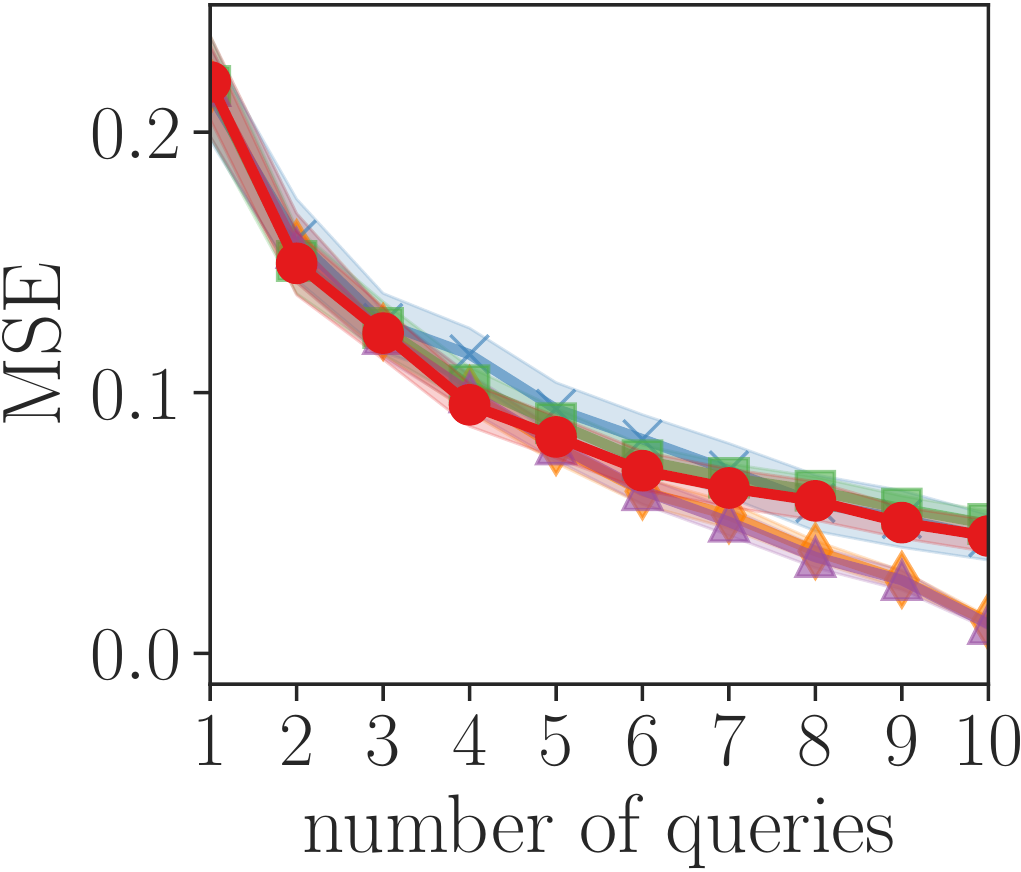} &
        \includegraphics[width=0.49\linewidth]{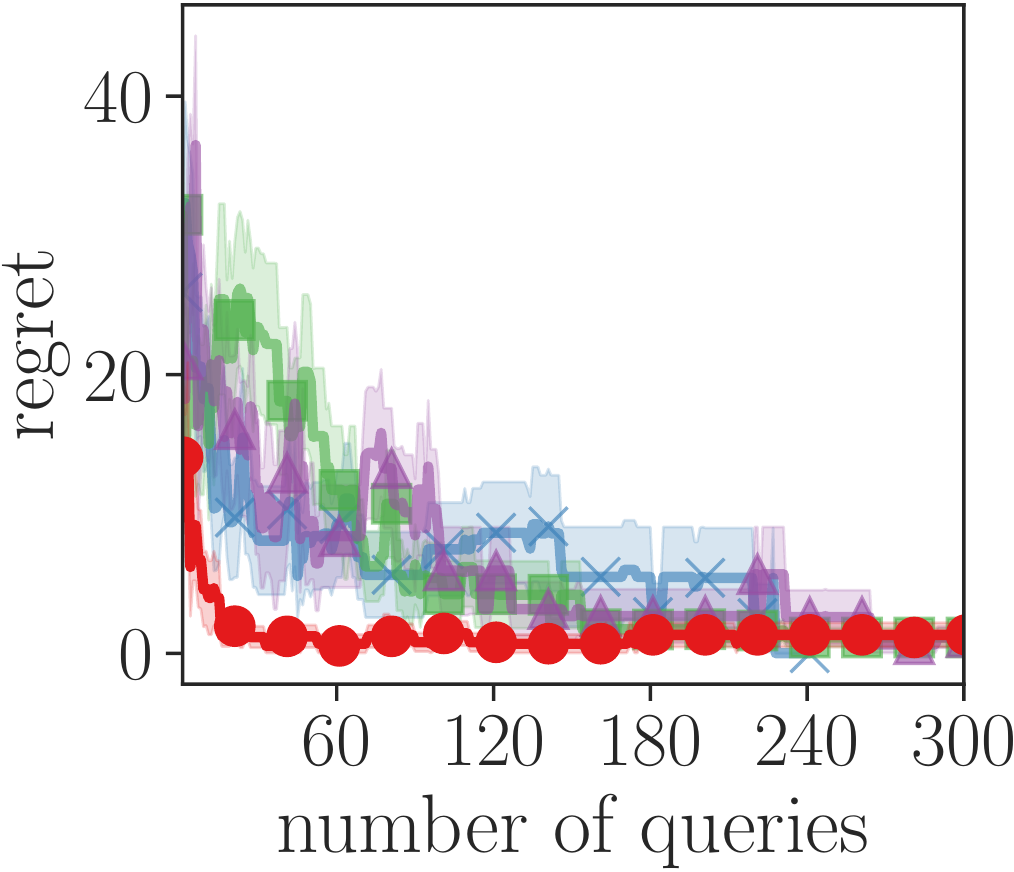}
        \includegraphics[width=0.49\linewidth]{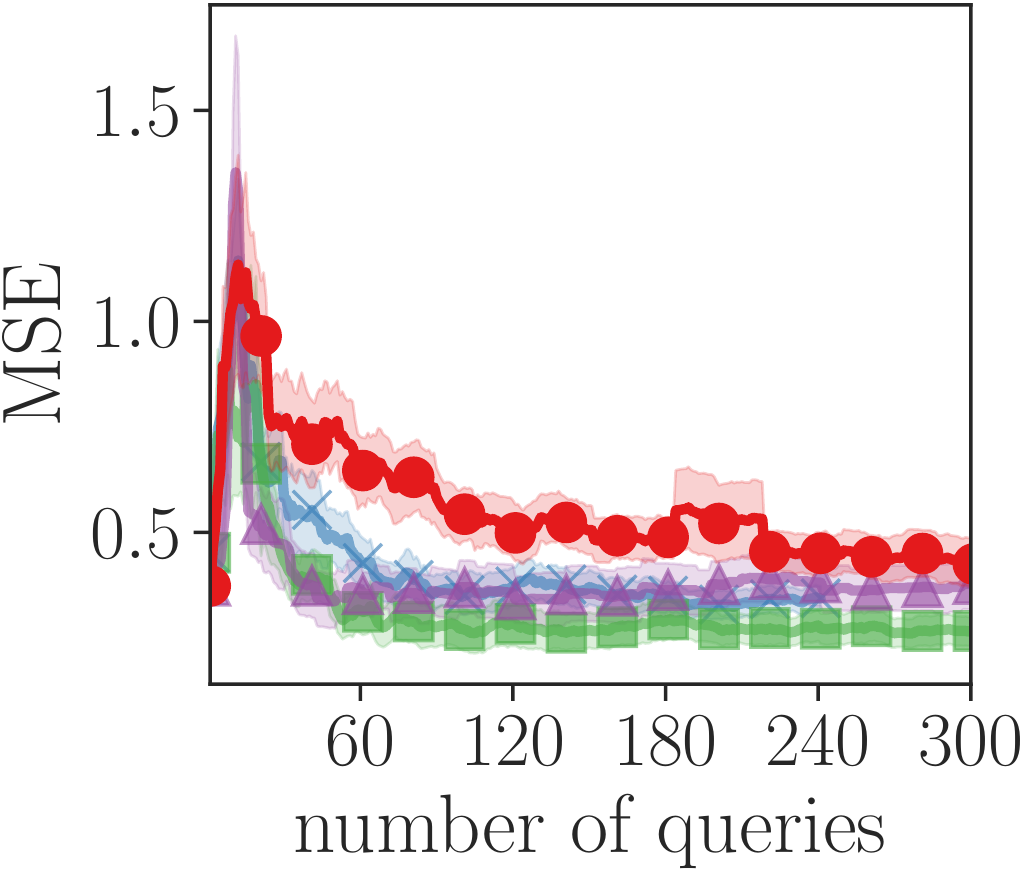}
    \end{tabular}
    \end{adjustbox}
}
{\small
    \hspace{0.1em}\textbf{Legend:} \vspace{1em} \\
    \begin{tabular}{llll}
    \legendUniform & Uniform Sampling &
    \legendEI & Expected Improvement (EI) \\
    \legendIGGP & Information Gain on the Reward (IGR) &
    \legendEPD & Expected Policy Divergence (EPD) \\
    \legendIDRL & \AlgoNameShort (ours) & &
    \end{tabular}
}
\caption{
    Learning curves for the \emph{Chain}, \emph{Junction} and \emph{Gridworld} environments for two query types: the reward of individual states and comparisons of states. For each setting, one plot shows the regret of a policy trained using the reward model and a second plot shows the mean squared error (MSE) of the reward model over the whole state space. The plots show mean and standard error across 30 random seeds. Across all environments, \AlgoNameShort learns a better policy with fewer queries. However, the MSE measured on the entire state space is usually worse, because \AlgoNameShort focuses on regions of the state space that are informative about the optimal policy.
}
\label{fig:additional_results_toy}
\end{figure}

\subsection{Ant Corridor}

\Cref{fig:additional_results_ant_corridor} shows results in the \emph{Ant-Corridor}, using the same experimental setup as the results in \Cref{fig:result_mujoco} in the main paper, which shows results in the \emph{Swimmer-Corridor}.

\begin{figure}[p]
\centering
\includegraphics[width=0.4\linewidth]{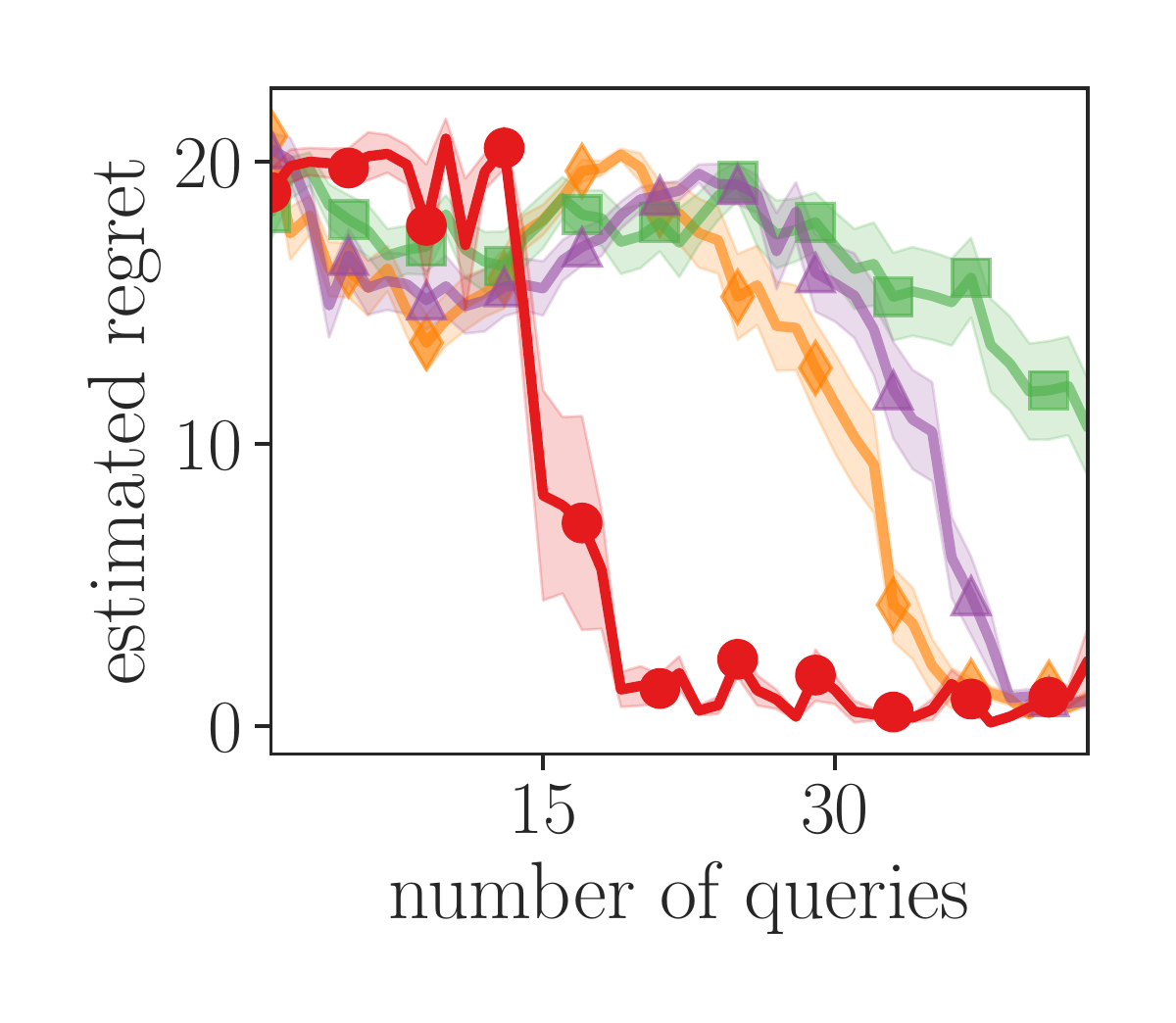}
\includegraphics[width=0.4\linewidth]{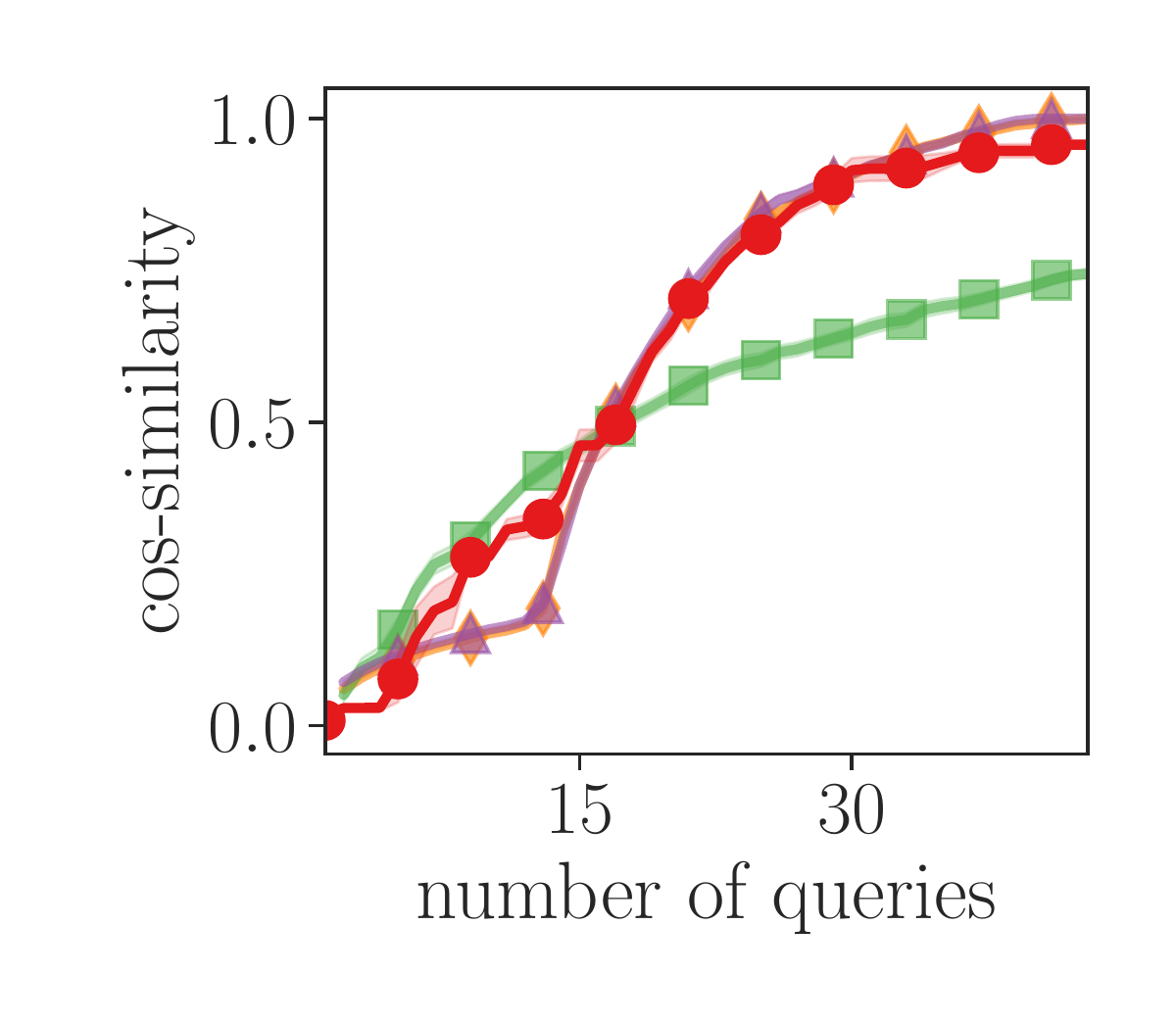}

\caption{
    Regret and cosine similarity in the \emph{Ant-Corridor} environment, comparing IDRL (\legendIDRL) to IGR (\legendIGGP), EI (\legendEI), and uniform sampling (\legendUniform). The experimental setup is exactly the same to the results shown in \Cref{fig:result_mujoco} in the main paper. The \emph{Ant-Corridor} environment is the same as the \emph{Swimmer-Corridor}, only with a different robot.
}
\label{fig:additional_results_ant_corridor}
\end{figure}

\subsection{Deep RL experiments in individual environments}

In \Cref{fig:additional_results_deep_rl} we provide learning curves for the individual MuJoCo environments that were aggregated to create \Cref{fig:results_deep_rl} in the main paper. To aggregate the results we normalized the return in each environment:
\[
\Ret_{\text{norm}}(\p) = 100 \cdot \frac{\Ret(\p) - \Ret(\p_{\text{rand}})}{\Ret(\p^*) - \Ret(\p_{\text{rand}})},
\]
where $\p_{\text{rand}}$ is a policy that samples action uniformly at random, and $\pi^*$ is an expert policy trained using SAC on the true reward. This results in a score that is $0$ for a policy that performs as well as a random policy, and $100$ for a policy that matches an expert performance. In \Cref{fig:results_deep_rl} this score is averaged over all environments.

\begin{figure}[p]
{\centering
    \begin{adjustbox}{max width=\textwidth}
    \begin{tabular}{>{\centering\arraybackslash}m{3.75cm}>{\centering\arraybackslash}m{3.75cm}>{\centering\arraybackslash}m{3.75cm}>{\centering\arraybackslash}m{3.75cm}}
        {\hspace*{3em}\small Ant-v3} & {\hspace*{3em}\small HalfCheetah-v3} & {\hspace*{3em}\small Hopper-v3} & {\hspace*{3em}\small Walker2d-v3} \\
        \includegraphics[width=0.99\linewidth]{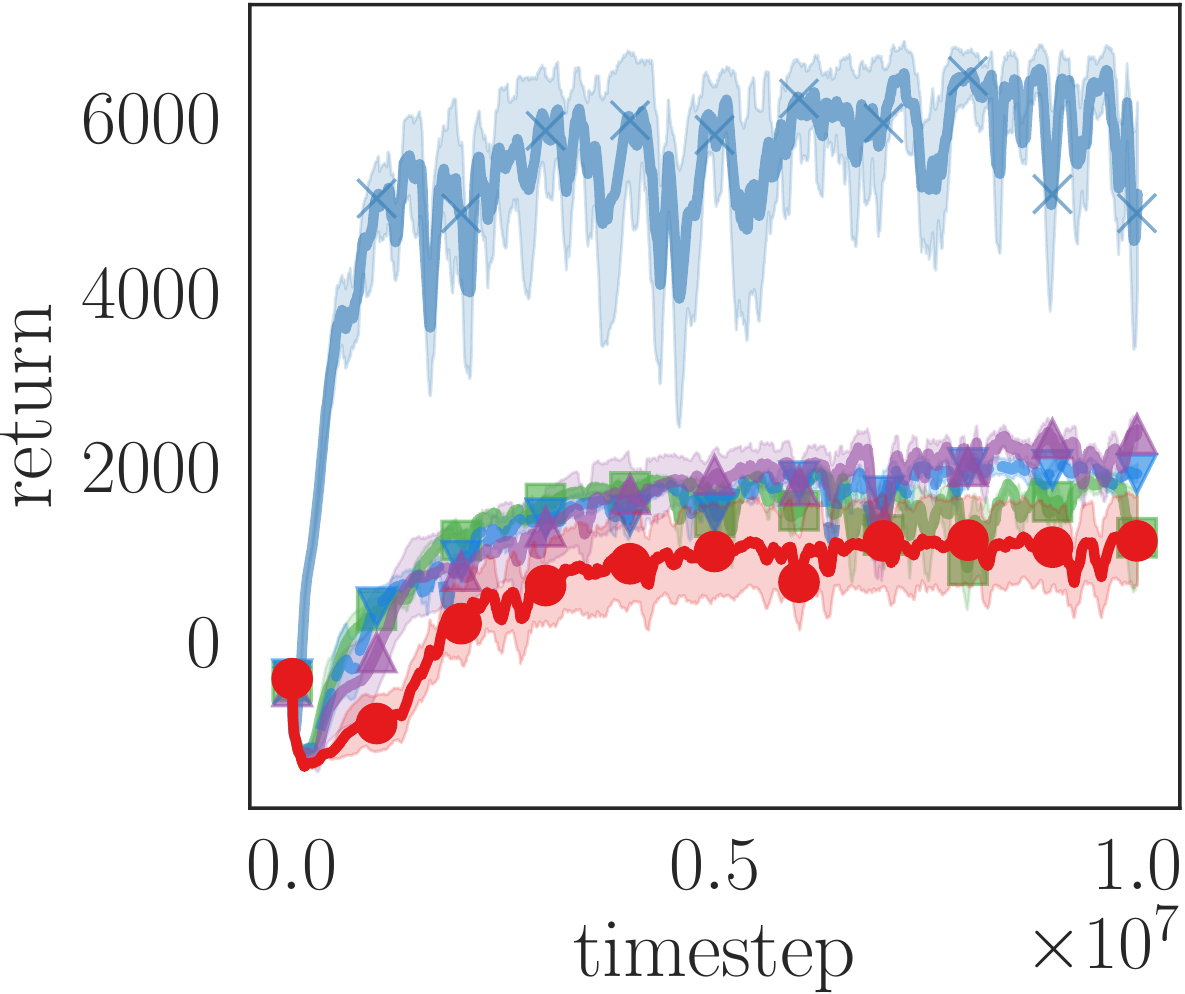} &
        \includegraphics[width=0.99\linewidth]{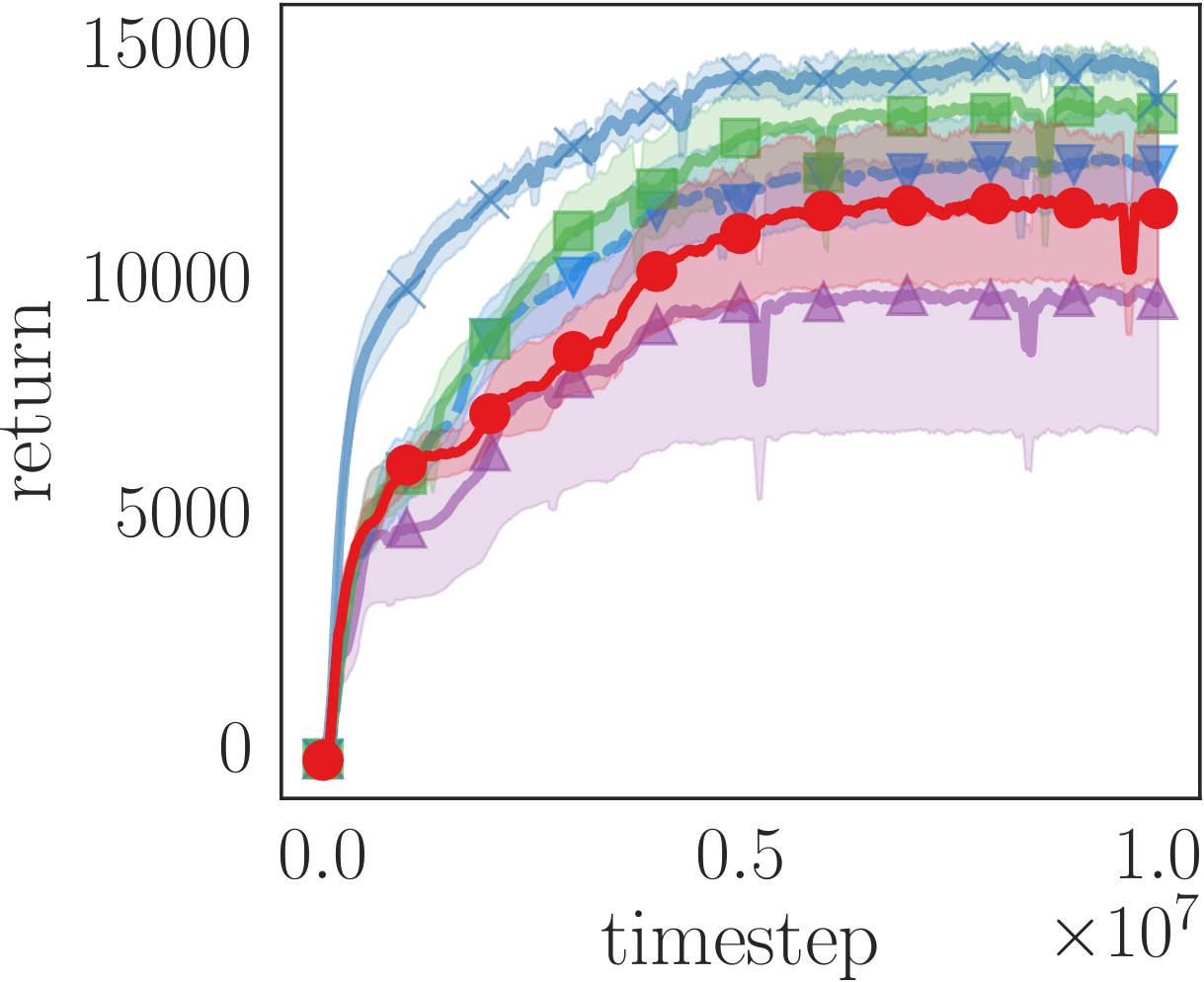} &
        \includegraphics[width=0.99\linewidth]{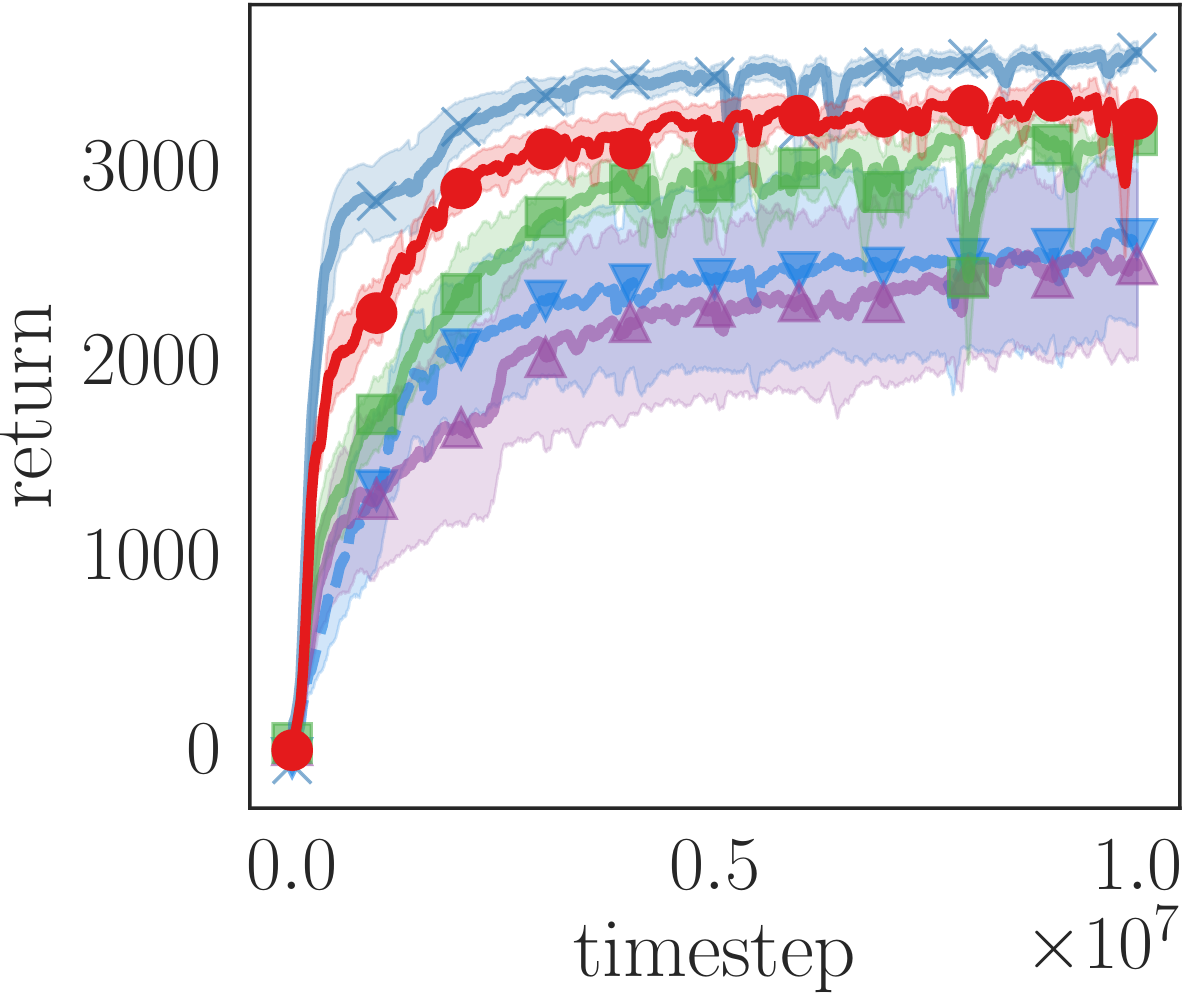} &
        \includegraphics[width=0.99\linewidth]{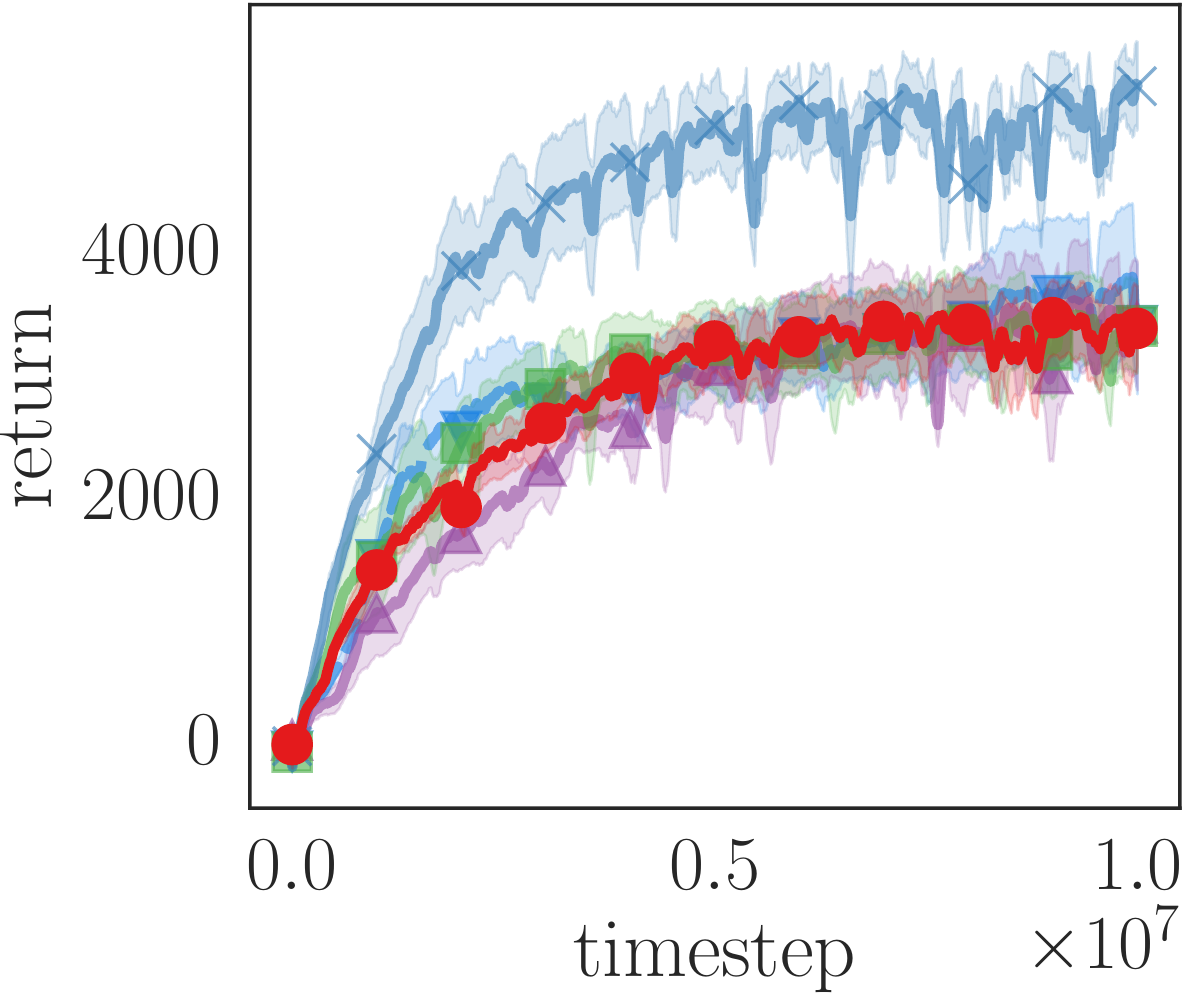} \\
        {\hspace*{3em}\small Swimmer-v3} & {\hspace*{3em}\small Reacher-v2} & {\hspace*{0em}\small InvertedDoublePendulum-v2} & {\hspace*{2em}\small InvertedPendulum-v2} \\
        \includegraphics[width=0.99\linewidth]{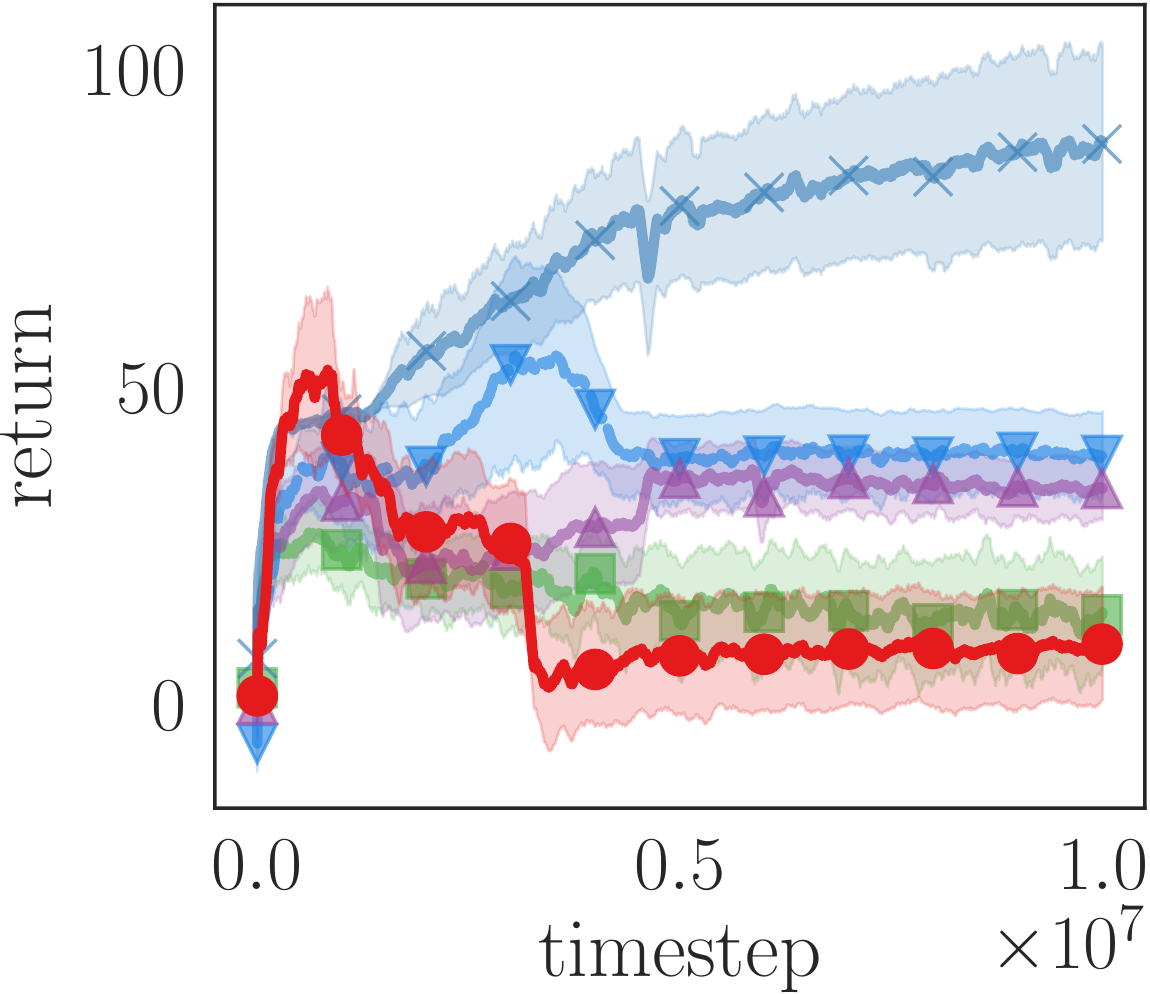} &
        \includegraphics[width=0.99\linewidth]{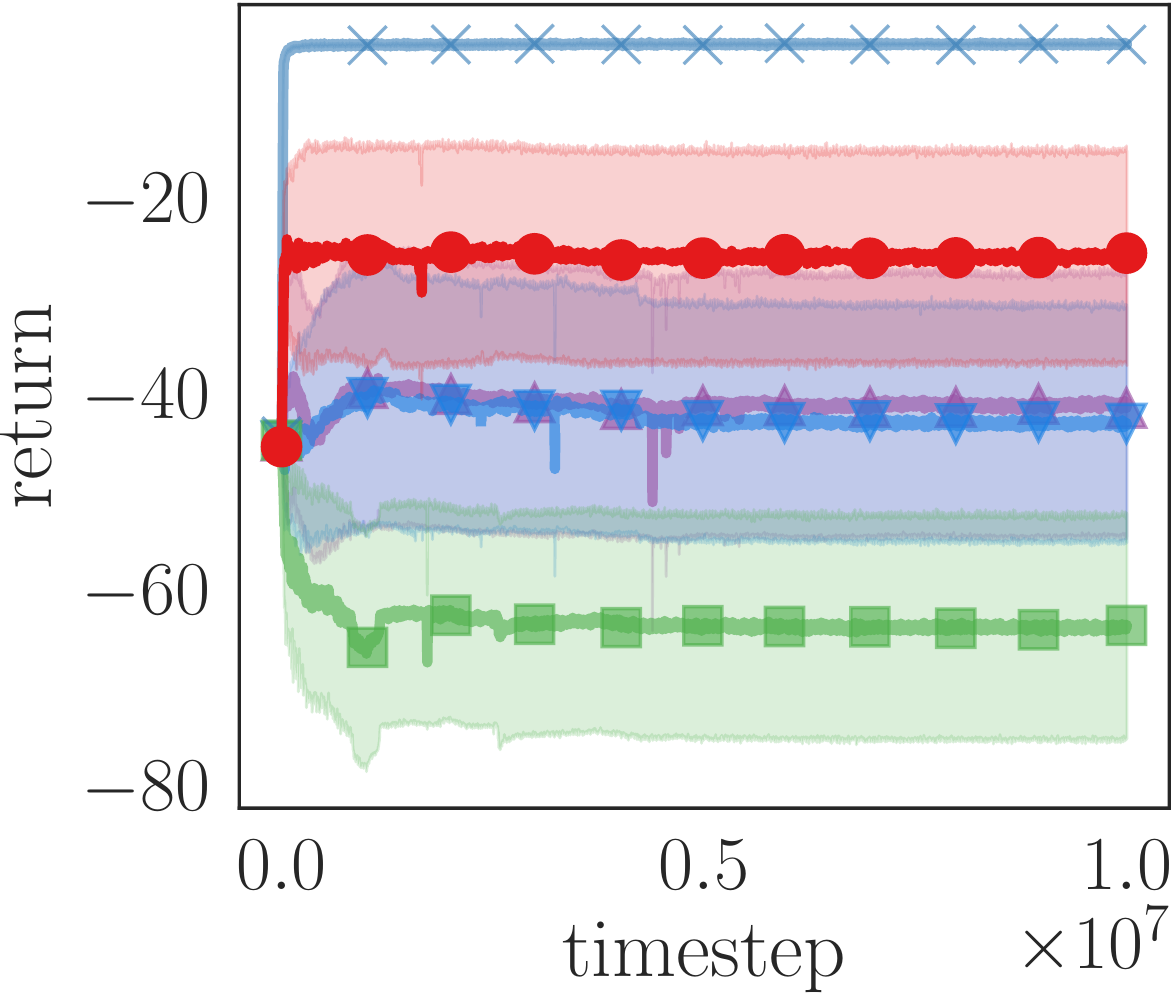} &
        \includegraphics[width=0.99\linewidth]{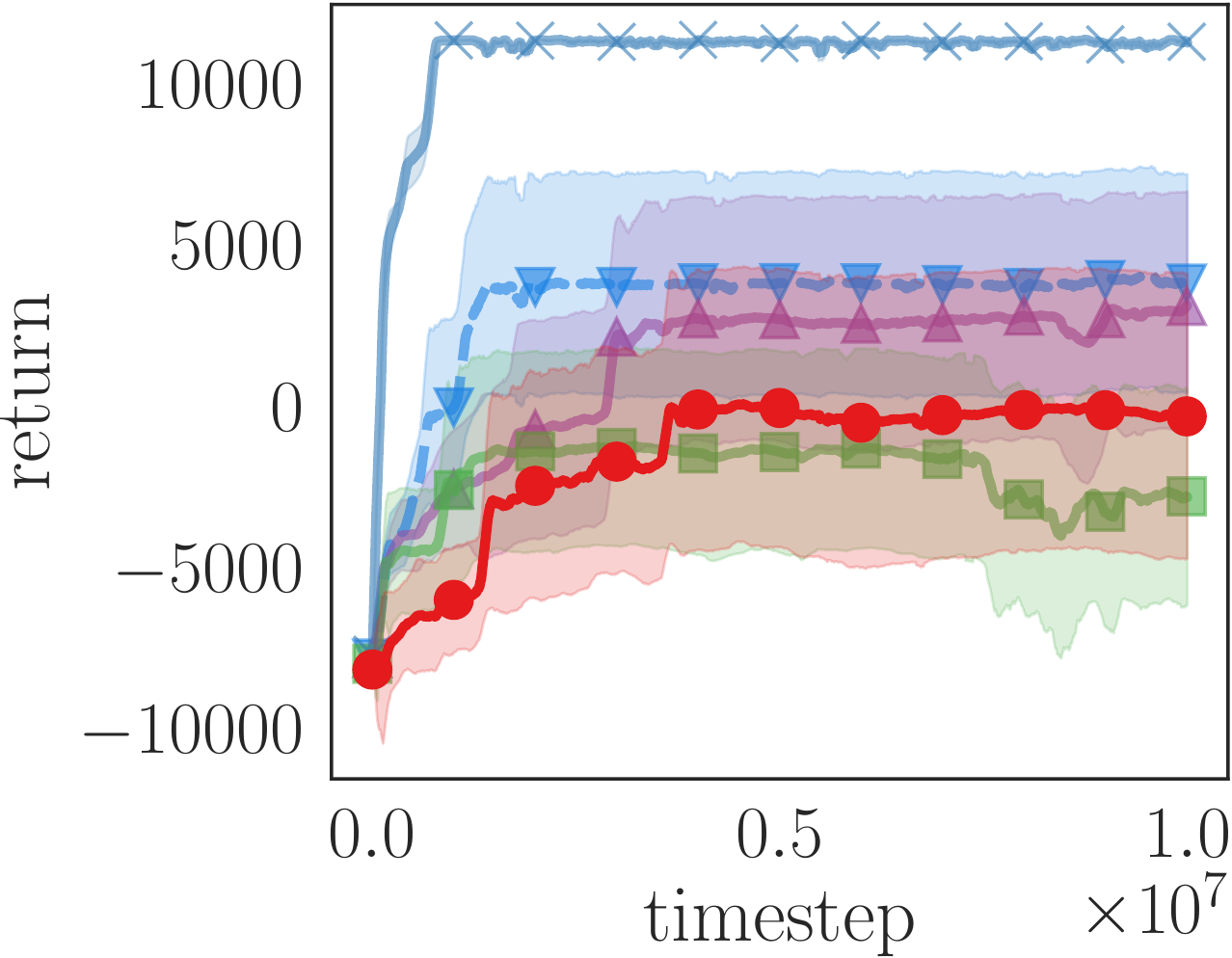} &
        \includegraphics[width=0.99\linewidth]{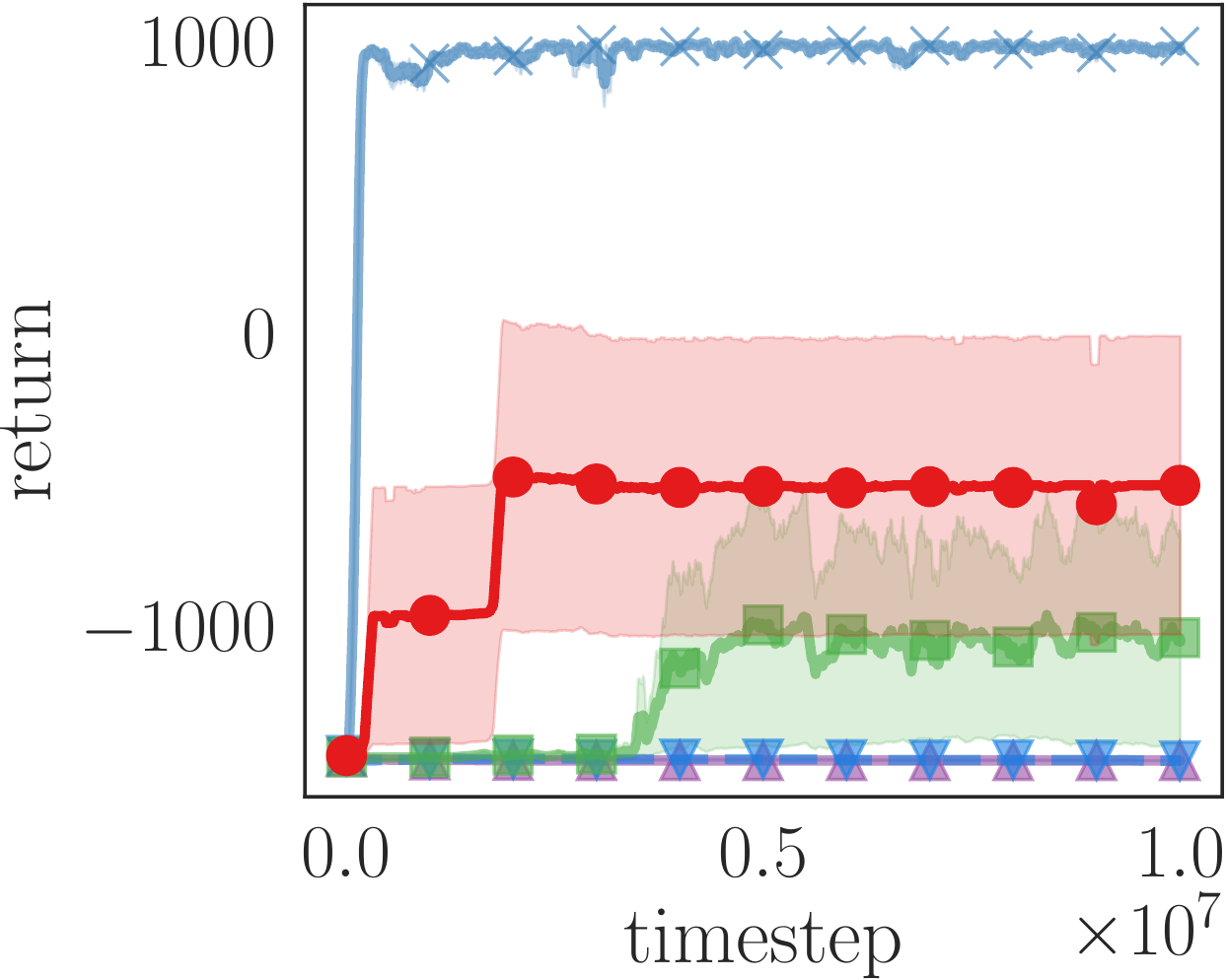}
    \end{tabular}
    \end{adjustbox}
}
{\small
    \hspace{0.1em}\textbf{Legend:} \vspace{1em} \\
    \begin{tabular}{llll}
    \legendUniform & Uniform Sampling &
    \legendIGGP & Information Gain on the Reward (IGR) \\
    \legendIDRL & \AlgoNameShort (ours) &
    \legendIDRLABLATION & \AlgoNameShort without candidate policy rollouts \\
    \legendEXPERT & SAC on true reward function & &
    \end{tabular}
}
\caption{
    Return of policies trained using a model trained from $1400$ comparison for each of the MuJoCo environments. \Cref{fig:results_deep_rl} in the main paper shows the normalized average over all environments.
}
\label{fig:additional_results_deep_rl}
\end{figure}

\end{document}